\documentclass{article}
\pdfoutput=1

\usepackage{PRIMEarxiv}

\usepackage[utf8]{inputenc} 
\usepackage[T1]{fontenc}    
\usepackage{hyperref}       
\usepackage{url}            
\usepackage{booktabs}       
\usepackage{amsfonts}       
\usepackage{nicefrac}       
\usepackage{microtype}      
\usepackage{lipsum}
\usepackage[pdftex]{graphicx}
\graphicspath{{img/}}     

\title{Learning Symmetric Rules with SATNet}

\author{
  Sangho Lim$^*$ \\
  School of Computing \\
  KAIST \\
  Daejeon, South Korea \\
  \texttt{lim.sang@kaist.ac.kr} \\
  \And
  Eun-Gyeol Oh$^*$ \\
  Graduate School of Information Security \\
  KAIST \\
  Daejeon, South Korea \\
  \texttt{eun-gyeol.oh@kaist.ac.kr} \\
  \And
  Hongseok Yang \\
  School of Computing and Kim Jaechul Graduate School of AI, KAIST\\
  Discrete Mathematics Group, Institute for Basic Science (IBS)\\
  Daejeon, South Korea \\
  \texttt{hongseok.yang@kaist.ac.kr} \\
}

\usepackage{natbib}
\bibliographystyle{plainnat}

\usepackage{mathtools} 
\usepackage{booktabs} 
\usepackage{tikz} 
\usepackage{wrapfig}

\usepackage{microtype}
\usepackage{graphicx}
\usepackage{subcaption}

\usepackage{hyperref}


\usepackage{algorithm}
\usepackage{algorithmic}

\usepackage{amsmath}
\usepackage{amssymb}
\usepackage{amsthm}
\usepackage{dsfont}

\usepackage{makecell}
\usepackage{paralist}


\allowdisplaybreaks


\theoremstyle{plain}
\newtheorem{theorem}{Theorem}[section]
\newtheorem{claim}{Claim}[section]

\newtheorem{lemma}[theorem]{Lemma}

\theoremstyle{definition}
\newtheorem{definition}[theorem]{Definition}
\newtheorem{assumption}[theorem]{Assumption}
\theoremstyle{remark}



\DeclareMathOperator*{\argmin}{argmin}

\DeclareMathOperator*{\group}{grp}
\DeclareMathOperator*{\ind}{\mathds{1}}
\DeclareMathOperator*{\image}{im}
\DeclareMathOperator*{\ovec}{vec}
\DeclareMathOperator*{\prj}{prj}
\DeclareMathOperator*{\rank}{rank}

\DeclareMathOperator*{\trace}{tr}

\DeclareMathOperator*{\wreath}{wr}

\newcommand*{\id}{\mathrm{id}}
\newcommand*{\kronsplit}{\textsc{ProdFind}}
\newcommand*{\symfind}{\textsc{SymFind}}
\newcommand*{\sumsplit}{\textsc{SumFind}}
\newcommand*{\idg}{\mathcal{I}}
\newcommand*{\cyclic}{\mathbb{Z}}
\newcommand*{\perm}{\mathcal{S}}

\newcommand*{\N}{\mathbb{N}}
\newcommand*{\R}{\mathbb{R}}

\newcommand*{\cA}{\mathcal{A}}
\newcommand*{\cB}{\mathcal{B}}

\newcommand*{\cE}{\mathcal{E}}
\newcommand*{\cF}{\mathcal{F}}
\newcommand*{\cI}{\mathcal{I}}

\newcommand*{\cN}{\mathcal{N}}
\newcommand*{\cO}{\mathcal{O}}

\newcommand*{\cR}{\mathcal{R}}
\newcommand*{\cS}{\mathcal{S}}
\newcommand*{\cV}{\mathcal{V}}
\newcommand*{\cX}{\mathcal{X}}
\newcommand*{\cY}{\mathcal{Y}}

\begin{document}
\maketitle
\def\thefootnote{*}\footnotetext{These authors contributed equally to this work.}\def\thefootnote{\arabic{footnote}}

\begin{abstract}
SATNet is a differentiable constraint solver with a custom backpropagation algorithm, which can be used as a layer in a deep-learning system. It is a promising proposal for bridging deep learning and logical reasoning. In fact, SATNet has been successfully applied to learn, among others, the rules of a complex logical puzzle, such as Sudoku, just from input and output pairs where inputs are given as images. In this paper, we show how to improve the learning of SATNet by exploiting symmetries in the target rules of a given but unknown logical puzzle or more generally a logical formula. We present SymSATNet, a variant of SATNet that translates the given symmetries of the target rules to a condition on the parameters of SATNet and requires that the parameters should have a particular parametric form that guarantees the condition. The requirement dramatically reduces the number of parameters to learn for the rules with enough symmetries, and makes the parameter learning of SymSATNet much easier than that of SATNet. We also describe a technique for automatically discovering symmetries of the target rules from examples. Our experiments with Sudoku and Rubik's cube show the substantial improvement of SymSATNet over the baseline SATNet.
\end{abstract}

\section{Introduction}
\label{sec:intro}

Bringing the ability of reasoning to the deep-learning systems has been the aim of a large amount of recent research efforts~\citep{YangYC17,EvansG18,CingilliogluR19,WangDWK19,Topan21}.
One notable outcome of these endeavours is SATNet~\citep{WangDWK19},
a differentiable constraint solver with an efficient custom backpropagation algorithm.
SATNet can be used as a component of a deep-learning system and make the system capable of learning and reasoning about sophisticated logical rules.
Its potential has been demonstrated successfully with the tasks of learning the rules of complex logical puzzles, such as Sudoku,
just from input-output examples where the inputs are given as images.

We show how to improve the rule (or constraint) learning of SATNet, when the target rules have permutation symmetries.
By having symmetries, we mean the solutions of the rules are closed under the permutations of those symmetries.
For example, in Sudoku, if a completed $9\times 9$ Sudoku board is a solution,
permuting the numbers $1$ to $9$ in the board, the first three rows, or the last three columns always gives rise to another solution.
Thus, these permutations are symmetries of Sudoku.

Our improvement is a variant of SATNet, called SymSATNet, which abbreviates symmetry-aware SATNet.
SymSATNet assumes that some symmetries of the target rules are given a priori although the rules themselves are unknown.
It then translates these symmetries into a condition on the parameter matrix $C \in \R^{n\times n}$ of SATNet (or our minor generalisation),
and requires that the parameters have a particular parametric form that guarantees the condition. Concretely,
the translated condition says that the matrix $C$ regarded as a linear map should be equivariant with respect to the group $G$ determined by the given symmetries,
and the requirement is that $C$ should be a linear combination of elements in a basis for the space of $G$-equivariant symmetric matrices.
The coefficients of this linear combination are the parameters of our SymSATNet,
and their number is often substantially smaller than that of the parameters of SATNet.\footnote{SATNet assumes that $C$ is of the form $S^TS$ for some $S \in \R^{m \times n}$ for $m < n$,
so that the number of parameters is $mn < n^2$. But often it is still substantially larger than the number of parameters of SymSATNet.}
For Sudoku, the former is $18$, while the latter is $729^2$ or $k\cdot729$ for some $k \in \N$ at best.
The reduced number of parameters implies that SymSATNet has to tackle an easier learning problem than SATNet, and has a potential to learn faster and generalise better than SATNet.

Who provides symmetries for SymSATNet? The default answer is domain experts, but
a better alternative is possible.
We present an automatic algorithm to discover symmetries. Our algorithm is based on empirical observation that
symmetries emerge in the parameter matrix $C$ of SATNet in the early phase of training, as clusters of similar entries.
Our algorithm takes a snapshot of
$C$ at some training epoch of SATNet, and finds a group $G$ such that
(i) specific entries of $C$ share similar values by $G$-equivariance condition,
and (ii) the number of SymSATNet parameters under $G$ is minimised.

We empirically evaluate SymSATNet and our symmetry-discovering algorithm with Sudoku and a problem related to Rubik's cube.
For both problems, our algorithm discovered nontrivial symmetries,
and SymSATNet with manually specified or automatically found symmetries outperformed the baseline SATNet in learning the rules,
in terms of both efficiency and generalisation.

\paragraph{Related work}
There have been multiple studies on discovering symmetries present in conjunctive normal form (CNF) formulas in order to reduce the search space of satisfiability (SAT) solvers.
Crawford~\citep{Crawford92} proved that the symmetry-detection problem is equivalent to the graph isomorphism problem, and showed how to reduce the complexity of pigeonhole problems using symmetries.
Crawford et al.~\citep{Crawford96} proposed symmetry-breaking predicates (SBPs), and Aloul et al.~\citep{Aloul03, Aloul06} developed SBPs with more efficient constructions.
For automatic symmetry detection, Darga et al.~\citep{Darga04} presented a method that improves the partition refinement procedure introduced by McKay~\citep{Mckay76, Mckay81},
and Darga et al.~\citep{Darga08} proposed an algorithm that achieve efficiency by exploiting the sparsity of symmetries.
In contrast to these global and static methods, Benhamou et al.~\citep{Benhamou10} and Devriendt et al.~\citep{Devriendt17} handled local symmetries that dynamically arise during search.
The use of symmetries also appears in NeuroSAT~\citep{Selsam19}, which learns how to solve SAT problems from examples.
NeuroSAT solves a given SAT formula by message passing over a graph constructed from the formula, and in so doing,
its learnable solver can exploit symmetries in the formulas.
All of these techniques use symmetries to help solve given formulas, whereas our approach uses symmetries to help learn such formulas.
Another difference is that those techniques find hard symmetries of given formulas, whereas our approach discovers soft or approximate symmetries in a given SATNet parameter matrix.
In the context of deep learning, Basu et al.~\citep{Basu21} and Dehmamy et al.~\cite{Dehmamy21} described algorithms that find and exploit symmetries via group decompositions and Lie algebra convolutions.
But these techniques are not designed to find symmetries in logical formulas.

Our work is related to the studies on learning logical rules from examples using gradients.
Yang et al.~\citep{YangYC17} proposed neural logic programming, an end-to-end differentiable system which learns first-order logical rules,
Evans and Grefenstette~\citep{EvansG18} proposed a differentiable inductive logic programming system which is robust to noise of training data,
and Cingillioglu and Russo~\citep{CingilliogluR19} introduced an RNN-based model to learn logical reasoning tasks end-to-end. Want et al.~\citep{Wang18, WangDWK19} presented SATNet using the mixing method, and Topan et al.~\cite{Topan21} further improved SATNet by solving the symbol grounding problem, a key challenge of SATNet. 
Our work extends these lines of work by proposing how to discover and exploit symmetries from examples when learning logical rules with SATNet.

\section{Background}
\label{sec:background}

We review SATNet, the formalisation of symmetries using groups, and equivariant 
maps. For a natural number $n$, let $[n] = \{1, \ldots, n\}$, and for a matrix $M$, let $M_{i,j}$ be the $(i, j)$-th entry of $M$.

\subsection{SATNet}
\label{sec:satnet}
A good starting point for learning about SATNet is to look at its origin, the mixing method~\citep{Wang18}, which is an efficient algorithm for solving semidefinite programming problems with diagonal constraints. Let $n,k \in \N$ and $C$ be a real-valued symmetric matrix in $\R^{n\times n}$. The mixing method aims at solving the following optimisation problem:
\begin{equation}
\label{eqn:mixing-method-objective}
    \begin{aligned}
        \argmin_{V \in \R^{k \times n}}\, \langle C, V^T V \rangle \quad \text{subject to}\ \Vert v_i \Vert = 1 \ \text{for}\ i \in [n]
    \end{aligned}
\end{equation}
where $v_i$ is the $i$-th column of the matrix $V$, and $\Vert v_i \Vert$ is $L_2$ norm of $v_i$.
The mixing method solves \eqref{eqn:mixing-method-objective} by coordinate descent, where each column $v_i$ of $V$ is repeatedly updated as follows:
$g_i \leftarrow \sum_{\substack{j \in [n], j\neq i}} C_{i,j} v_j$ and $v_i \leftarrow - \frac{g_i}{\Vert g_i \Vert}$.
This always finds a fixed point of the equations.
In fact, it is shown that almost surely this fixed point attains a global optimum of the optimisation problem.

An example of the above optimisation problem most relevant to us is a continuous relaxation of MAXSAT.
MAXSAT is a problem of finding truth assignments to $n$ boolean variables $b_1,\ldots, b_n$.
It assumes $m$ clauses of those variables, $F_1,\ldots,F_m$, where $F_\ell$ is the disjunction of some variables with or without negation:
$F_\ell = b_{i_1} \vee \ldots \vee b_{i_p} \vee \neg b_{i_{p +1}} \vee \ldots \neg b_{i_{p+q}}$.
Then, MAXSAT asks for a truth assignment on the variables that maximises the number of true clauses $F_i$ under the assignment.
The rules of many problems, including Sudoku, can be expressed as an instance of MAXSAT. 

To apply the mixing method to MAXSAT, we introduce relaxed vectors $v_1,\ldots,v_n \in \R^k$ that encode the boolean variables,
and construct the matrix $S \in \R^{m \times n}$ that encodes the $m$ clauses of MAXSAT:
the $(\ell,j)$-th entry of $S$ has $1$ if
$F_\ell$ contains $b_j$;
and $-1$ if $F_\ell$ includes $\neg b_j$; and $0$ if neither of these cases holds.
Then, the problem in \eqref{eqn:mixing-method-objective} is formed with $C = -S^T S$, and solved by the mixing method.

SATNet is a variant of the mixing method where some of the columns of $V$ are fixed and the optimisation is over the rest of the columns.\footnote{The original SATNet assumes that $C$ has the form $S^T S$ for some $m \times n$ matrix $S$.
We drop this assumption and adjust the forward and backward computations of SATNet accordingly.
The main steps of derivations of the formulas for the forward and backward computations are from the work on SATNet~\citep{WangDWK19}.}
Concretely, it assumes that the column indices in $[n]$ are split into two disjoint sets,
$\cI$ and $\cO$ (i.e., $\cI \cup \cO = [n]$ and $\cI \cap \cO = \emptyset$).
The inputs of SATNet are the columns $v_i$ of $V$ with $i \in \cI$, and the outputs are the rest of the columns (i.e., the $v_o$'s with $o \in \cO$).
The symmetric matrix $C$ is the parameter of SATNet.
Given the input vectors, SATNet repeatedly executes the coordinate descent updates on each output column, until it converges.

One important feature of SATNet is that it has a custom algorithm for backpropagation.
Let $V_\cI$ be the matrix of the input columns to SATNet, and $V_\cO$ be that of the output columns computed by SATNet on the input $V_\cI$ under the parameter $C$.
Assume that $l$ is a loss of the output $V_\cO$.
In this context, SATNet provides formulas and algorithms for computing the derivatives $\partial l/\partial V_\cI$ and $\partial l/\partial C$.

We recall the formulas for the derivatives. Let $o_1 < o_2 < \ldots < o_{|\cO|}$ be the sorting of the indices in $\cO$.
Assume that SATNet was run until convergence, so that the output columns in $V_\cO$ are the fixed point of the coordinate descent updates:
for all $o \in \cO$, $g_o = \sum_{\substack{j \in [n],  j\neq o}} C_{o,j} v_j$ and $v_o = - \frac{g_o}{\Vert g_o \Vert}$.
The formulas for 
$\partial l/\partial V_\cI$ and $\partial l/\partial C$ at $(V_\cI,V_\cO,C)$ are defined in terms of the next quantities:
\begin{align*}
C', D' \in \R^{|\cO| \times |\cO|}, \qquad
P \in \R^{|\cO|k \times |\cO|k}, \qquad
U \in \R^{|\cO|k \times 1}, \qquad
W \in \R^{|\cO|k \times n^2}.
\end{align*}
They have the following definitions: for $i,j \in [|\cO|]$, $p,q \in [k]$, and $r,s \in [n]$,
\begin{align*}
\begin{aligned}
(C')_{i,j} &=
\begin{cases}
0 & \text{if } i \,{=}\, j
\\
C_{o_i,o_j}  & \text{if } i\,{\neq}\, j,
\end{cases}
\\
(D')_{i,j} &= 
\begin{cases}
\Vert g_{o_i}\Vert & \text{if } i \,{=}\, j
\\
0 & \text{if } i\,{\neq}\, j,
\end{cases}
\\
P_{\substack{(i-1)k + p, \\ (j-1)k+q}} &= 
\begin{cases}
(I_k - v_{o_i}v_{o_i}^T)_{p,q} & \text{if } i = j
\\
0 & \text{if } i \neq j,
\end{cases}
\end{aligned}
\qquad
\begin{aligned}
& U = \left(P((D' + C') \otimes I_k)\right)^\dagger \left(\frac{\partial l}{\partial \ovec(V_\cO)}\right)^T,
\\
& W_{\substack{(i-1)k+p, \\ (r-1)n+s}} = 
\begin{cases}
0 & \text{if } r = o_i \text{ and } s = o_i
\\
V_{p,s}  & \text{if } r = o_i \text{ and } s \neq o_i
\\
V_{p,s} & \text{if } r \neq o_i \text{ and } s = o_i
\\
0 & \text{if } r \neq o_i \text{ and } s \neq o_i.
\end{cases}
\end{aligned}
\end{align*}
Here $\otimes$ is the Kronecker product, $\_^\dagger$ is Moore-Penrose inverse (also known as pseudo inverse),
and $\ovec(V_\cO)$ is the vector obtained by stacking the columns of $V_\cO$.
Let $C_{\cO,\cI} \in \R^{|\cO|\times |\cI|}$ be obtained by restricting $C$ to the indices $(o,i)$ with $o \in \cO$ and $i \in \cI$. Then,
\begin{align}
\label{eqn:SATNet-derivative}
\partial l/\partial \ovec(C) = - U^T W, \qquad
\partial l/\partial \ovec(V_\cI) = - U^T (C_{\cO,\cI} \otimes I_k).
\end{align}
SATNet computes the above derivative formulas efficiently by iterative algorithms.

\subsection{Symmetries and equivariant maps}
\label{subsec:symmetries}

By symmetries on a set $\cX$, we mean
a group $G$ that acts on $\cX$.
The acting here refers to
a function $\_ \cdot \_$ from $G \times \cX$ to $\cX$, called \emph{group action},
such that (i) $e \cdot x = x$ for the unit $e \in G$ and any $x \in \cX$, and (ii) $(g \circ g') \cdot x = g \cdot (g' \cdot x)$ for all $g, g' \in G$ and $x \in \cX$,
where $\_ \circ \_$ is the
group operator of $G$.

We use symmetries
of permutations on a finite set.
The set $\cX$ in our case is $\R^{k \times n}$,
the space of the matrix $V$ in \eqref{eqn:mixing-method-objective}, and $G$ is a subgroup of the group $\perm_n$ of all permutations on $[n]$.
The group action $g \cdot V$ is then defined by permuting the columns of $V$ by $g$: for all $i,j \in [n]$,
$(g \cdot V)_{i,j} = V_{i,g^{-1}(j)}$.
This group action can be expressed compactly with the $n\times n$ permutation matrix $P_{g^{-1}}$ where
$(P_{g^{-1}})_{i,j} = \ind_{\{i = g^{-1}(j)\}}$ and $g \cdot V = V P_{g^{-1}}$ for the indicator function $\ind$.
Throughout the paper, we often equate each element $g$ of $G$ with its permutation matrix $P_g$, and view $G$ itself as the group of permutation matrices $P_g$ for $g \in G$ with the standard matrix multiplication.

One important reason for considering symmetries is to study maps that preserve these symmetries, called equivariant maps. Let $G$ be a group that acts on sets $\cX$ and $\cY$. 
\begin{definition}
A function $f : \cX \to \cY$ is \emph{$G$-equivariant} or \emph{equivariant} if 
$f(g \cdot x) = g \cdot (f(x))$ for all $g \in G$ and $x \in \cX$.
It is \emph{$G$-invariant} or \emph{invariant} if $f(g \cdot x) = f(x)$ for all $g \in G$ and $x \in \cX$. 
\end{definition}

The forms of equivariant maps have been studied extensively in the work on equivariant neural networks and group representation theory~\citep{Cohen16, Maron19,Wang20}.
In particular, when
$f$ is linear,
various representation theorems for different $G$'s describe the matrix form of $f$. We use
permutation groups defined inductively by the following three
operations.
\begin{definition}
\label{def:group-constructors}
Let $G$ and $H$ be permutation groups on $[p]$ and $[q]$, with each group element 
viewed as a $p\times p$ or $q \times q$ permutation matrix. The \emph{direct sum} $G\oplus H$, 
the \emph{direct product} $G\otimes H$, and the \emph{wreath product} $H \wr G$ are the following groups of 
 $(p+q)\times (p+q)$ or $pq \times pq$ permutation matrices with matrix multiplication as their composition:
\begin{align*}
    G \oplus H & = \{ g \oplus h : g \in G,\, h \in H \};
    &
    G \otimes H & = \{ g \otimes h : g \in G,\, h \in H \};
    \\
    H \wr G & = \{ \wreath(\vec{h},g) : g \in G, \vec{h} \in H^p \}.
\end{align*}
Here $g \oplus h$ is the block diagonal matrix with $p\times p$ matrix $g$ as its upper-left corner and $q\times q$ matrix $h$ as the lower-right corner,
$g \otimes h$ is the Kronecker product of two matrices $g$ and $h$, and $\wreath(\vec{h},g)$ is the $pq \times pq$ permutation matrix defined by
$\wreath(\vec{h},g)_{(i-1)q+j,(i'-1)q+j'} = \ind_{\{ g_{i,i'} = (h_i)_{j,j'} = 1 \}}$ for all $i,i' \in [p]$ and $j,j' \in [q]$.
\end{definition}
The next theorem specifies the representation of $G$-equivariant linear maps for an inductively-constructed $G$,
by describing a basis
of those linear maps.
For a permutation group $G$ on $[m]$, let $\cE(G) = \left\{ M \in \R^{m \times m} : Mg = gM, \, \forall g \in G \right\}$,
the vector space of $G$-equivariant linear maps,
where each $g \in G$ is regarded as a permutation matrix.
See Appendix~\ref{sec:equivariant-basis} for the proof of Theorem~\ref{thm:equivariant-basis}.
\begin{theorem}
\label{thm:equivariant-basis}
Let $G,H$ be permutation groups on $[p]$ and $[q]$, and $\cB(G),\cB(H)$ be some bases of $\cE(G)$ and $\cE(H)$, respectively. Then, the following sets form bases for $G \oplus H$, $G \otimes H$, and $H \wr G$:
\begin{align*}
    \cB(G \,{\oplus}\, H) & =
    \{ A \,{\oplus}\, \mathbf{0}_q, \mathbf{0}_p \,{\oplus}\, B \,{:}\, A \,{\in}\, \cB(G), B \,{\in}\, \cB(H)\}
    \\
    &\qquad \cup \{ \mathbf{1}_{O \times (p+O')}, \mathbf{1}_{(p+O')\times O}\,{:}\, O \,{\in}\, \cO(G), O' \,{\in}\, \cO(H)\};
    \\
    \cB(G \,{\otimes}\, H) & = \{ A \,{\otimes}\, B \,{:}\, A \,{\in}\, \cB(G), B \,{\in}\, \cB(H) \};\
    \\
    \cB(H \,{\wr}\, G) & {} = \{ A \,{\otimes}\, \mathbf{1}_{O' \times O''} \,{:}\, A \,{\in}\, \cB(G), A_{i,i} = 0\ \text{ for } i \in [p] \text{ and } O', O'' \,{\in}\, \cO(H)\} \\
    &\qquad {} \cup \{ I_O \,{\otimes}\, B \,{:}\, B \,{\in}\, \cB(H), O \,{\in}\, \cO(G) \}.
\end{align*}
Here $\mathbf{0}_m$ is an everywhere-zero matrix in $\R^{m\times m}$,
and $\mathbf{1}_{R \times S}$, $I_R$ are the matrices defined by
$(\mathbf{1}_{R\times S})_{i, j} = \ind_{\{ i \in R, \, j \in S \}}$, $(I_R)_{i, j} = \ind_{\{ i = j, \, i \in R \}}$
whose shapes are defined by the context in which they are used.
Here, $\mathbf{1}_{O \times (p + O')}, \mathbf{1}_{(p + O') \times O} \in \R^{(p+q) \times (p+q)}$,
$\mathbf{1}_{O' \times O''} \in \R^{q \times q}$,
$I_O \in \R^{p \times p}$.
Also, $\cO(G) = \{\{g(i) : g \in G\} : i \in [p]\}$ (i.e., the set of $G$-orbits), and $p+O = \{p + i : i \in O\}$.
\end{theorem}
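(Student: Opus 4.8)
The strategy is to prove the three cases separately, in each case showing (1) that every listed matrix lies in $\cE$ of the relevant group, (2) that the listed matrices are linearly independent, and (3) that they span $\cE$ by a dimension count. The membership checks are the routine "plug in the definitions" calculations; the dimension count is where the real work is, and it rests on a classical fact that I would state and use: for a permutation group $G$ on $[m]$, $\dim \cE(G)$ equals the number of orbits of the induced $G$-action on $[m]\times[m]$, because an equivariant matrix is exactly a matrix that is constant on each such orbit (its basis consists of the $0/1$ indicator matrices of the orbits). So the whole theorem reduces to counting orbits of the product/sum/wreath actions on pairs and matching that count against the size of the proposed basis.

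\textbf{Direct sum.} For $G\oplus H$ acting on $[p+q] = [p]\sqcup[q]$, a pair $(a,b)$ lies in one of four blocks: both in $[p]$, both in $[q]$, or one in each. On the $[p]\times[p]$ block the orbits are exactly the $G$-orbits on pairs, giving $\dim\cE(G)$ basis elements (the $A\oplus\mathbf 0_q$); symmetrically the $[q]\times[q]$ block gives the $\mathbf 0_p\oplus B$. On the off-diagonal block $[p]\times[q]$, the action is the product of the $G$-action on $[p]$ and the $H$-action on $[q]$, so orbits are pairs $(O,O')$ of a $G$-orbit and an $H$-orbit — these are the $\mathbf 1_{O\times(p+O')}$, and the transposed block gives $\mathbf 1_{(p+O')\times O}$. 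Summing gives exactly the cardinality of $\cB(G\oplus H)$; since the listed matrices have disjoint supports partitioned by these orbit-blocks, linear independence and spanning are immediate.

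\textbf{Direct product.} Here I would use the cleanest argument: $\cE(G\otimes H)$ contains $\cE(G)\otimes\cE(H) := \spanset\{A\otimes B\}$ by the mixed-product property $(A\otimes B)(g\otimes h) = (Ag)\otimes(Bh)$, and for the reverse inclusion and independence I count orbits: the $(G\otimes H)$-action on $[pq]\times[pq]\cong([p]\times[p])\times([q]\times[q])$ is again a product action, so its orbits biject with pairs (orbit of $G$ on $[p]^2$, orbit of $H$ on $[q]^2$), giving $\dim\cE(G)\cdot\dim\cE(H)$ — matching $|\cB(G)|\cdot|\cB(H)|$. Independence of the $A\otimes B$ follows from independence of $\cB(G)$ and $\cB(H)$ together with the standard fact that $\otimes$ of independent families is independent.

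\textbf{Wreath product.} This is the hard case and the main obstacle. The action of $\wreath(\vec h,g)$ permutes the $p$ blocks of size $q$ according to $g$, and within the block structure applies the $h_i$'s. I would split pairs of indices $((i-1)q+j,\,(i'-1)q+j')$ by whether $i = i'$ (same block) or $i\neq i'$ (different blocks). For $i\neq i'$: since the $\vec h\in H^p$ component acts freely and independently on the two coordinates $j,j'$ while $g$ ranges over $G$, one can move $(j,j')$ to any $(j_0,j_1)$ in the product of the $H$-orbits $O'\times O''$, and the block pair $(i,i')$ ranges over the $G$-orbit of off-diagonal pairs determined by which $A\in\cB(G)$ with zero diagonal is the indicator — so these orbits correspond to triples $(A, O', O'')$ with $A$ a diagonal-free orbit indicator of $G$ and $O',O''\in\cO(H)$; these give $A\otimes\mathbf 1_{O'\times O''}$. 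For $i = i'$: the relevant stabiliser in $G$ of the point $i$ acts, but crucially the $h_i$ component acts on both $j$ and $j'$ by the same $h_i$, so within a fixed block the orbits on $(j,j')$ are precisely the $H$-orbits on pairs — and as $i$ ranges over a $G$-orbit $O$ the diagonal blocks get tied together, yielding $I_O\otimes B$ for $B\in\cB(H)$. The delicate point to get right is that the "same block" and "different block" cases are genuinely disjoint (no $g$ merges a diagonal block pair with an off-diagonal one, since $g$ is a permutation matrix), and that in the same-block case it really is the same $h_i$ on both coordinates, which is why we get $\cB(H)$ itself rather than $\cO(H)\times\cO(H)$. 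After confirming the orbit count equals $|\cB(H\wr G)|$, linear independence again follows from support considerations (diagonal-block vs. off-diagonal-block supports are disjoint) plus the independence of $\cB(G)$ and $\cB(H)$, and spanning follows from the dimension match.

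Throughout, I would carry out the induction implicitly: the statement is that \emph{if} $\cB(G),\cB(H)$ are bases then the displayed sets are bases, so no induction is needed — the theorem is a single closure step, and one obtains bases for all inductively-constructed groups by starting from bases of the base cases (e.g.\ the trivial and symmetric groups) and iterating. The only genuinely non-mechanical ingredient is the orbit-counting bookkeeping for the wreath product.
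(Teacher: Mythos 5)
Your proposal is correct and follows the same overall skeleton as the paper's proof: check that each listed matrix is equivariant, check linear independence, and close the argument by a dimension count showing the listed set is large enough to span. Where you genuinely differ is in how the dimension of the equivariant space is computed. The paper works algebraically: it shows via the Reynolds projection and Burnside's lemma that $\dim\cE(\mathbf{G}_0)=\frac{1}{|\mathbf{G}_0|}\sum_{g}\trace(P_g)^2$ and then evaluates this trace sum separately for $G\oplus H$, $G\otimes H$, and $H\wr G$ --- the wreath-product case being a long expansion of $\trace(\wreath(\vec{h},g))^2$ summed over $H^p$. You instead use the combinatorial form of the same fact, namely that $\dim\cE(G)$ equals the number of orbits of $G$ acting diagonally on $[m]\times[m]$, with the orbit indicators as a canonical basis, and count those orbits directly for each construction. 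The two are equivalent by Burnside, but your route has a side benefit: once the orbit indicators of the pair action are known to form a basis, identifying the listed matrices with those indicators delivers equivariance, independence, and spanning in one stroke, whereas the paper verifies equivariance by separate matrix computations in its first three claims. Your orbit analysis of the wreath product is the one place needing care, and you correctly isolate the two decisive points: for $i\neq i'$ the components $h_i,h_{i'}$ vary independently, so the within-block coordinates contribute $\cO(H)\times\cO(H)$, while for $i=i'$ the same $h_i$ acts on both coordinates, so the diagonal blocks contribute a copy of $\cB(H)$ per $G$-orbit. One caveat you share with the paper: both arguments implicitly assume $\cB(G)$ is adapted so that exactly $|\cB(G)|-|\cO(G)|$ of its elements have zero diagonal (true of the orbit basis, not of an arbitrary basis of $\cE(G)$), which is needed for the wreath-product count to match the stated set.
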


\section{Symmetry-aware SATNet}
\label{sec:SymSATNet}

In this section, we present SymSATNet, which abbreviates symmetry-aware SATNet.
This variant is designed to operate when symmetries of a learning task are known a priori (via an algorithm or a domain expert).
The proofs of the theorem and the lemma in the section are in Appendix~\ref{sec:appendix-symsatnet}.

SymSATNet solves the optimisation problem of SATNet, but under the following assumptions:
\begin{assumption}
    \label{asm:symsatnet}
    The optimisation objective $\langle C, V^T V \rangle$ in \eqref{eqn:mixing-method-objective} as a map on $V = \R^{k \times n}$ is invariant under a permutation group $G$,
    whose action is of type in Section~\ref{subsec:symmetries} (i.e., each $g \in G$ acts as a permutation on the columns of $V$).
\end{assumption}
Continuing our convention, we denote $P_g$ by $g$.
One immediate consequence of Assumption~\ref{asm:symsatnet} is
\begin{align*}
\langle C,V^TV \rangle 
& {} = \langle C, (g \cdot V)^T (g \cdot V) \rangle 
 {} = \langle C, (V g^{-1})^T (V g^{-1})\rangle
 \quad \text{for all } g \in G \text{ and } V \in \R^{k \times n}.
\end{align*}
The next theorem re-phrases this property of the optimisation objective as equivariance of $C$:
\begin{theorem} 
\label{thm:objective-invariance-equivariance}
Let $C$ be a symmetric $n\times n$ matrix. Then, 
\begin{equation}
\label{eqn:objective-invariance-equivariance0}
\langle C,V^TV \rangle = \langle C, (V g^{-1})^T (V g^{-1})\rangle 
\end{equation}
for all $V \in \R^{k \times n}$ and $g \in G$ if $C$ as a linear map on $\R^n$ is $G$-equivariant, that is, $C g = g C$ for all $g \in G$. Furthermore, if $k = n$, the converse also holds.
\end{theorem}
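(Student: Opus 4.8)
The plan is to strip both sides of \eqref{eqn:objective-invariance-equivariance0} down to traces and reduce the whole statement to a fact about the single matrix $M_g := C - g^{-1}Cg$, from which both directions fall out. The only structural inputs needed are that $C$ is symmetric and that a permutation matrix (which is how we view each $g\in G$) is orthogonal.

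First I would rewrite the two inner products. Since $C$ is symmetric, $\langle C, V^TV\rangle = \trace(C V^T V)$. For the right-hand side, using $(g^{-1})^T = g$ (valid because $P_g$ is orthogonal, so $g^{-1}=g^T$), we get $(Vg^{-1})^T(Vg^{-1}) = g\,V^T V\,g^{-1}$, and by cyclicity of the trace,
\[
\langle C,(Vg^{-1})^T(Vg^{-1})\rangle = \trace(C g V^T V g^{-1}) = \trace(g^{-1}Cg\,V^TV).
\]
Hence, for a fixed $g$, equation \eqref{eqn:objective-invariance-equivariance0} holds for all $V\in\R^{k\times n}$ if and only if $\trace(M_g\,V^TV)=0$ for all such $V$; note that $M_g$ is symmetric because $C$ is symmetric and $g^{-1}=g^T$. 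The forward direction is then immediate: if $Cg=gC$ then $g^{-1}Cg=C$, so $M_g=0$ and the identity holds trivially (no assumption on $k$ is used here).

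For the converse with $k=n$, I would argue as follows. The hypothesis gives $\trace(M_g\,V^TV)=0$ for all $V\in\R^{n\times n}$. As $V$ ranges over $\R^{n\times n}$, the Gram matrix $V^TV$ ranges over the entire positive semidefinite cone of $\R^{n\times n}$ — in particular over every rank-one matrix $ww^T$ with $w\in\R^n$. Substituting $V^TV=ww^T$ yields $w^TM_gw=0$ for all $w$, and since $M_g$ is symmetric this forces $M_g=0$, i.e. $Cg=gC$. Since $g\in G$ was arbitrary, $C$ is $G$-equivariant, completing the theorem.

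I do not expect a genuine obstacle here; the two points that need care are (i) the identity $(g^{-1})^T=g$ for permutation matrices, which is precisely what converts the column action $V\mapsto Vg^{-1}$ into the conjugation $g\,V^TV\,g^{-1}$ inside the trace, and (ii) the passage from "$\trace(M_g\,V^TV)=0$ for all $V$" to "$M_g=0$", where we invoke $k=n$ so that $V^TV$ sweeps out enough of the PSD cone. In fact rank-one Gram matrices already suffice for step (ii), so the argument goes through for any $k\ge 1$; I would either note this in a remark or simply present it for $k=n$ as stated.
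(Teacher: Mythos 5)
Your proof is correct, and the overall route is the same as the paper's: rewrite $\langle C,(Vg^{-1})^T(Vg^{-1})\rangle$ via $g^{-1}=g^T$ and cyclicity of the trace as $\langle g^TCg, V^TV\rangle$, so that the forward direction reduces to $g^TCg=C$. The one place you genuinely diverge is the converse. The paper finishes by a dimension count: the Gram matrices $V^TV$ for $V\in\R^{n\times n}$ exhaust the PSD cone, which spans the $n(n+1)/2$-dimensional space of symmetric matrices, so the symmetric matrix $C-g^TCg$ pairs to zero against a spanning set and must vanish. You instead test against rank-one Gram matrices $ww^T$ to get $w^T M_g w=0$ for all $w$, and conclude $M_g=0$ from symmetry of $M_g$ (which does hold, since $(g^TCg)^T=g^TCg$ for symmetric $C$). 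Your finish is more elementary and, as you observe, strictly stronger: a single nonzero row already realises $V^TV=ww^T$ for any $k\ge 1$, so the hypothesis $k=n$ in the converse is not actually needed. That is a correct and worthwhile remark; the paper's dimension argument, by contrast, is phrased so as to require $V^TV$ to sweep the full PSD cone and hence keeps $k=n$.
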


This theorem lets us incorporate the symmetries into the objective of SATNet,
and leaves the handling of the diagonal constraints of SATNet.
The next lemma says that those constraints require no special treatment, though,
since they are already preserved by the action of any $g \in G$.
\begin{lemma}
\label{lem:constraint-invariance-equivariance} 
Let $V \in \R^{k\times n}$ and $g \in G$. Every column of $V$ has the $L_2$-norm $1$ 
if and only if every column of $V g^{-1}$ has the $L_2$-norm $1$.
\end{lemma}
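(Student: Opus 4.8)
The plan is to observe that multiplying $V$ on the right by the permutation matrix $g^{-1} = P_{g^{-1}}$ merely reorders the columns of $V$ without altering their individual entries, so the multiset of column norms is unchanged; hence all columns of $V$ have $L_2$-norm $1$ precisely when all columns of $V g^{-1}$ do. Concretely, I would first recall from Section~\ref{subsec:symmetries} that $(g \cdot V)_{i,j} = V_{i, g^{-1}(j)}$, i.e.\ the $j$-th column of $V g^{-1}$ equals the $g^{-1}(j)$-th column of $V$. Writing $v_1, \ldots, v_n$ for the columns of $V$ and $w_1, \ldots, w_n$ for the columns of $V g^{-1}$, this says $w_j = v_{g^{-1}(j)}$ for every $j \in [n]$, and therefore $\Vert w_j \Vert = \Vert v_{g^{-1}(j)} \Vert$.

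From here both directions are immediate. Suppose $\Vert v_i \Vert = 1$ for all $i \in [n]$. Then for each $j \in [n]$ we have $\Vert w_j \Vert = \Vert v_{g^{-1}(j)} \Vert = 1$, so every column of $V g^{-1}$ has $L_2$-norm $1$. Conversely, suppose $\Vert w_j \Vert = 1$ for all $j \in [n]$. Since $g$ is a permutation of $[n]$, the map $j \mapsto g^{-1}(j)$ is a bijection on $[n]$, so for any $i \in [n]$ we may pick $j$ with $g^{-1}(j) = i$ and conclude $\Vert v_i \Vert = \Vert w_j \Vert = 1$. This gives the desired biconditional.

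There is essentially no obstacle here: the only point requiring care is the bookkeeping of which column of $V$ ends up where under right-multiplication by $P_{g^{-1}}$, and that is already pinned down by the definition of the group action in Section~\ref{subsec:symmetries}. Alternatively, one can argue even more abstractly: right-multiplication by a permutation matrix is an orthogonal transformation acting on row vectors, but the cleanest and most self-contained route is the column-permutation identity $w_j = v_{g^{-1}(j)}$ together with the bijectivity of $g^{-1}$ on $[n]$.
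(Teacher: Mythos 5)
Your proof is correct and follows essentially the same route as the paper's: both identify the columns of $V g^{-1}$ as a permutation of the columns of $V$ (via $w_j = v_{g^{-1}(j)}$) and conclude from the bijectivity of $g^{-1}$ on $[n]$. Your version just spells out the bookkeeping a little more explicitly than the paper does.
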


Recall $\cE(G) = \{ M \in \R^{n \times n} : Mg = gM, \, \forall g \in G\}$ is the vector space of $G$-equivariant matrices.
Let $\cE(G)_s$ be the subset of $\cE(G)$ containing only symmetric matrices.
When $G$ is a permutation group constructed by direct sum, direct product, and wreath product,
we can generate a basis $\cB(G)$ of $\cE(G)$ automatically using Theorem~\ref{thm:equivariant-basis}.
Then, we can convert $\cB(G)$ to an orthogonal basis of $\cE(G)_s$ by applying the Gram-Schmidt orthogonalisation to $\{B + B^T : B \in \cB(G)\}$.
Let $\cB(G)_s = \{ B_1, \, \ldots, \, B_d \}$ be such an orthogonal basis of $\cE(G)_s$.
  
SymSATNet is SATNet where the matrix $C$ in the optimisation objective has the form:
\begin{equation}
\label{eqn:SymSATNet-equivariance}
C = \sum_{\alpha = 1}^d \theta_\alpha B_\alpha
\end{equation}
for some scalars $\theta_1,\ldots,\theta_d \in \R$.
Note that by this condition on the form of $C$, SymSATNet has only $d$ parameters $\theta_1,\ldots,\theta_d$, instead of
$n\times m$ for some $m$ in the original formulation of SATNet.
When the learning target has enough symmetries, $d$ is usually far smaller than $n^2$ or even $n$, and this reduction brings speed-up and improved generalisation.

The forward computation of SymSATNet is precisely that of SATNet, the repeated coordinate-wise updates until convergence,
and the backward computation is the one of SATNet extended (by the chain rule) with a step backpropagating the derivatives
$\partial l / \partial C$ to each $\partial l / \partial \theta_\alpha$ for $\alpha \in [d]$.\footnote{SymSATNet is implemented based on the SATNet code~\citep{WangDWK19} available under the MIT License.}

We summarise SymSATNet below using the usual notation of SATNet ($\cI$, $\cO$, $V_\cI$, $V_\cO$, and $V$):
\begin{compactitem}
\item The input is $V_\cI$, the matrix of the input columns of $V$. 
\item The parameters are $(\theta_1,\ldots,\theta_d) \in \R^d$. They define the matrix $C$ by \eqref{eqn:SymSATNet-equivariance}.
\item The forward computation solves the following optimisation problem using coordinate descent, and returns $V_\cO$, the matrix of the output columns of $V$:
\begin{align*}
\argmin_{V_\cO \in \R^{k \times |\cO|}} \,\langle C, V^TV\rangle 
\quad \text{subject to}\ \Vert v_o \Vert = 1 \ \text{for}\ o \in \cO.
\end{align*}
\item The backward computation computes $\partial l / \partial V_\cI$ and $\partial l / \partial \theta_\alpha$ by \eqref{eqn:SATNet-derivative}
and the chain rule:
\begin{align}
    \label{eqn:SymSATNet-chainrule}
    \partial l/\partial \theta_\alpha & = (\partial l/\partial \ovec(C)) \ovec(B_\alpha) = - U^T W \ovec(B_\alpha).
\end{align}
\end{compactitem}

\section{Discovery of Symmetries}
\label{sec:symfind}

One obstacle for using SymSATNet is that the user has to specify symmetries.
We now discuss how to alleviate this issue by presenting an algorithm for discovering candidate symmetries automatically.

The goal of our algorithm, denoted by $\symfind$, is to find a permutation group $G$ that captures the symmetries of an unknown learning target and is expressible by the following grammar:
$ G ::= \, \idg_m \,\mid\, \cyclic_m \,\mid\, \perm_m \,\mid\, G \oplus G \,\mid\, G \otimes G \,\mid\, G \wr G\ $ for $m \in \N$.
The $\idg_m$ denotes the trivial group containing only the identity permutation on $[m]$, and $\cyclic_m$ denotes the group of cyclic permutations on $[m]$,
each of which maps $i \in [m]$ to $(i + n)\,\mathrm{mod}\, m$ for some~$n$.
The $\perm_m$ is the group of all the permutations on $[m]$.
The last three cases are direct sum, direct product, and wreath product (see Definition~\ref{def:group-constructors}).
They describe three ways of decomposing a group $G$ into smaller parts.
Having such a decomposition of $G$ brings the benefit
to recursively and efficiently compute a basis of $G$-equivariant linear maps.

The design of $\symfind$ is based on our empirical observation that a softened version of symmetries often emerges in the parameter matrix $C$ of the original SATNet during training.
Even in the early part of training, many entries of $C$ share similar values,
and there is a large-enough group $G$ with $Cg \approx gC$ for all $g \in G$,
which intuitively means that $G$ captures symmetries of $C$.
Furthermore, we observed, such $G$ often consists of symmetries of the learning target.
This observation suggests an algorithm that takes $C$ as input and finds such $G$ in our grammar or its slight extension. 

\begin{wrapfigure}{r}{0.5\textwidth}
\vskip -2.2em
\begin{minipage}{0.5\textwidth}
\begin{algorithm}[H]
    \caption{$\symfind$ with a threshold $\lambda > 0$}
    \label{alg:symfind}
\begin{algorithmic}[1]
    \STATE {\bfseries Input:} $M \in \R^{m \times m}$\qquad {\bfseries Output:} $(G, \sigma)$
    \IF {$\Vert \prj(\group(\perm_m, \id_m),M) - M \Vert_F \le \lambda $}
        \RETURN $(\perm_m,\id_m)$
    \ENDIF
    \STATE $\cA \gets \{(\idg_m, \id_m)\}$
    \IF {$\Vert \prj(\group(\cyclic_m,  \id_m),M) - M \Vert_F \le \lambda $}
           \STATE $\cA \gets \cA \cup \{(\cyclic_m, \id_m)\}$
    \ENDIF
    \STATE $(G',\sigma') \gets \sumsplit(M)$;
    \STATE $\cA \gets \cA \cup \{(G',\sigma')\}$
    \FOR{every divisor $p$ of $m$}
            \STATE $(G'',\sigma'')\, {\gets}\, \kronsplit(M,p)$;
            \STATE $\cA\, {\gets}\, \cA \,{\cup}\, \{(G'',\sigma'')\}$
    \ENDFOR
    \STATE $(G, \sigma) \gets \argmin_{(G,\sigma) \in \cA} \dim(\cE(\group(G,\sigma)))$
    \\
    \STATE {\bfseries return} $(G,\sigma)$
\end{algorithmic}
\end{algorithm}
\end{minipage}
\vskip -2em
\end{wrapfigure}

The input of $\symfind$ is a matrix $M \in \R^{m\times m}$.
As previously explained, when $\symfind$ is called at the top level, it receives as input the parameter $C$ of SATNet learnt by a fixed number of
training steps.
However, subsequent recursive calls to $\symfind$ may have input $M$ different from $C$.
Then, $\symfind$ returns a group $G$ in our grammar and a permutation $\sigma$ on $[m]$, together defining a permutation group on $[m]$:
\begin{align*}
    & \symfind(M) = (G, \sigma); \\
    &\group(G,\sigma) = \{ \sigma \circ g \circ \sigma^{-1} : g \in G\},
\end{align*}
where $\circ$ is the composition of permutations.
When $G$ is decomposed into, say $G_1 \oplus G_2$, the $\sigma$ specifies which indices in $[m]$ get permuted by $G_1$ and
$G_2$.
Once
top-level $\symfind$ returns $(G, \sigma)$,
we construct $\cB(\group(G, \sigma))_s$, as
in Section~\ref{sec:SymSATNet} with a minor adjustment with $\sigma$.\footnote{We construct $\cB(\group(G,\sigma)) = \{ \sigma B \sigma^T : B \in \cB(G)\}$, which is an orthogonal basis for $\cE(\group(G,\sigma))$.}

Algorithm~\ref{alg:symfind} describes $\symfind$, where $\id_m$ is the identity permutation on $[m]$, $\Vert {} \cdot {} \Vert_F$ is the Frobenius norm,
$\dim(\cV)$ is the dimension of a vector space $\cV$,
and the Reynolds operator $\prj$ projects a matrix $M \in \R^{m\times m}$ orthogonally to the subspace of $G$-equivariant $m \times m$ matrices:
\begin{equation*}
\prj(G,M) = \frac{1}{|G|} \sum_{g \in G} g M g^T,
\end{equation*}
so that $\Vert \prj(G, M) - M \Vert_F$ computes the $L^2$ distance between the matrix $M$ and the space $\cE(G)$.

In the lines 2-4, the algorithm first checks whether $\cS_m$
models symmetries of the input $M$ accurately. If so, the algorithm returns $(\cS_m, \id_m)$.
Otherwise, it assumes that an appropriate group for $M$'s symmetries is one of the remaining cases in the grammar,
and constructs a list $\cA$ of candidates intially containing the trivial group $(\idg_m, \id_m)$.
In the lines 6-8, the algorithm adds a pair $(\cyclic_m, \id_m)$
to $\cA$ if it approximates $M$'s symmetries well.
In the lines 9-10,
the algorithm calls the subroutine $\sumsplit$
which finds $\group(G', \sigma')$ with $G' = \bigoplus_i G_i'$ that approximates $M$'s symmetries well.
In the lines 11-14, the algorithm calls the other subroutine $\kronsplit$ for every divisor $p$ of $m$.
For each $p$, $\kronsplit$ finds $\group(G'', \sigma'')$ with $G'' = G''_1 \otimes G''_2$ (or $G''_2 \wr G''_1$) that approximates $M$'s symmetries well,
where $G''_1$ and $G''_2$ are permutation groups on $[p]$ and $[m/p]$.
Finally, in the line 15, $\symfind$ picks a pair $(G,\sigma)$ from the candidates $\cA$ with the strongest level of symmetries in the sense that
the basis of $\group(G, \sigma)$-equivariant matrices
has the fewest elements.

The subroutine $\sumsplit$ clusters entries of $M$ as blocks since block-shaped clusters commonly arise in matrices equivariant with respect to a direct sum of groups.
The other subroutine $\kronsplit$ uses a technique~\citep{Van93} to exploit a typical pattern of Kronecker product of matrices,
and detects the presence of the pattern in $M$ by applying SVD to a reshaped version of $M$.
Each subroutine may call $\symfind$ recursively.
See Appendix~\ref{sec:symfind-appendix} for the details.

\section{Experimental Results}
\label{sec:experiments}

We experimentally evaluated SymSATNet and the $\symfind$ algorithm on the tasks of learning rules of two problems, Sudoku and the completion problem of Rubik's cube.
The original SATNet was used as a baseline, and both ground-truth and automatically-discovered symmetries were used for SymSATNet.
For $\symfind$, we also tested its ability to recover known symmetries given randomly generated equivariant matrices.
We observed significant improvement of SymSATNet over SATNet in various learning tasks, and also the promising results and limitation of $\symfind$.

\paragraph{Sudoku problem}
In Sudoku, we are asked to fill in the empty cells of a $9 \times 9$ board such that every row, every column, and each of nine $3 \times 3$ blocks have all numbers $1-9$.
Let $A \in \{0,1\}^{9 \times 9 \times 9}$ be the encoding of a full number assignment for the board where the $(i, j, k)$-th entry of $A$ is $1$
if the $(i, j)$-th cell of the board contains $k$.
In SATNet, we flatten $A$ to the assignment on the $n = 9^3$ boolean variables, and relax each variable
into $\R^k$, resulting $V \in \R^{k \times n}$ in the objective of SATNet.

\begin{wrapfigure}{r}{0.48\columnwidth}
    \begin{subfigure}{0.48\columnwidth}
    \includegraphics[width=0.95\columnwidth]{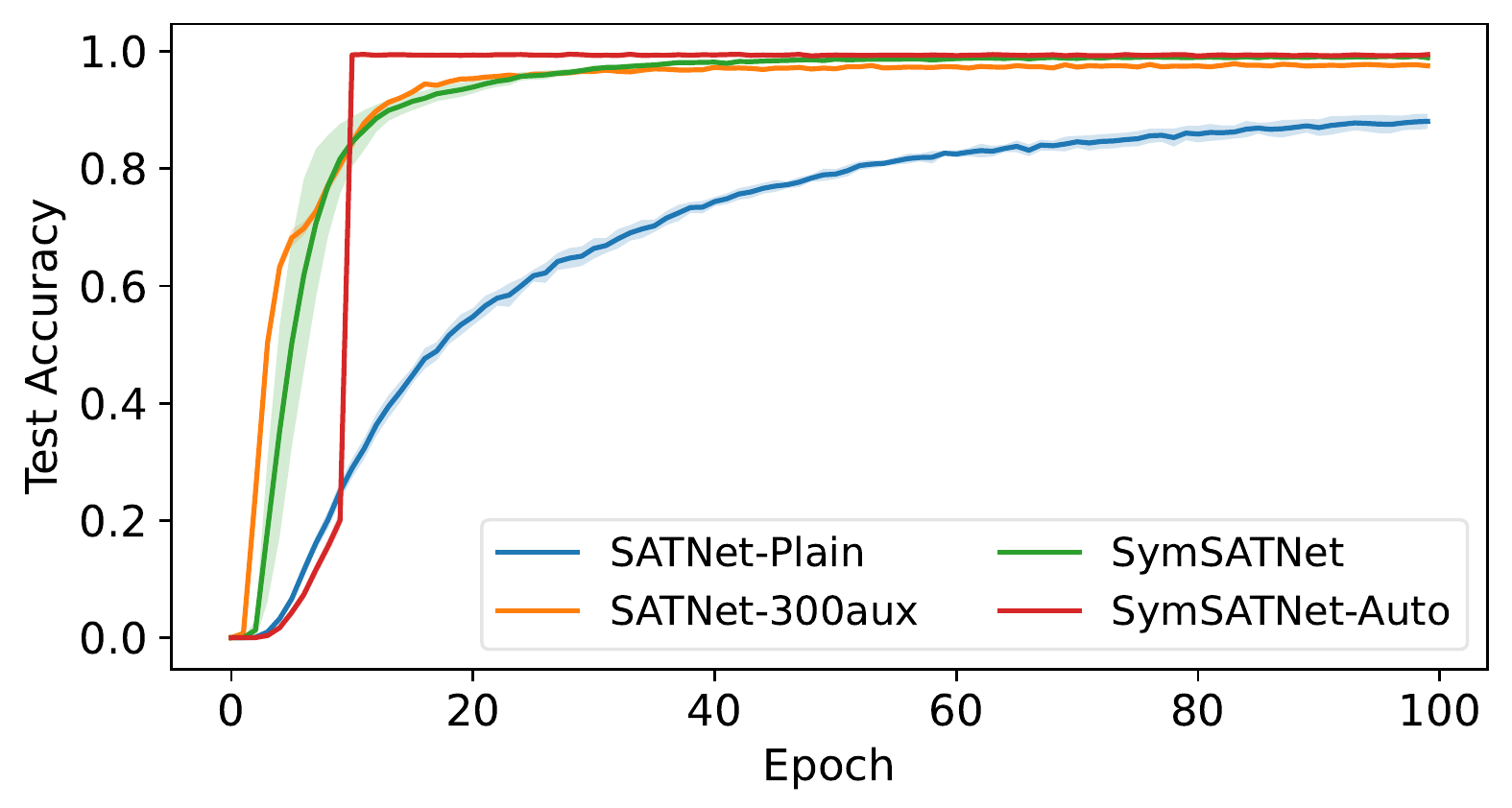}
    \vskip -0.5em
    \subcaption{Sudoku}
    \end{subfigure}
    \begin{subfigure}{0.48\columnwidth}
    \includegraphics[width=0.95\columnwidth]{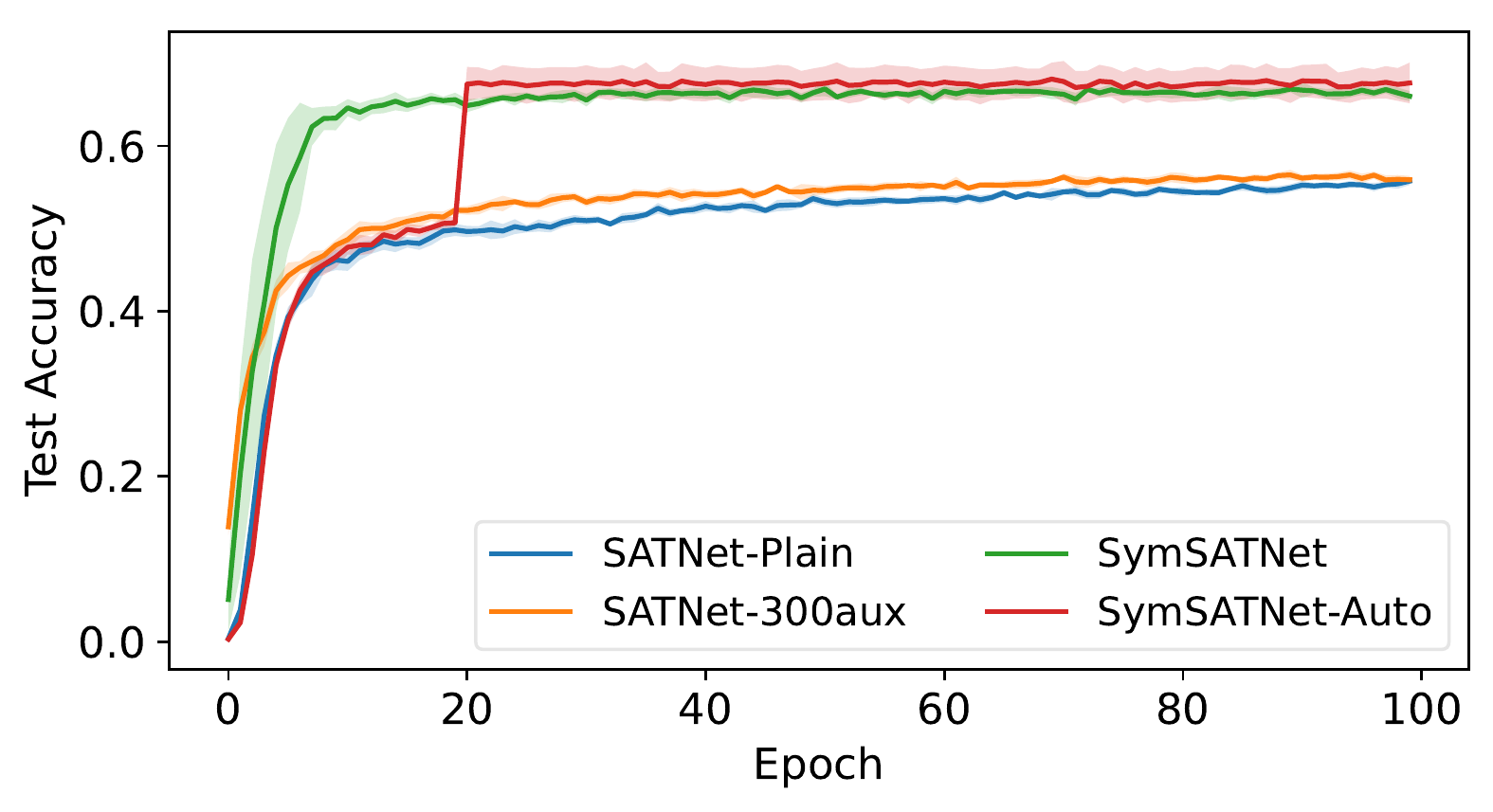}
    \vskip -0.5em
    \subcaption{Rubik's cube}
    \end{subfigure}
    \caption{Test accuracies over training epochs.}
    \label{fig:accuracy}
    \vskip -2em
\end{wrapfigure}

The rules of Sudoku have symmetries formalised by
$G = (\perm_3 \wr \perm_3) \otimes (\perm_3 \wr \perm_3) \otimes \perm_9$.
Each of two \linebreak $\perm_3 \wr \perm_3$ refers to solution-preserving permutations for rows and columns in Sudoku.
The last $\perm_9$ refers to permutations of the assigned numbers $1-9$ in each cell.
See Appendix~\ref{sec:symmetries-sudoku-cube} for more information about the symmetry group for Sudoku.

To learn the rules of Sudoku using SymSATNet, we constructed a basis $\cB(G)_s$
as explained in Section~\ref{sec:SymSATNet}.
It has 18 elements, thus SymSATNet has 18 parameters to learn.

We used 9K training and 1K test examples generated by the Sudoku generator~\citep{Park18}.
Each example is a pair $(V_\cI,V_\cO)$ where the input $V_\cI$ assigns 31-42 cells (out of 81 cells) and the output $V_\cO$ specifies the remaining cells.
SymSATNet was compared with
SATNet-Plain without auxiliary variables, and SATNet-300aux with 300 auxiliary variables.
We used
binary cross entropy loss and Adam optimizer~\citep{Kingma15},
with the learning rate $\eta=2 \times 10^{-3}$ for SATNet-Plain and SATNet-300aux as the original work
and $\eta=4 \times 10^{-2}$ for SymSATNet.
We measured test accuracy, the rate of the correctly-solved Sudoku instances
by the forward computations.
We reported the average over 10 runs with $95\%$ confidence interval.

The results are in Figure~\ref{fig:accuracy} and Table~\ref{table:accuracy}.
(SymSATNet-Auto refers to a variant of SymSATNet that
uses $\symfind$ as a subroutine to find symmetries automatically, and will be described later in this section.)
Our SymSATNet outperformed SATNet-Plain and SATNet-300aux.
On average, its 100 epochs finished $2$-$4\times$ faster in terms of wall clock than two alternatives.
due to the reduced number of iterations and avoidance of matrix operations. See Appendix~\ref{sec:algorithm-satnet} for the efficiency of SymSATNet.
Despite the speed up, the best test accuracy of SymSATNet ($99.2\%$) was significantly better than SATNet-Plain ($88.1\%$) and slightly better than SATNet-300aux ($97.9\%$).

\begin{wrapfigure}{r}{0.5\columnwidth}
    \vskip -3em
    \begin{minipage}{0.5\columnwidth}
    \begin{table}[H]
        \begin{center}
        \begin{small}
        \begin{sc}
        \caption{Best test accuracies during 100 epochs and average train times ($10^2$ sec).
        Additional times for automatic symmetry detection are also reported after $+$.}
        \label{table:accuracy}
        \vskip 0.5em
        \begin{tabular}{lrrrr}
        \toprule
        Model & \multicolumn{2}{c}{Sudoku} & \multicolumn{2}{c}{Cube} \\
        & Acc. & Time & Acc. & Time \\
        \midrule \midrule
        SATNet\scriptsize-Plain   & 88.1\%                 & 48.0                 & 55.7\%                 & 1.8      \\
                                  & \scriptsize $\pm$1.8\% & \scriptsize $\pm$0.17 & \scriptsize $\pm$0.7\% & \scriptsize $\pm$0.01 \\
        SATNet\scriptsize-300aux  & 97.9\%                 & 90.3                 & 56.5\%                 & 14.0     \\
                                  & \scriptsize $\pm$0.3\% & \scriptsize $\pm$0.68 & \scriptsize $\pm$0.9\% & \scriptsize $\pm$0.12 \\
        \midrule
        SymSATNet                 & 99.2\%                           & 25.6                          & 66.9\%                 & \textbf{1.1}       \\
                                  & \scriptsize $\pm$0.2\%           & \scriptsize $\pm$0.14          & \scriptsize $\pm$1.2\% & \scriptsize \textbf{$\pm$0.00}  \\
        SymSATNet\scriptsize-Auto & \textbf{99.5\%}                  & \textbf{22.7}                & \textbf{68.1\%}                          & 3.4             \\
                                  & \scriptsize \textbf{$\pm$0.2\%}  & \scriptsize \textbf{\makecell{$+$0.14 \\ $\pm$0.35}} & \scriptsize \textbf{$\pm$2.8\%}          & \scriptsize \makecell{$+$0.66 \\ $\pm$0.19}  \\
        \bottomrule
        \end{tabular}
        \end{sc}
        \end{small}
        \end{center}
    \end{table}
    \end{minipage}
    \vskip -2.7em
\end{wrapfigure}

\paragraph{Completion problem of Rubik's cube}
The Rubik's cube is composed of $6$ faces, each of which has $9$ facelets.
We considered a constraint satisfaction problem where
we are asked to complete the missing facelets of the Rubik's cube such that the resulting cube is solvable;
by moving the cube, we can make all facelets in each face have the same colour, and no same colours appear in two faces.
Let $A \in \{0,1\}^{6 \times 9 \times 6}$ be a colour assignment of Rubik's cube where the $(i, j, k)$-th entry has $1$ if and only if the $j$-th facelet of the $i$-th face has colour $k$. 
We formulate the optimisation objective of SATNet for Rubik's cube using the relaxation of $A$ to $V \in \R^{k \times n}$ for $n = 6 \times 9 \times 6$. 

This problem has symmetries formalised by $G = \cR_{54} \otimes \cR_6$ on $[n]$.
Here $\cR_{54}$ and $\cR_6$ are permutation groups on $[54]$ and $[6]$, each of which captures the allowed moves of facelets, and the rotations of the whole cube.
If a colour assignment $A$ is solvable, so is the transformation of $A$ by any permutations in $G$.
See Appendix~\ref{sec:symmetries-sudoku-cube} for more information about the symmetries of this problem. 

We generated a basis $\cB(G)_s$ in three steps.
We first created $\cB(\cR_{54})$ and $\cB(\cR_6)$ using the generators of each group~\citep{Finzi21a}.
Next, we combined them using Theorem~\ref{thm:equivariant-basis} to get $\cB(G)$,
which was converted to a symmetric orthogonal basis $\cB(G)_s$ via Gram-Schmidt.
The final result has $48$ basis elements.

We used a dataset of 9K training and 1K test examples generated by randomly applying moves to the solution of the cube.
Each example is a pair $(V_\cI,V_\cO)$ where $V_\cI$ assigns colours to facelets except for two corner facelets,
two edge facelets, and one center facelet, and $V_\cO$ specifies the colours of those five missing facelets.
In the test examples, only $V_\cI$ is used.
We trained SymSATNet, SATNet-Plain, and SATNet-300aux for 100 epochs, under the same configuration as in the Sudoku case.

The results appear in Figure~\ref{fig:accuracy} and Table~\ref{table:accuracy}.
On average, the $100$-epoch training of SymSATNet completed faster in the wall-clock time than those of SATNet-Plain and SATNet-300aux.
Also, it achieved better test accuracies ($66.9\%$) than these alternatives ($55.7\%$ and $56.5\%$).
Note that unlike Sudoku, the test accuracy of SATNet-300aux was only marginally better than that of SATNet-Plain,
which indicates that both suffered from the overfitting issue.
Note also the sharp increase in the training time of SATNet-300aux.
These two indicate that adding auxiliary variables is not so effective for the completion problem for Rubik's cube,
while exploiting symmetries is still useful.

\paragraph{Automatic discovery of symmetries}
\label{sec:limitation}
To test the effectiveness of $\symfind$, we tested whether $\symfind$ could find proper symmetries in Sudoku and Rubik's cube.
We applied $\symfind$ to the parameter $C$ of SATNet-Plain in $T$-th training epoch, where $T = 10$ for Sudoku and $T = 20$ for Rubik's cube.
For Sudoku, $\symfind$ always recovered the full symmetries with $G = (\perm_3 \wr \perm_3) \otimes (\perm_3 \wr \perm_3) \otimes \perm_9$ in our 10 trials.
For Rubik's cube, the group of full symmetries is $\group(G,\sigma)$ for $G = ((\perm_2 \wr \perm_3) \oplus (\perm_3 \wr \perm_8) \oplus (\perm_2 \wr \perm_{12})) \otimes (\perm_2 \wr \perm_3)$.
$\symfind$ recovered all the parts except $\perm_2 \wr \perm_{12}$.
Instead of this, the algorithm found $\perm_{12} \otimes \perm_2$ or $\perm_3 \otimes \perm_8$, or $\perm_4 \otimes \perm_6$ in our 10 trials.
We manually observed that the entries of $C$
in the corresponding part were difficult to be clustered, violating the assumption of $\symfind$.
This illustrates the fundamental limitation of $\symfind$.

\begin{wrapfigure}{r}{0.48\columnwidth}
    \begin{subfigure}{0.48\columnwidth}
    \includegraphics[width=0.95\columnwidth]{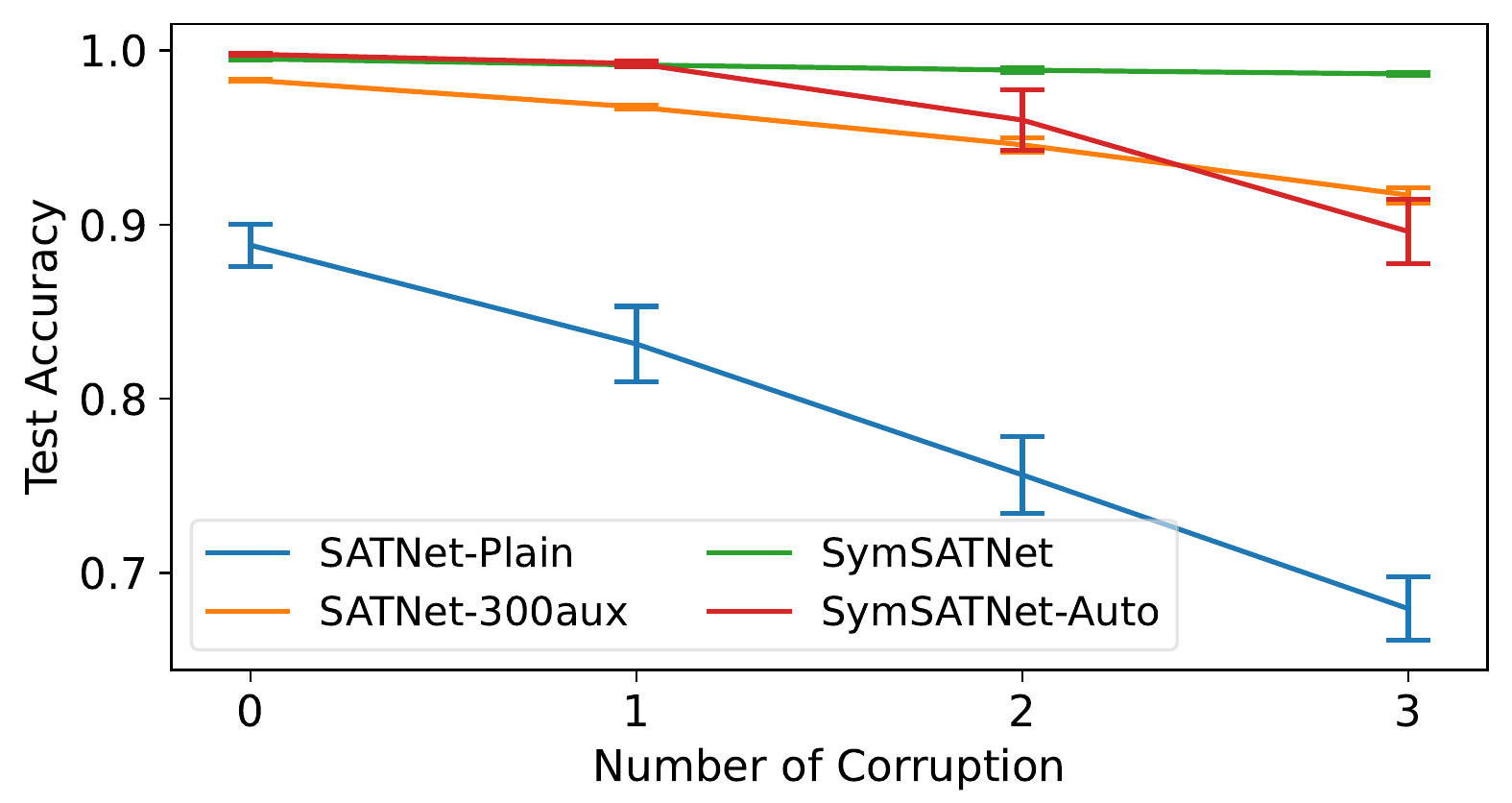}
    \vskip -0.5em
    \subcaption{Sudoku}
    \end{subfigure}
    \begin{subfigure}{0.48\columnwidth}
    \includegraphics[width=0.95\columnwidth]{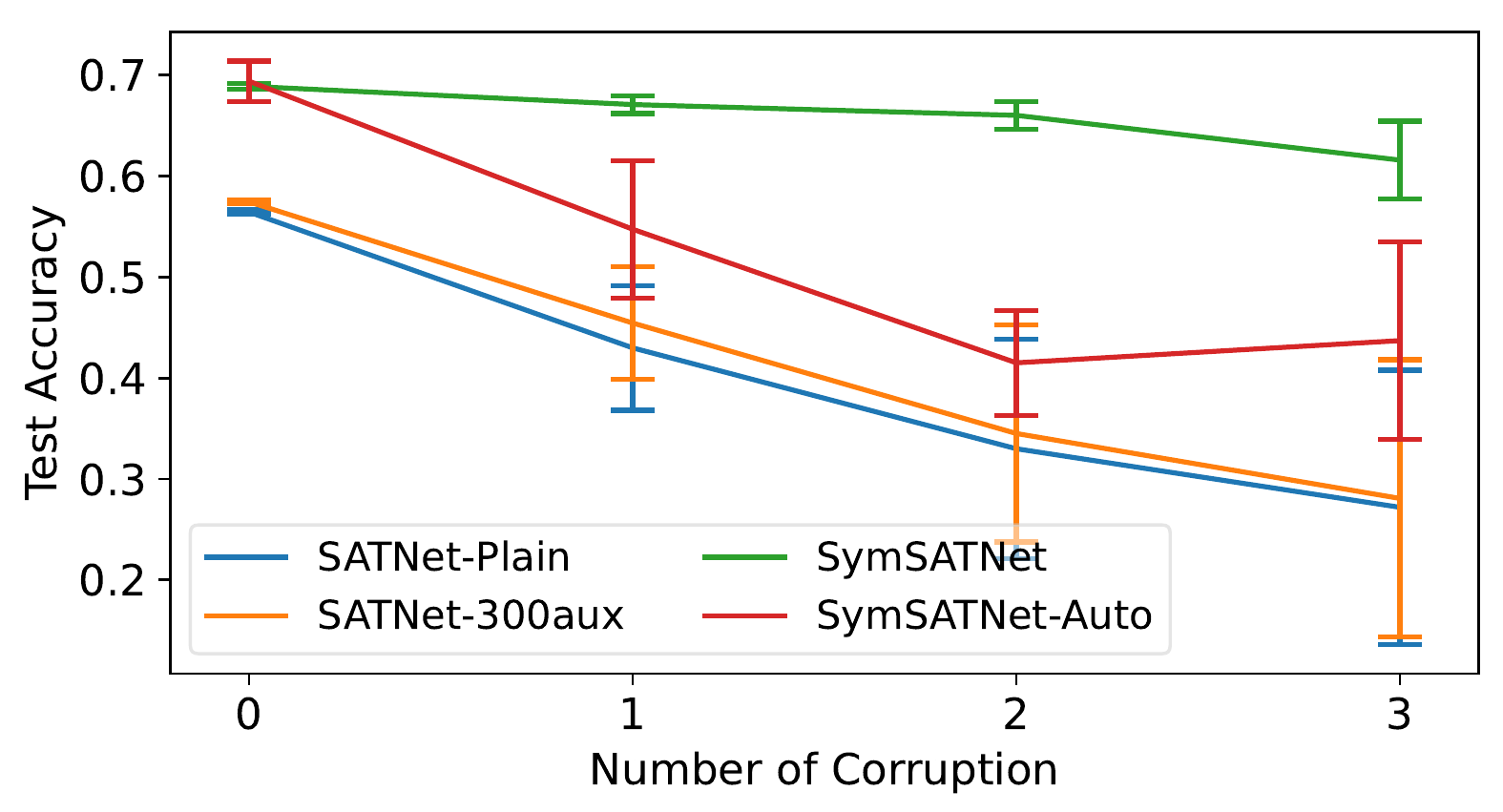}
    \vskip -0.5em
    \subcaption{Rubik's cube}
    \end{subfigure}
    \caption{Best test accuracies for noisy Sudoku and Rubik's cube datasets.}
    \label{fig:noise-robustness}
    \vskip -1em
\end{wrapfigure}

To account for the limitation of $\symfind$, we refined the group $G$ of detected symmetries to a subgroup in an additional validation step, before training SymSATNet with those symmetries.
In the validation step, we checked the usefulness of each part $G_i$ of the expression of $G$ in our grammar.
Concretely, we rewrote $G$ only with the part $G_i$ in concern, where all the other parts of $G$ were masked by the trivial groups $\idg_k$.
After projecting $C$ with the masked groups using Reynolds operator, we measured the improvement of accuracy of SATNet over validation examples.
Finally, we assembled only the parts $G_i$ that led to sufficient improvement.
For example, if $G = (\cyclic_3 \otimes \perm_4) \oplus \perm_5$ is discovered by $\symfind$, we consider the parts $G_1 = \cyclic_3$, $G_2 = \perm_4$, and $G_3 = \perm_5$.
Then, we construct three masked groups $G_1' = (\cyclic_3 \otimes \idg_4) \oplus \idg_5$, $G_2' = (\idg_3 \otimes \perm_4) \oplus \idg_5$, and $G_3' = (\idg_3 \otimes \idg_4) \oplus \perm_5$,
and measure the accuracy of SATNet with $C$ projected by each $G_i'$ over validation examples. 
If $G_1'$ and $G_3'$ show accuracy improvements greater than a threshold, we combine $G_1$ and $G_3$ to form $G' = (\cyclic_3 \otimes \idg_4) \oplus \perm_5$, which is then used to train SymSATNet.

We used 8K training, 1K validation, and 1K test examples to train SymSATNet with symmetries found by $\symfind$ and the validation step.
We denote these runs by SymSATNet-Auto.
We took a group $G$ discovered by $\symfind$ in $T$-th training epoch (with the same $T$ above)
and constructed its subgroup $G'$ via the validation step.
SymSATNet was then trained after being initialised by the projection of $C$ with $G'$.
The other configurations are the same as before.

As shown in Figure~\ref{fig:accuracy} and Table~\ref{table:accuracy},
SymSATNet-Auto performed the best for Sudoku ($99.5\%$) and Rubik's cube ($68.1\%$) better than even SymSATNet.
During the 10 trials with Sudoku, SymSATNet-Auto was always given the full symmetries in Sudoku.
For Rubik's cube, when SymSATNet-Auto was given correct subgroups
(e.g., $((\perm_2 \wr \perm_3) \oplus (\perm_3 \wr \perm_8) \oplus (\idg_4 \otimes \idg_6)) \otimes (\perm_2 \wr \perm_3)$,
$((\perm_2 \wr \perm_3) \oplus (\perm_3 \wr (\perm_2 \wr \perm_4)) \oplus (\idg_8 \otimes \idg_3)) \otimes (\perm_2 \wr \perm_3)$),
then it performed even better than SymSATNet.
In two of the 10 trials, slightly incorrect symmetries were exploited, but it outperformed SATNet-Plain and SATNet-300aux.
These results show the partial symmetries of subgroups derived by the validation step are still useful, even when they are slightly inaccurate.

\paragraph{Robustness to noise}
We tested robustness of SymSATNet and SymSATNet-Auto to noise by noise-corrupted datasets.
We generated noisy Sudoku and Rubik's cube datasets where each training example is corrupted with noise;
it alters the value of a random cell or the colour of a random facelet to a random value other than the original.
We used noisy datasets with 0-3 corrupted instances to measure the test accuracy, and tried 10 runs for each dataset to report the average and $95\%$ confidence interval.
All the other setups are the same as before.
Figure~\ref{fig:noise-robustness} shows the results.
In both problems, SymSATNet was the most robust, showing remarkably consistent accuracies.
SymSATNet-Auto showed comparable robustness to SATNet-300aux in noisy Sudoku, but outperformed the two baselines in noisy Rubik's cube.

Next, to show the robustness of $\symfind$, we applied it to restore permutation groups $G$ from noise-corrupted $G$-equivariant symmetric matrices $M$.
We picked $(G_i, \, \sigma_i)$ for $i \in [4]$ where $\sigma_1$, $\sigma_2$, $\sigma_3$ are random permutations on $[15]$, $[30]$, $[12]$, and $\sigma_4$ is the identity permutation on $[8]$, and 
\begin{align*}
G_1 &= \cyclic_3 \oplus \cyclic_3 \oplus \cyclic_3 \oplus \cyclic_3 \oplus \cyclic_3,
&
G_2 & = \perm_3 \wr \perm_{10},
&
G_3 &= (\perm_3 \wr \perm_3) \oplus \cyclic_3,
&
G_4 & = \perm_2 \otimes \perm_2 \otimes \perm_2.
\end{align*}
Then, we generated $\group(G_i,\sigma_i)$-equivariant symmetric matrices $M_i$
by projecting random matrices with standard normal entries into the space $\cE(\group(G_i,\sigma_i))_s$.
Then, Gaussian noises from $\cN(0,\omega^2)$ for $\omega = 5 \times 10^{-3}$ are added to $M_i$'s entries, and the resulting matrix $M'_i$ is given to $\symfind$. 

\begin{wrapfigure}{r}{0.4\columnwidth}
    \vskip -3.2em
    \begin{minipage}{0.4\columnwidth}
    \begin{table}[H]
    \begin{center}
    \begin{small}
    \begin{sc}
    \caption{Full accuracies and partial accuracies of $\symfind$ for given groups over 1K runs.}
    \vskip 0.5em
    \label{table:accuracy-symfind}
    \begin{tabular}{lrrrr}
    \toprule
    Group & Full Acc. & Partial Acc. \\
    \midrule
    $\bigoplus_{i=1}^5 \cyclic_3$              & 76.6\%     & 79.2\%    \\
    $\perm_3 \wr \perm_{10}$                   & 60.3\%     & 79.9\%    \\
    $(\perm_3 \wr \perm_3) \oplus \cyclic_3$   & 77.5\%     & 87.0\%   \\
    $\perm_2 \otimes \perm_2 \otimes \perm_2$  & 93.5\%     & 94.3\%    \\
    \bottomrule
    \end{tabular}
    \vskip 1em
    \end{sc}
    \end{small}
    \end{center}
    \end{table}
    \end{minipage}
    \vskip -2em
\end{wrapfigure}

For each $(G_i,\sigma_i)$, we repeatedly generated $M'_i$ and ran $\symfind$ on $M'_i$ for 1K times,
and measured the portion where $\symfind$ recovered $(G_i,\sigma_i)$ exactly (full accuracy),
and also the portion of cases where $\symfind$ returned a subgroup of $(G_i,\sigma_i)$ which is not the trivial group $\idg_m$ (partial accuracy).
As Table~\ref{table:accuracy-symfind} shows, the measured full accuracies were in the range of $60.3-93.5\%$, and the partial accuracies were in the range of $79.2-94.3\%$.
These results show the ability of $\symfind$ to recover meaningful and sometimes full symmetries.

\begin{figure}
    \begin{subfigure}{0.48\columnwidth}
    \includegraphics[width=0.95\columnwidth]{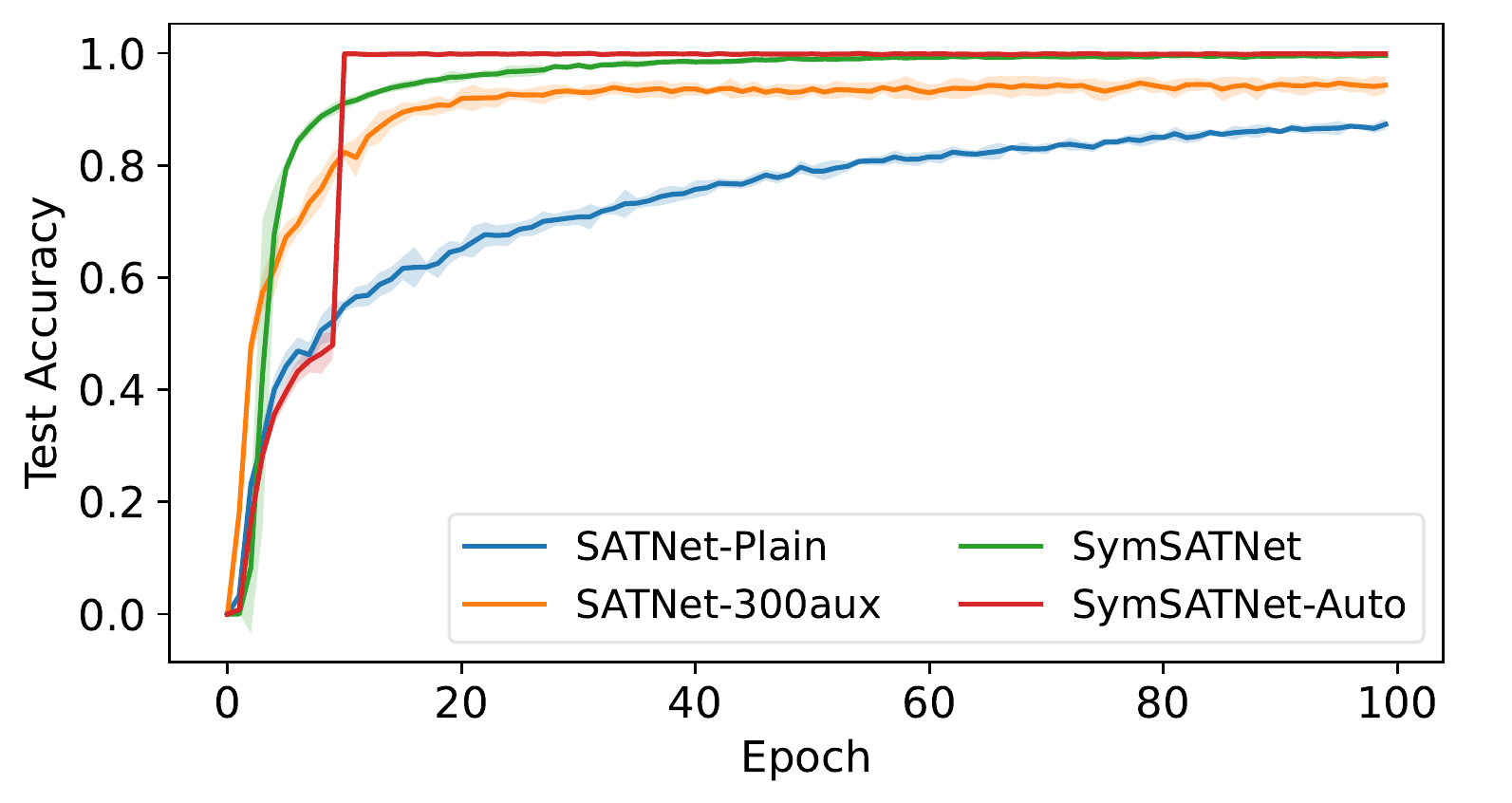}
    \subcaption{Normal Sudoku $\rightarrow$ Hard Sudoku}
    \label{fig:transfer-sudoku-1}
    \end{subfigure}
    \begin{subfigure}{0.48\columnwidth}
    \includegraphics[width=0.94\columnwidth]{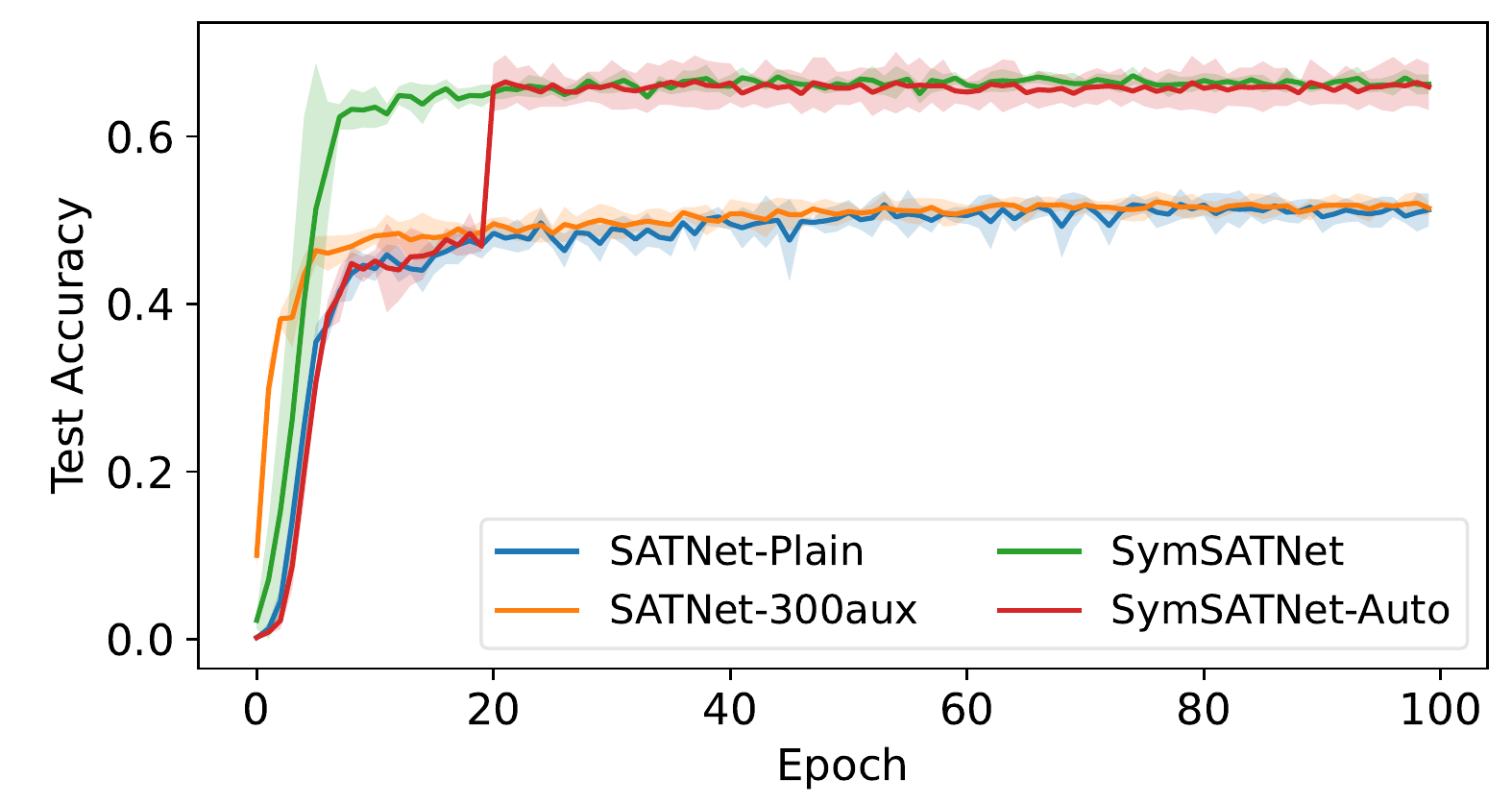}
    \vskip -0.1em
    \subcaption{Normal Rubik's cube $\rightarrow$ Hard Rubik's cube}
    \label{fig:transfer-cube-1}
    \end{subfigure}
    \begin{subfigure}{0.48\columnwidth}
    \includegraphics[width=0.96\columnwidth]{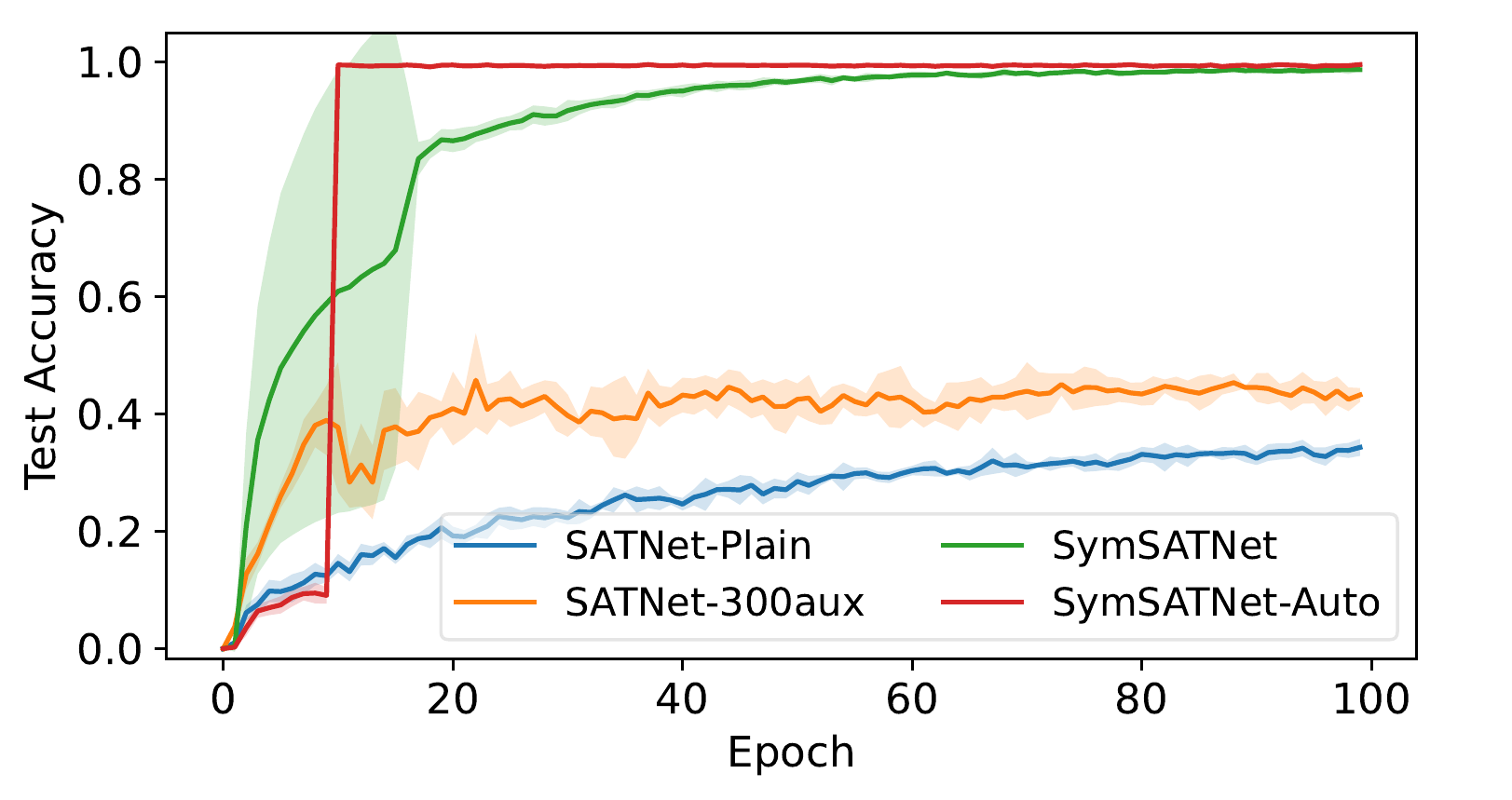}
    \subcaption{Easy Sudoku $\rightarrow$ Hard Sudoku}
    \label{fig:transfer-sudoku-2}
    \end{subfigure}
    \hskip 1.3em
    \begin{subfigure}{0.48\columnwidth}
    \vskip 0.1em
    \includegraphics[width=0.94\columnwidth]{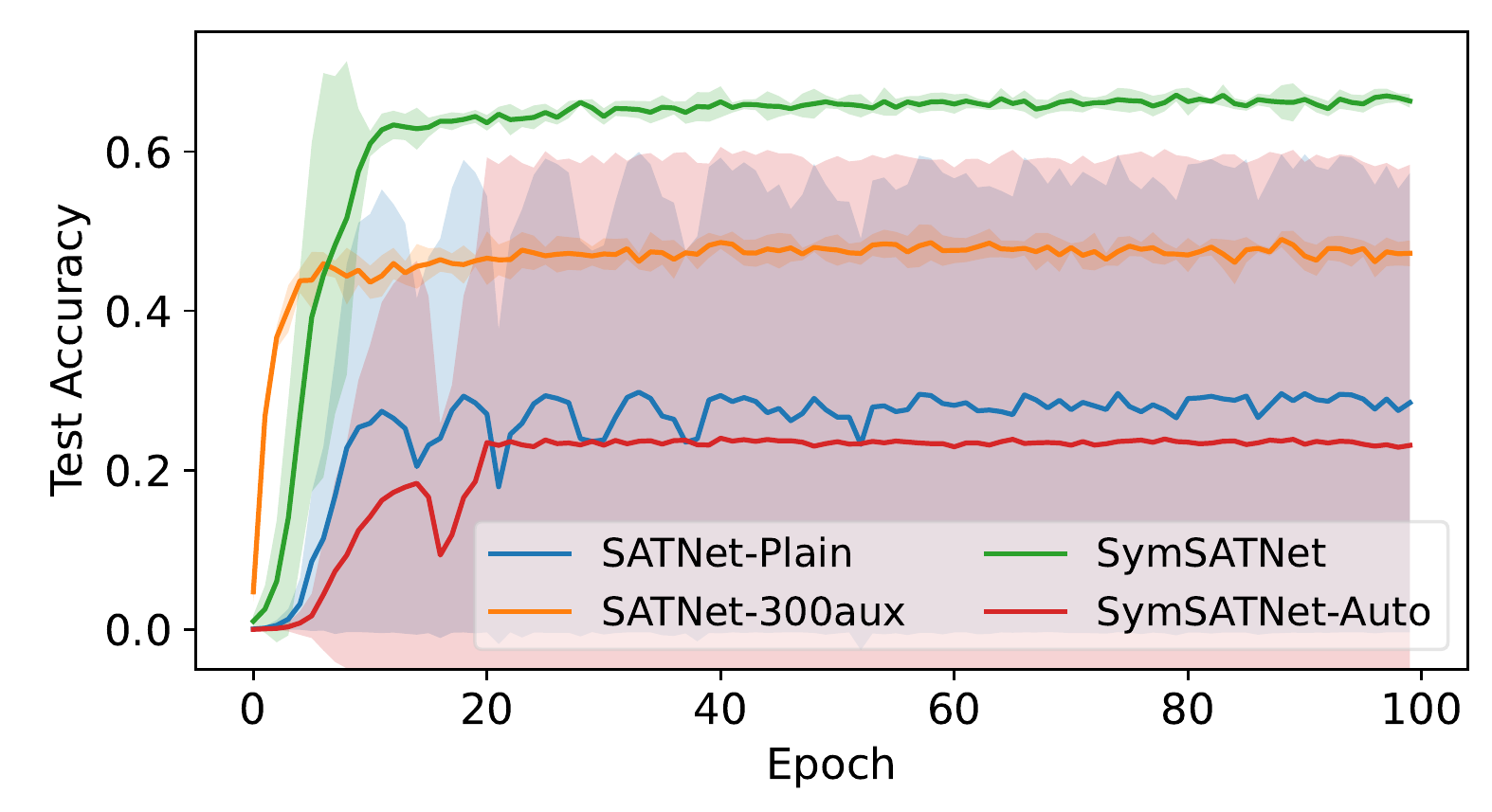}
    \vskip -0.1em
    \subcaption{Easy Rubik's cube $\rightarrow$ Hard Rubik's cube}
    \label{fig:transfer-cube-2}
    \end{subfigure}
    \caption{Transfer learning with various difficulties of training and test examples.
    For both problems, normal and easy examples were used to train, and hard examples were used to test each model.
    }
    \label{fig:transfer}
\end{figure}

\paragraph{Transfer learning}
To test the transferability of SymSATNet, we generated Sudoku and Rubik's cube datasets with varying difficulties, where each dataset consisted of 9K training 
and 1K test examples.  For SymSATNet-Auto, we split the 9K training examples into 8K training and 1K validation examples.
We used three levels of difficulties for Sudoku and Rubik's cube (easy, normal, hard), based on the number of missing cells for Sudoku or missing facelets for Rubik's cube.
The input part of Sudoku examples was generated with 21 (easy), or 31 (normal), or 41 masked cells (hard), and the input part of Rubik's cube examples was generated with 3 (easy), or 4 (normal), or 5 missing facelets (hard).
For both problems, we used the training examples of easy or normal datasets for training, and the test examples of hard datasets for testing. 
We repeated every task in this experiment five times. Here we report the average test accuracies and $95\%$ confidence interval.

Figure~\ref{fig:transfer} shows test accuracies throughout 100 epochs in the four types of transfer learning tasks.
SymSATNet achieved the best result in the whole tasks; it succeeded in solving hard problems after learning from easier examples.
As Figures~\ref{fig:transfer-sudoku-1} and \ref{fig:transfer-sudoku-2} indicate, SymSATNet-Auto exploited the full group symmetries in Sudoku.
For Rubik's cube, Figure~\ref{fig:transfer-cube-1} shows SymSATNet-Auto achieved better performance over the baselines by finding partial symmetries. 
These results show the promise of SymSATNet and SymSATNet-Auto to learn transferable rules even from easier examples.

Note that for the easy Rubik's cube dataset, SymSATNet-Auto showed poor performance (Figure~\ref{fig:transfer-cube-2}). The poor performance comes from the violation of the assumption of $\symfind$; the group symmetries sometimes did not emerge in SATNet in this case. In three out of five trials
with the easy Rubik's cube dataset,
SATNet learnt nothing while producing the $0\%$ test accuracy,
and $\symfind$ returned the trivial group which equated SymSATNet-Auto with SATNet-Plain.
In the remaining two trials, SATNet learnt correct rules, and $\symfind$ and the validation step found correct partial symmetries, which led to the improved performance. These results exhibit the fundamental limitation of SymSATNet-Auto, whose performance strongly depends on the original SATNet.
\section{Conclusion}
\label{sec:conclusion}
We presented SymSATNet, that is capable of exploiting symmetries of the rules or constraints to be learnt by SATNet.
We also described the $\symfind$ algorithm for automatically discovering symmetries from the parameter $C$ of the original SATNet at a fixed training epoch,
which is based on our empirical observation that symmetries emerge during training as duplicated or similar entries of $C$.
Our experimental evaluations with two rule-learning problems related to Sudoku and Rubik's cube show the benefit of SymSATNet and the promise and limitation of $\symfind$. Although components of $\symfind$ are motivated by the theoretical analysis of the space of equivariant matrices, such as Theorem~\ref{def:group-constructors}, $\symfind$ lacks a theoretical justification on its overall performance. One interesting future direction is to fill in this gap by identifying when symmetries emerge during the training of SATNet and proving probabilistic guarantees on when $\symfind$ returns correct group symmetries.

\section*{Acknowledgements} This work was supported by the Engineering Research Center Program through the National Research Foundation of Korea (NRF) funded by the Korean Government MSIT (NRF-2018R1A5A1059921) and also by the Institute for Basic Science (IBS-R029-C1).

\bibliography{bib}  
\clearpage

\appendix
\begin{figure}[t]
    \centering
    \begin{minipage}{.95\columnwidth}
    \centering
    \includegraphics[width=.95\columnwidth]{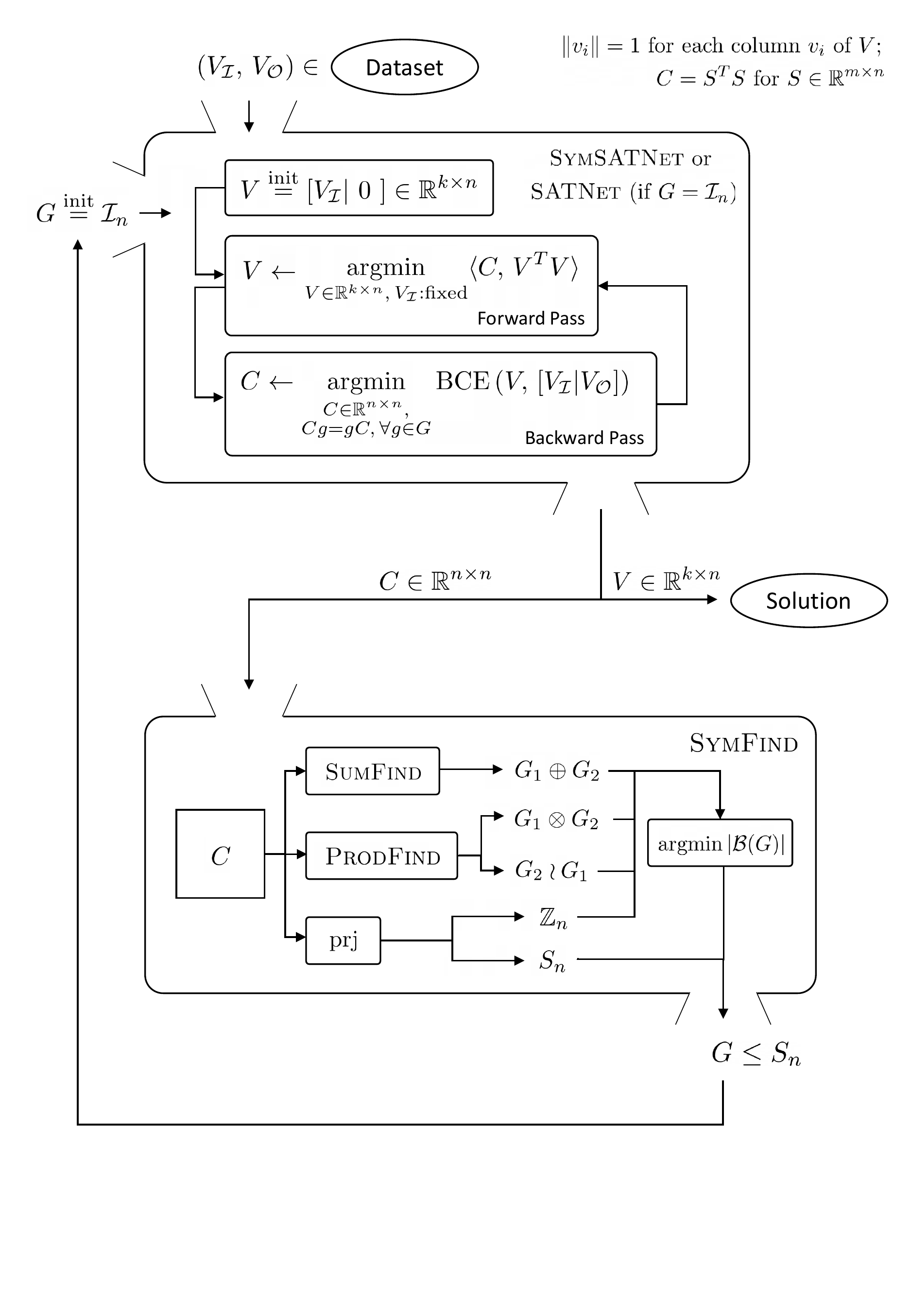}
    \end{minipage}
    \vskip -6em
    \caption{The overall procedure of SATNet, SymSATNet, and $\symfind$.
    The original SATNet takes as input the assignment pairs in the dataset, and learns the parameter matrix $C$ which describes the logical rules to solve.
    Our $\symfind$ algorithm receives the learnt parameter $C$ of SATNet,
    and uses the subroutine algorithms $\sumsplit$ and $\kronsplit$ and the Reynolds operator $\prj$ to find the candidate groups,
    and returns the strongest group $G$ among the candidates.
    Finally, SymSATNet exploits the groups symmetries in logical rules provided by domain experts, or automatically detected group $G$ by $\symfind$.}
    \label{fig:visualisation}
\end{figure}

\clearpage
\section{Notation}
For natural numbers $n_1$ and $n_2$ with $n_1 \leq n_2$, we write $[n_1:n_2]$ for the set $\{n_1, n_1+1,\ldots, n_2\}$.
For an $m\times m$ matrix $M$, $1 \leq a \leq b \leq m$, and $1 \leq c \leq d \leq m$,
we define $M[a:b,c:d]$ by a $(b-a+1) \times (d-c+1)$ matrix $M$:
\begin{align*}
M[a:b, c:d]_{i,j} = M_{a+i-1, c+j-1} \quad \text{for } i \in [b-a+1] \text{ and } j \in [d-c+1].
\end{align*}
Also, if $H$ is a subgroup of $G$, we denote it by $H \le G$.
\section{Proof of Theorem~\ref{thm:equivariant-basis}}
\label{sec:equivariant-basis}
In this section, we prove Theorem~\ref{thm:equivariant-basis}.
Here we prove the direct sum, direct product, and wreath product cases by an argument similar to the previous work~\citep{Wang20}.
For the wreath product case, we slightly generalise the previous results~\citep{Wang20}, which considered only transitive group actions.
We will use the notation from Theorem~\ref{thm:equivariant-basis}.
Let $G$ and $H$ be permutation groups over $[p]$ and $[q]$, respectively.
Also, let $A \in \cB(G)$, $B \in \cB(H)$, $O \in \cO(G)$, and $O', O''\in \cO(H)$.

\begin{claim}
\label{claim:equivariant-basis-1}
The matrices $A \oplus \mathbf{0}_q$, $\mathbf{0}_p \oplus B$, $\mathbf{1}_{O \times (p + O')}$, and $\mathbf{1}_{(p + O) \times O'}$ are $(G \oplus H)$-equivariant.
Also, the matrices of these types for all possible choices of $A$, $B$, $O$, and $O'$ form a linearly independent set.
\end{claim}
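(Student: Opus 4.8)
The plan is to split the claim into two parts: (a) each listed matrix is $(G\oplus H)$-equivariant, and (b) the collection of all such matrices (over all $A\in\cB(G)$, $B\in\cB(H)$, $O\in\cO(G)$, $O'\in\cO(H)$) is linearly independent. Throughout I would write a generic element of $G\oplus H$ as the block-diagonal matrix $g\oplus h$ with $g\in G$, $h\in H$, and exploit the block structure: a $(p+q)\times(p+q)$ matrix $M$ decomposes into blocks $M^{(11)}\in\R^{p\times p}$, $M^{(12)}\in\R^{p\times q}$, $M^{(21)}\in\R^{q\times p}$, $M^{(22)}\in\R^{q\times q}$, and conjugation by $g\oplus h$ acts blockwise: $(g\oplus h)M(g\oplus h)^{-1}$ has blocks $gM^{(11)}g^{-1}$, $gM^{(12)}h^{-1}$, $hM^{(21)}g^{-1}$, $hM^{(22)}h^{-1}$. (Here I use that permutation matrices satisfy $g^T=g^{-1}$.)

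For part (a): $A\oplus\mathbf{0}_q$ has only the $(11)$-block nonzero, and $gAg^{-1}=A$ since $A\in\cB(G)\subseteq\cE(G)$; the other blocks are zero and stay zero, so $A\oplus\mathbf{0}_q$ commutes with every $g\oplus h$. Symmetrically for $\mathbf{0}_p\oplus B$. For $\mathbf{1}_{O\times(p+O')}$, the only nonzero block is the $(12)$-block, which equals $\mathbf{1}_{O\times O'}\in\R^{p\times q}$ (the all-ones matrix on rows $O$, columns $O'$); I need $g\,\mathbf{1}_{O\times O'}\,h^{-1}=\mathbf{1}_{O\times O'}$ for all $g\in G$, $h\in H$. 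This follows because $O$ is a $G$-orbit, so the permutation $g$ maps the row-index set $O$ bijectively onto itself, hence left-multiplication by $g$ permutes the rows of $\mathbf{1}_{O\times O'}$ among themselves and fixes the all-ones pattern; likewise $h^{-1}$ permutes the columns indexed by the $H$-orbit $O'$. The argument for $\mathbf{1}_{(p+O)\times O'}$ (nonzero only in the $(21)$-block) is identical with rows and columns swapped. I'd note the minor cosmetic point that the claim statement writes $\mathbf{1}_{(p+O')\times O}$ in the theorem but $\mathbf{1}_{(p+O)\times O'}$ in the claim; both denote the transpose-type block matrix and the argument is symmetric.

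For part (b), linear independence: suppose a linear combination
\begin{equation*}
\sum_{A} a_A (A\oplus\mathbf{0}_q) + \sum_{B} b_B (\mathbf{0}_p\oplus B) + \sum_{O,O'} c_{O,O'}\,\mathbf{1}_{O\times(p+O')} + \sum_{O,O'} d_{O,O'}\,\mathbf{1}_{(p+O)\times O'} = 0.
\end{equation*}
Reading off blocks: the $(11)$-block gives $\sum_A a_A A = 0$, so all $a_A=0$ since $\cB(G)$ is a basis (in particular linearly independent); the $(22)$-block gives $\sum_B b_B B=0$, so all $b_B=0$; the $(12)$-block gives $\sum_{O,O'} c_{O,O'}\mathbf{1}_{O\times O'}=0$, and since distinct orbit pairs $(O,O')$ have disjoint supports (orbits partition $[p]$ and $[q]$ respectively), each $c_{O,O'}=0$; the $(21)$-block similarly forces $d_{O,O'}=0$. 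I expect the main obstacle to be purely bookkeeping — keeping the orbit indexing, the $p$-shift in $p+O'$, and the row/column roles straight, and making sure the reindexing between the theorem's notation and the claim's notation is stated cleanly — rather than anything conceptually deep; the orbit-invariance of the all-ones blocks is the one substantive point and it is a one-line observation once the block decomposition of conjugation is in place.
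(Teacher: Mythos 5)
Your proposal is correct and follows essentially the same route as the paper: equivariance of the diagonal blocks via $A\in\cE(G)$, $B\in\cE(H)$, equivariance of the all-ones off-diagonal blocks via orbit invariance under $G$ and $H$, and linear independence by observing that the four types occupy disjoint blocks, with independence within each type coming from the basis property (diagonal blocks) or disjointness of orbits (off-diagonal blocks). Your block-conjugation framing and the "set a linear combination to zero and read off each block" phrasing are just a slightly more systematic packaging of the paper's identical argument, and you are right that the $\mathbf{1}_{(p+O)\times O'}$ versus $\mathbf{1}_{(p+O')\times O}$ discrepancy is a typo that does not affect the proof.
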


\begin{proof}
Pick arbitrary group elements $g \in G$ and $h \in H$. 
Let $P_g$ and $P_h$ be the permutation matrices corresponding to $g$ and $h$, respectively. 

We prove the claimed equivariance below:
\begin{align*}
(P_g \oplus P_h) (A \oplus \mathbf{0}_p) &= (P_g A) \oplus \mathbf{0}_p = (A P_g) \oplus \mathbf{0}_p = (A \oplus \mathbf{0}_p) (P_g \oplus P_h), \\
(P_g \oplus P_h) (\mathbf{0}_q \oplus B) &= \mathbf{0}_q \oplus (P_h B) = \mathbf{0}_q \oplus (B P_h) = (\mathbf{0}_q \oplus B) (P_g \oplus P_h), \\
(P_g \oplus P_h) \mathbf{1}_{O \times (p + O')} &= \mathbf{1}_{O \times (p + O')} = \mathbf{1}_{O \times (p + O')} (P_g \oplus P_h), \\
(P_g \oplus P_h) \mathbf{1}_{(p + O') \times O} &= \mathbf{1}_{(p + O') \times O} = \mathbf{1}_{(p + O') \times O} (P_g \oplus P_h).
\end{align*}
The third and fourth lines use the fact that $\mathbf{1}_{O \times (p + O')}$ and $\mathbf{1}_{(p + O') \times O}$ are invariant under the left or right multiplication of the permutation matrix $P_g \oplus P_h$. This holds because the orbits of $G$ are preserved by any permutation in $G$, and those of $H$ are preserved by all permutations in $H$, so that 
\begin{align}
\label{eqn:orbit-invariance}
P_g \mathbf{1}_{O \times O'} = \mathbf{1}_{O \times O'} = \mathbf{1}_{O \times O'} P_h, \qquad
P_h \mathbf{1}_{O' \times O} = \mathbf{1}_{O' \times O} = \mathbf{1}_{O' \times O} P_g.
\end{align}

Next, we show the claimed linear independence by analysing the indices of the nonzero entries of the four types of matrices in the claim.
\begin{compactenum}
\item The matrices of the type $A \oplus \mathbf{0}_q$ for some $A \in \cB(G)$ are linearly independent, since their $A$ parts are linearly independent.
\item The matrices of the type $\mathbf{0}_p \oplus B$ for some $B \in \cB(H)$ are also linearly independent by similar reason.
\item Different matrices of the type $\mathbf{1}_{O \times (p + O')}$ for some $O$ and $O'$ do not share an index of a nonzero entry, since different orbits of a group are disjoint. Thus, the matrices of this type are linearly independent.
\item Different matrices of the type $\mathbf{1}_{(p + O) \times O'}$ for some $O$ and $O'$ do not share an index of a nonzeron entry. Thus, the matrices of this form are linearly independent.
\end{compactenum}
Also, any matrices of the above four types form a linearly independent set because those linear combinations do not share any indices of nonzero entries.
From this and the above reasoning for each of the four types of matrices it follows that the matrices of those four types are linearly independent, as claimed.
\end{proof}

\begin{claim}
\label{claim:equivariant-basis-2}
The matrix $A \otimes B$ is $(G \otimes H)$-equivariant.
Also, the matrices of this shape for all possible choices of $A$ and $B$ form a linearly independent set.
\end{claim}

\begin{proof}
Pick arbitrary group elements $g \in G$ and $h \in H$.  Let $P_g$ and $P_h$ be the permutation matrices corresponding to $g$ and $h$, respectively. Then,
\begin{align*}
(P_g \otimes P_h) (A \otimes B) &= (P_g A) \otimes (P_h B) = (A P_g) \otimes (B P_h) = (A \otimes B) (P_g \otimes P_h).
\end{align*}
For linear independence, we can prove it using the fact $\langle A \otimes B, \, A' \otimes B' \rangle = \langle A, \, A' \rangle \cdot \langle B, \, B' \rangle$.
\end{proof}

\begin{claim}
\label{claim:equivariant-basis-3}
Recall $A \in \cB(G)$ and $B \in \cB(H)$. The matrices $A \otimes \mathbf{1}_{O' \times O''}$ such that $A_{i,i} = 0$ for $i \in [p]$ and $I_O \otimes B$ are $(H \wr G)$-equivariant.
Also, the matrices of these types for all possible choices of $A$, $B$, $O$, $O'$, and $O''$ form a linearly independent set.
\end{claim}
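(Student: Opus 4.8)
The plan is to treat the two assertions — equivariance and linear independence — in turn, and for equivariance to first record a convenient factorisation of wreath-product matrices that turns the claim into two easy commutation checks. First I would verify, directly from the index formula for $\wreath(\vec h, g)$ in Definition~\ref{def:group-constructors}, that $\wreath(\vec h, g) = D(\vec h)\,(P_g \otimes I_q)$, where $D(\vec h) = \bigoplus_{i=1}^{p} P_{h_i}$ is the block-diagonal permutation matrix built from the $P_{h_i}$; the left placement of $D(\vec h)$ matters, since the formula pins the inner permutation to $h_i$ with $i$ the \emph{row} block. Since every element of $H \wr G$ has this form, equivariance of a matrix $M$ reduces to showing that $M$ commutes both with $P_g \otimes I_q$ for all $g \in G$ and with $D(\vec h)$ for all $\vec h \in H^p$.

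Commutation with $P_g \otimes I_q$ follows from the mixed-product rule for Kronecker products together with two facts already exploited in the proof of Claim~\ref{claim:equivariant-basis-1}: $P_g A = A P_g$ since $A \in \cE(G)$, and $P_g I_O = I_O P_g$ since the $G$-orbit $O$ is $G$-invariant; hence $(P_g \otimes I_q)(A \otimes \mathbf{1}_{O' \times O''}) = (A \otimes \mathbf{1}_{O' \times O''})(P_g \otimes I_q)$ and $(P_g \otimes I_q)(I_O \otimes B) = (I_O \otimes B)(P_g \otimes I_q)$. Commutation with $D(\vec h)$ I would check blockwise, viewing a $pq \times pq$ matrix as a $p \times p$ array of $q \times q$ blocks. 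The $(i,i')$ block of $A \otimes \mathbf{1}_{O' \times O''}$ is $A_{i,i'}\mathbf{1}_{O' \times O''}$, and by the same reasoning as \eqref{eqn:orbit-invariance} applied to the $H$-orbits $O', O''$ on $[q]$ we have $P_{h_i}\mathbf{1}_{O' \times O''} = \mathbf{1}_{O' \times O''} = \mathbf{1}_{O' \times O''}P_{h_i}$, so $D(\vec h)$ fixes $A \otimes \mathbf{1}_{O' \times O''}$ from either side and in particular commutes with it; the $(i,i)$ block of $I_O \otimes B$ is $\ind_{\{i \in O\}}B$ and off-diagonal blocks vanish, so both $D(\vec h)(I_O \otimes B)$ and $(I_O \otimes B)D(\vec h)$ have $(i,i)$ block $\ind_{\{i \in O\}}P_{h_i}B = \ind_{\{i \in O\}}BP_{h_i}$, using $B \in \cE(H)$. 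Chaining the two commutations gives $\wreath(\vec h,g)\,M = M\,\wreath(\vec h, g)$ for $M$ of either type.

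For linear independence, the point is that the hypothesis $A_{i,i} = 0$ makes the two families live on complementary parts of the block layout: the $(i,i)$ block of $A \otimes \mathbf{1}_{O' \times O''}$ is $A_{i,i}\mathbf{1}_{O' \times O''} = \mathbf{0}_q$, so this family contributes only to off-diagonal blocks, while $I_O \otimes B$ contributes only to diagonal blocks. Hence in a vanishing combination $\sum c_{A,O',O''}(A \otimes \mathbf{1}_{O' \times O''}) + \sum d_{O,B}(I_O \otimes B) = 0$ the two sums must vanish separately. Reading the $(i,i)$ block and writing $O_i$ for the unique $G$-orbit containing $i$ yields $\sum_B d_{O_i,B}B = 0$, so all $d_{O_i,B} = 0$ by linear independence of $\cB(H)$; letting $i$ range over $[p]$ covers every orbit, so all $d_{O,B} = 0$. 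Reading an off-diagonal block $(i,i')$ yields $\sum_{O',O''}\big(\sum_A c_{A,O',O''}A_{i,i'}\big)\mathbf{1}_{O' \times O''} = 0$, and since distinct orbit pairs give $\mathbf{1}_{O' \times O''}$ with disjoint supports, $\sum_A c_{A,O',O''}A_{i,i'} = 0$ for every $i \neq i'$; together with $A_{i,i} = 0$ this gives $\sum_A c_{A,O',O''}A = \mathbf{0}_p$, so all $c_{A,O',O''} = 0$ by linear independence of the zero-diagonal elements of $\cB(G)$.

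I expect the main obstacle to be bookkeeping rather than anything conceptual: establishing the factorisation $\wreath(\vec h,g) = D(\vec h)(P_g \otimes I_q)$ with the correct handedness, and consistently applying the orbit-invariance identities to $H$-orbits on $[q]$ inside the blocks versus $G$-orbits on $[p]$ across the blocks. The one genuinely load-bearing hypothesis is $A_{i,i} = 0$; without it a matrix such as $I_p \otimes \mathbf{1}_{O' \times O'}$ would equal $\sum_{O \in \cO(G)} I_O \otimes \mathbf{1}_{O' \times O'}$ (using $\sum_O I_O = I_p$ and $\mathbf{1}_{O' \times O'} \in \cE(H)$), so the two families would overlap and independence would fail — I would make sure the write-up pinpoints exactly where this hypothesis is used.
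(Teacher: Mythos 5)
Your proof is correct, and it is worth contrasting with the paper's on both halves of the claim. For equivariance, the underlying ingredients are identical -- the orbit-invariance identities $P_h\mathbf{1}_{O'\times O''}=\mathbf{1}_{O'\times O''}=\mathbf{1}_{O'\times O''}P_h$, the equivariance $P_gA=AP_g$ and $P_hB=BP_h$, and the invariance of $G$-orbits -- but you package them via the factorisation $\wreath(\vec h,g)=D(\vec h)(P_g\otimes I_q)$ and check commutation with each generator separately, whereas the paper works directly with the single expansion $\wreath(\vec h,g)=\sum_i \mathbf{1}_{\{i\}\times\{g(i)\}}\otimes P_{h_i}$ and pushes $A\otimes\mathbf{1}_{O'\times O''}$ (resp.\ $I_O\otimes B$) through it in one chain of equalities; your handedness check on $D(\vec h)$ is the same bookkeeping the paper does when it moves $P_{h_i}$ past $\mathbf{1}_{O'\times O''}$, so this half is essentially the same argument reorganised. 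The linear-independence half is where you genuinely diverge: the paper invokes $\langle A\otimes B, A'\otimes B'\rangle=\langle A,A'\rangle\langle B,B'\rangle$ to claim pairwise orthogonality, which for same-type pairs silently requires $\cB(G)$ and $\cB(H)$ to be orthogonal bases (only the cross-type orthogonality comes for free from $\langle A,I_O\rangle=\sum_{i\in O}A_{i,i}=0$), while your block-support argument -- diagonal blocks isolate the $I_O\otimes B$ family, off-diagonal blocks together with the disjoint supports of the $\mathbf{1}_{O'\times O''}$ isolate the $A\otimes\mathbf{1}_{O'\times O''}$ family -- needs only linear independence of $\cB(G)$ and $\cB(H)$ and so is both more self-contained and slightly more general. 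Your closing remark pinpointing $A_{i,i}=0$ as the hypothesis preventing overlap between the two families (via $\sum_{O}I_O\otimes\mathbf{1}_{O'\times O'}=I_p\otimes\mathbf{1}_{O'\times O'}$) correctly identifies the same cancellation the paper uses in the cross-term inner product.
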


\begin{proof}
Pick arbitrary group elements $g \in G$ and $\vec{h} = (h_1, \, \ldots, \, h_p) \in H^p$.
Let $P_{g^{-1}}$ and $P_{h_i}$ be the permutation matrices corresponding to $g^{-1}$ and $h_i$ for $i \in [p]$.
We can express $\wreath(\vec{h}, g)$ and $P_{g^{-1}}$ by
\begin{align*}
\wreath(\vec{h}, g) = \sum_{i=1}^p \mathbf{1}_{\{i\} \times \{g(i)\}} \otimes P_{h_i}, \qquad P_{g^{-1}} = \sum_{i=1}^p \mathbf{1}_{\{i\} \times \{g(i)\}}.
\end{align*}
Then, we prove the following equivariance:
\begin{align}
\wreath(\vec{h}, g) \left( A \otimes \mathbf{1}_{O' \times O''} \right)
\nonumber
&= \left( \sum_{i=1}^p \mathbf{1}_{\{i\} \times \{g(i)\}} \otimes P_{h_i} \right) \left( A \otimes \mathbf{1}_{O' \times O''} \right) \\
\nonumber
&= \sum_{i=1}^p \left( \mathbf{1}_{\{i\} \times \{g(i)\}} A \otimes P_{h_i} \mathbf{1}_{O' \times O''} \right) \\
\label{eqn:wreath-basis-1}
&= \left( \sum_{i=1}^p \mathbf{1}_{\{i\} \times \{g(i)\}} \right) A \otimes \mathbf{1}_{O' \times O''} \\
\nonumber
&= P_{g^{-1}} A \otimes \mathbf{1}_{O' \times O''} = A P_{g^{-1}} \otimes \mathbf{1}_{O' \times O''} \\
\nonumber
&= A \left( \sum_{i=1}^p \mathbf{1}_{\{i\} \times \{g(i)\}} \right) \otimes \mathbf{1}_{O' \times O''} \\
\label{eqn:wreath-basis-2}
&= \sum_{i=1}^p \left( A \, \mathbf{1}_{\{i\} \times \{g(i)\}} \otimes \mathbf{1}_{O' \times O''} P_{h_i} \right) \\
\nonumber
&= \left( A \otimes \mathbf{1}_{O' \times O''} \right) \left( \sum_{i=1}^p \mathbf{1}_{\{i\} \times \{g(i)\}} \otimes P_{h_i} \right) \\
\nonumber
&= \left( A \otimes \mathbf{1}_{O' \times O''} \right) \wreath(\vec{h}, g),
\end{align}
\begin{align}
\wreath(\vec{h}, g) \left( I_O \otimes B \right)
\nonumber
&= \left( \sum_{i=1}^p \mathbf{1}_{\{i\} \times \{g(i)\}} \otimes P_{h_i} \right) \left( I_O \otimes B \right) \\
\nonumber
&= \sum_{i=1}^p \left( \mathbf{1}_{\{i\} \times \{g(i)\}} I_O \otimes P_{h_i} B \right) \\
\nonumber
&= \sum_{i=1}^p \left( \mathbf{1}_{\{i\} \times (\{g(i)\} \cap O)} \otimes P_{h_i} B \right) \\
\label{eqn:wreath-basis-3}
&= \sum_{i=1}^p \left( \mathbf{1}_{(\{i\} \cap O) \times \{g(i)\}} \otimes B P_{h_i} \right) \\
\nonumber
&= \sum_{i=1}^p \left( I_O \mathbf{1}_{\{i\} \times \{g(i)\}} \otimes B P_{h_i} \right) \\
\nonumber
&= \left( I_O \otimes B \right) \left( \sum_{i=1}^p \mathbf{1}_{\{i\} \times \{g(i)\}} \otimes P_{h_i} \right) \\
\nonumber
&= \left( I_O \otimes B \right) \wreath(\vec{h}, g).
\end{align}
Here, \eqref{eqn:wreath-basis-1} and \eqref{eqn:wreath-basis-2} use the same argument in \eqref{eqn:orbit-invariance},
and \eqref{eqn:wreath-basis-3} uses the fact that $i \in O \iff g(i) \in O$ for any $g \in G$ and $O \in \cO(G)$.
For linear independence, we again use the fact $\langle A \otimes B, \, A' \otimes B' \rangle = \langle A, \, A' \rangle \cdot \langle B, \, B' \rangle$ to show the orthogonality of all possible matrices of types $A \otimes \mathbf{1}_{O' \times O''}$ and $I_O \otimes B$.
\end{proof}

\begin{claim}
\label{claim:equivariant-basis-4}
The number of basis elements of $G \oplus H$, $G \otimes H$, and $H \wr G$ can be computed as follows:
\begin{align*}
|\cB(G \oplus H)| &= |\cB(G)| + |\cB(H)| + 2 |\cO(G)| |\cO(H)|, \\
|\cB(G \otimes H)| &= |\cB(G)| |\cB(H)|, \\
|\cB(H \wr G)| &= |\cO(G)||\cB(H)| + \left( |\cB(G)| - |\cO(G)| \right) |\cO(H)|^2.
\end{align*}
\end{claim}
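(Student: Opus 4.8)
The plan is to derive each of the three identities simply by reading off the explicit basis of $\cE(G \oplus H)$, $\cE(G \otimes H)$, and $\cE(H \wr G)$ given in Theorem~\ref{thm:equivariant-basis}, counting the number of index tuples that parametrise each constituent family of basis matrices, and invoking the linear independence already established in Claims~\ref{claim:equivariant-basis-1}--\ref{claim:equivariant-basis-3} to guarantee that distinct index tuples give distinct matrices (so that the families are genuinely disjoint and no double counting occurs).

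First I would handle $G \oplus H$: its basis is the union of the four families $\{A \oplus \mathbf{0}_q : A \in \cB(G)\}$, $\{\mathbf{0}_p \oplus B : B \in \cB(H)\}$, $\{\mathbf{1}_{O \times (p+O')} : O \in \cO(G),\, O' \in \cO(H)\}$ and $\{\mathbf{1}_{(p+O') \times O} : O \in \cO(G),\, O' \in \cO(H)\}$, of sizes $|\cB(G)|$, $|\cB(H)|$, $|\cO(G)|\,|\cO(H)|$ and $|\cO(G)|\,|\cO(H)|$ respectively, which sum to $|\cB(G)| + |\cB(H)| + 2|\cO(G)|\,|\cO(H)|$. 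Next, for $G \otimes H$ there is a single family $\{A \otimes B : A \in \cB(G),\, B \in \cB(H)\}$, and the map $(A,B) \mapsto A \otimes B$ is injective on this index set by the identity $\langle A \otimes B, A' \otimes B'\rangle = \langle A, A'\rangle\langle B, B'\rangle$ together with $A, B \neq 0$ (exactly the argument of Claim~\ref{claim:equivariant-basis-2}), so the count is $|\cB(G)|\,|\cB(H)|$.

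The wreath-product case $H \wr G$ needs one extra ingredient. Its basis splits into $\{I_O \otimes B : O \in \cO(G),\, B \in \cB(H)\}$, of size $|\cO(G)|\,|\cB(H)|$, and $\{A \otimes \mathbf{1}_{O' \times O''}\}$ where $A$ ranges over the elements of $\cB(G)$ with $A_{i,i} = 0$ for all $i \in [p]$ and $O', O''$ range over $\cO(H)$. So the remaining step is to count the off-diagonal members of $\cB(G)$. I would argue that $\cE(G)$ decomposes as the direct sum of its subspace of diagonal matrices and its subspace of matrices with identically zero diagonal --- because $M \mapsto \diag(M)$ maps $\cE(G)$ into itself, and a $G$-equivariant diagonal matrix is precisely one whose diagonal is constant on each $G$-orbit of $[p]$, so the diagonal subspace has dimension $|\cO(G)|$ --- and then that the bases in play are adapted to this splitting, i.e.\ each basis matrix is either diagonal or has zero diagonal. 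Hence exactly $|\cB(G)| - |\cO(G)|$ elements of $\cB(G)$ are off-diagonal, the first family has $(|\cB(G)| - |\cO(G)|)\,|\cO(H)|^2$ members, and Claim~\ref{claim:equivariant-basis-3} again rules out overlaps, yielding $|\cO(G)|\,|\cB(H)| + (|\cB(G)| - |\cO(G)|)\,|\cO(H)|^2$.

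The main obstacle is exactly this last point: for a completely arbitrary basis of $\cE(G)$, the number of basis vectors with identically zero diagonal is \emph{not} determined by $|\cB(G)|$ and $|\cO(G)|$ alone, so the stated formula is meaningful only for bases adapted to the diagonal/zero-diagonal splitting. I would close the gap by checking that the canonical leaf bases --- the orbit bases of $\idg_m$, $\cyclic_m$, $\perm_m$ on $[m] \times [m]$, each element of which is the indicator of a single group orbit and therefore either supported on the diagonal or entirely off it --- have this adaptedness, and that each of the three combination rules $\oplus$, $\otimes$, $\wr$ preserves it; the only nonroutine observation needed is that a Kronecker product $A \otimes B$ has identically zero diagonal as soon as at least one of $A$ and $B$ does, and that $\mathbf{1}_{O \times (p+O')}$ and $\mathbf{1}_{(p+O') \times O}$ are supported off the diagonal.
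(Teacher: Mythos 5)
Your proposal counts the elements of the candidate sets constructed in Theorem~\ref{thm:equivariant-basis}, but that is only the bookkeeping half of what Claim~\ref{claim:equivariant-basis-4} has to deliver, and it is not the half the paper's argument rests on. In the proof of Theorem~\ref{thm:equivariant-basis}, Claims~\ref{claim:equivariant-basis-1}--\ref{claim:equivariant-basis-3} give linear independence and containment in the equivariant space; Claim~\ref{claim:equivariant-basis-4} is then invoked to conclude that the candidate sets \emph{span}, ``since their number coincides with the dimension of the space.'' For that you need an independent computation of $\dim\cE(G\oplus H)$, $\dim\cE(G\otimes H)$, and $\dim\cE(H\wr G)$ --- an upper bound on the ambient dimension that matches the number of linearly independent candidates. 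Enumerating index tuples of the candidate set only reproduces the lower bound already implicit in Claims~\ref{claim:equivariant-basis-1}--\ref{claim:equivariant-basis-3}; nothing in your argument rules out $\cE(G\oplus H)$ being strictly larger than the span of the candidates. The paper closes exactly this gap with a character-theoretic computation: the Reynolds projector $\phi_{\mathbf{G}} = \frac{1}{|\mathbf{G}|}\sum_{g} P_g$ is a projection onto the fixed subspace, so the dimension of its image equals $\trace(\phi_{\mathbf{G}})$, which Burnside's lemma identifies with $|\cO(\mathbf{G})|$; applying this to the group $\{g\otimes g\}$ acting on vectorised matrices gives $\dim\cE(\mathbf{G}_0) = \frac{1}{|\mathbf{G}_0|}\sum_{g}\trace(P_g)^2$, and evaluating this sum for $G\oplus H$, $G\otimes H$, and $H\wr G$ (via $\trace(P_{g\oplus h})=\trace(P_g)+\trace(P_h)$, $\trace(P_{g\otimes h})=\trace(P_g)\trace(P_h)$, and the analogous expansion of $\trace(\wreath(\vec{h},g))$) produces precisely the three formulas of the claim. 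This dimension computation is the missing idea in your proposal.

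Your side remarks are not wrong: the observation that for an arbitrary basis $\cB(G)$ the number of elements with identically zero diagonal need not equal $|\cB(G)|-|\cO(G)|$ is a legitimate subtlety in reading the wreath-product candidate set, and adaptedness of the recursively constructed bases does need to hold for the count of that set to come out right. But none of this substitutes for computing the dimension of the equivariant space itself; without that, the claim cannot play its role in the proof of the theorem.
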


\begin{proof}
First, we derive some general fact on a finite permutation group.
Let $\mathbf{G}$ be a permutation group on $[r]$ for some $r$.
Consider the action of $\mathbf{G}$ on the space of $r$-dimensional vectors $\cX = \R^r$ by the row permutation action $g \cdot v = P_g v$ for any $g \in \mathbf{G}$.
Define $\cF(\mathbf{G}) = \{v \in \cX : g \cdot v = v, \, \forall g \in \mathbf{G}\}$.
Consider the following linear operator 
\begin{align*}
\phi_\mathbf{G} & : \cX \to \cX,
&
\phi_\mathbf{G}(v) & = \dfrac{1}{|\mathbf{G}|} \sum\limits_{g \in \mathbf{G}} g \cdot v,
\end{align*}
which can also be represented as the following matrix:
\[
\phi_\mathbf{G} = \dfrac{1}{|\mathbf{G}|} \sum\limits_{g \in \mathbf{G}} P_g.
\]
The operator $\phi_\mathbf{G}$ is a projection map, since
\begin{align*}
\phi_\mathbf{G}(\phi_\mathbf{G}(v))
&= \dfrac{1}{|\mathbf{G}|^2} \sum\limits_{g_1 \in \mathbf{G}} \sum\limits_{g_2 \in \mathbf{G}} g_1 \cdot (g_2 \cdot v) \\
&= \dfrac{1}{|\mathbf{G}|^2} \sum\limits_{g_1 \in \mathbf{G}} \sum\limits_{g \in \mathbf{G}} g \cdot v \\
&= \dfrac{1}{|\mathbf{G}|} \sum\limits_{g \in G} g \cdot v.
\end{align*}
Also, 
we have $\image(\phi_\mathbf{G}) = \cF(\mathbf{G})$ where $\image(f)$ is the image of the function $f$.
Now, by noting that a linear projection map has only eigenvalues $0$ and $1$, the dimension of $\cF(\mathbf{G})$ can be computed by the sum of eigenvalues of $\phi_\mathbf{G}$, i.e., $\trace(\phi_\mathbf{G})$.
Also, using Burnside's lemma, we can count the orbits of $\mathbf{G}$ (acting on $[r]$) by
\begin{align*}
|\cO(\mathbf{G})| &= \dfrac{1}{|\mathbf{G}|} \sum\limits_{g \in \mathbf{G}} \trace(P_g) \\
&= \trace \left( \dfrac{1}{|\mathbf{G}|} \sum\limits_{g \in \mathbf{G}} P_g \right) \\
&= \trace (\phi_\mathbf{G}).
\end{align*}
Putting together, we get 
\begin{equation}
\label{eqn:dim-tr-orbit}
\dim \cF(\mathbf{G}) = \trace(\phi_\mathbf{G}) = |\cO(\mathbf{G})|.
\end{equation}

Next, we instantiate what we have just shown above for the following case that $\mathbf{G}$ is the following group:
\[
\mathbf{G}_0^{\otimes 2} = \{g \otimes g : g \in \mathbf{G}_0\}
\]
for some permutation group $\mathbf{G}_0$ on $[n]$. Note that $\mathbf{G}$ is a permutation group on $[n^2]$. As explained above, $\mathbf{G}_0^{\otimes 2}$ can act on the space of $n^2$-dimensional vectors $\R^{n^2}$. By vectorizing matrices, we can express the space $\cE(\mathbf{G}_0)$ of $\mathbf{G}_0$-equivariant linear maps on $\R^n$
by
\begin{align*}
\ovec(\cE(\mathbf{G}_0)) &= \left\{ \ovec(M) : P_g M P_g^T = M, \, \forall g \in \mathbf{G}_0 \right\} \\
&= \left\{ \ovec(M) : (P_g \otimes P_g) \ovec(M) = \ovec(M), \, \forall g \in \mathbf{G}_0 \right\} \\
&= \left\{ \ovec(M) : (g \otimes g) \cdot \ovec(M) = \ovec(M), \, \forall g \in \mathbf{G}_0 \right\} \\
&= \cF(\mathbf{G}_0^{\otimes 2}).
\end{align*}
Thus, $|\cB(\mathbf{G}_0)| = \dim \ovec(\cE(\mathbf{G}_0)) = \dim \cF(\mathbf{G}_0^{\otimes 2})$. We now calculate the dimension of
$\cF(\mathbf{G}_0^{\otimes 2})$ using the relationship in \eqref{eqn:dim-tr-orbit}:
\begin{align}
\nonumber
|\cB(\mathbf{G}_0)| &= \dim \cF(\mathbf{G}_0^{\otimes 2})
= \trace(\phi_{\mathbf{G}_0^{\otimes 2}}) \\ 
&= \dfrac{1}{|\mathbf{G}_0^{\otimes 2}|} \sum\limits_{{g \otimes g} \in \mathbf{G}_0^{\otimes 2}} \trace(P_{g \otimes g})
= \frac{1}{|\mathbf{G}_0|} \sum\limits_{g \in \mathbf{G}_0} \trace(P_g)^2. 
\label{eqn:basis-trace}
\end{align}

Finally, we complete the proof by calculating \eqref{eqn:basis-trace} for $\mathbf{G}_0 = G \oplus H$, $\mathbf{G}_0 =  G \otimes H$, and $\mathbf{G}_0 = H \wr G$:
\pagebreak
\begin{align*}
|\cB(G \oplus H)| &= \dfrac{1}{|G \oplus H|} \sum\limits_{g \oplus h \in G \oplus H} \trace(P_{g \oplus h})^2 \\
&= \dfrac{1}{|G||H|} \sum\limits_{g \in G} \sum\limits_{h \in H} (\trace(P_g) + \trace(P_h))^2 \\
&= \dfrac{1}{|G|} \sum\limits_{g \in G} \trace(P_g)^2 + \dfrac{1}{|H|} \sum\limits_{h \in H} \trace(P_h)^2 
+ \dfrac{2}{|G||H|} \left( \sum\limits_{g \in G} \sum\limits_{h \in H} \trace(P_g) \trace(P_h) \right) \\
&= |\cB(G)| + |\cB(H)| + 2 |\cO(G)| |\cO(H)|, 
\\
\\
|\cB(G \otimes H)| &= \dfrac{1}{|G \otimes H|} \sum\limits_{g \otimes h \in G \otimes H} \trace(P_{g \otimes h})^2 \\
&= \dfrac{1}{|G||H|} \sum\limits_{g \in G} \sum\limits_{h \in H} \trace(P_g)^2 \trace(P_h)^2 \\
&= |\cB(G)| |\cB(H)|,
\\
\\
|\cB(H \wr G)| &= \dfrac{1}{|H \wr G|} \sum\limits_{\wreath(\vec{h}, g) \in H \wr G} \trace(\wreath(\vec{h}, g))^2 \\
&= \dfrac{1}{|G||H|^p} \sum\limits_{g \in G, (h_1, \ldots, h_p) \in H^p} \trace \left( \sum\limits_{i=1}^p \mathbf{1}_{\{i\} \times \{g(i)\}} \otimes P_{h_i} \right)^2 \\
&= \dfrac{1}{|G||H|^p} \sum\limits_{g \in G, (h_1, \ldots, h_p) \in H^p} \left( \sum\limits_{i=1}^p \trace \left( \mathbf{1}_{\{i\} \times \{g(i)\}} \otimes P_{h_i} \right) \right)^2 \\
&= \dfrac{1}{|G||H|^p} \sum\limits_{g \in G, (h_1, \ldots, h_p) \in H^p} \left( \sum\limits_{i=1}^p \ind\nolimits_{\{i = g(i)\}} \trace(P_{h_i}) \right)^2 \\
&= \dfrac{1}{|G||H|^p} \sum\limits_{g \in G, (h_1, \ldots, h_p) \in H^p} \left( \sum\limits_{i=1}^p \ind\nolimits_{\{i = g(i)\}} \trace(P_{h_i})^2 + \sum\limits_{i \neq j} \ind\nolimits_{\{i = g(i), j = g(j)\}} \trace(P_{h_i}) \trace(P_{h_j}) \right) \\
&= \dfrac{1}{|G||H|^p} \sum\limits_{g \in G} \left\{ \sum\limits_{i=1}^p \ind\nolimits_{\{i = g(i)\}} \left( \sum\limits_{(h_1, \ldots, h_p) \in H^p} \trace(P_{h_i})^2 \right) \right. \\
&\hspace{22mm} + \left. \sum\limits_{i \neq j} \ind\nolimits_{\{i = g(i), j = g(j)\}} \left( \sum\limits_{(h_1, \ldots, h_p) \in H^p} \trace(P_{h_i}) \trace(P_{h_j}) \right) \right\} \\
&= \dfrac{1}{|G||H|^p} \sum\limits_{g \in G} \left\{ \left( \sum\limits_{i=1}^p \ind\nolimits_{\{i = g(i)\}} \right) |H|^p |\cB(H)| + \left( \sum\limits_{i \neq j} \ind\nolimits_{\{i = g(i), j = g(j)\}} \right) |H|^p |\cO(H)|^2 \right\} \\
&= \dfrac{1}{|G|} \sum\limits_{g \in G} \left\{ \left( \sum\limits_{i=1}^p \ind\nolimits_{\{i = g(i)\}} \right) |\cB(H)| + \left( \sum\limits_{i=1}^p \ind\nolimits_{\{i=g(i)\}} \sum\limits_{j=1}^p \ind\nolimits_{\{j=g(j)\}} - \sum\limits_{i=1}^p \ind\nolimits_{\{i=g(i)\}} \right) |\cO(H)|^2 \right\} \\
&= \dfrac{1}{|G|} \sum\limits_{g \in G} \left( \trace(P_g) |\cB(H)| + \left( \trace(P_g)^2 - \trace(P_g) \right) |\cO(H)|^2 \right) \\
&= |\cO(G)||\cB(H)| + \left( |\cB(G)| - |\cO(G)| \right) |\cO(H)|^2.
\end{align*}
\end{proof}

\paragraph{Proof of Theorem~\ref{thm:equivariant-basis}}
Claims~\ref{claim:equivariant-basis-1}, \ref{claim:equivariant-basis-2}, and \ref{claim:equivariant-basis-3} show that
each set of matrices on the right-hand side of the equations in Theorem~\ref{thm:equivariant-basis} consists of linearly independent matrices,
and it is contained in the corresponding space of equivariant linear maps.
Furthermore, Claim~\ref{claim:equivariant-basis-4} shows that the matrices in the set span the whole of the space of equivariant linear maps,
since their number coincides with the dimension of the space.
Hence, these three claims complete the proof of the theorem.

\section{Proofs of Theorem~\ref{thm:objective-invariance-equivariance}
and Lemma~\ref{lem:constraint-invariance-equivariance}} 
\label{sec:appendix-symsatnet}

\paragraph{Proof of Theorem~\ref{thm:objective-invariance-equivariance}}
Assume that $C$ is $G$-equivariant. Pick any $g \in G$. Then, $C$ preserves the standard action of $g$ on $\R^n$ via its permutation matrix. Thus, $C g = g C$, where $g$ denotes the permutation matrix $P_g$. This equation is equivalent to
\begin{equation}
\label{eqn:objective-invariance-equivariance1}
g^T C g = C,
\end{equation}
since $g^T = g^{-1}$. Using this equality, we show that the optimisation objective is invariant with respect to $g$'s action:
\begin{align*}
\langle C, (V g^{-1})^T (V g^{-1})\rangle & 
{} = \langle C, (g^{-1})^T (V^TV) g^{-1} \rangle
\\
&
{} = \trace(C^T g (V^TV) g^T))
\\
& 
{} = \trace((g^T C^T g) (V^TV))
\\
&
{} = \trace((g^T C g)^T (V^TV))
\\ 
&
{} = \langle g^T C g, V^TV \rangle
\\
&
{} = \langle C, V^TV \rangle.
\end{align*}
Here $\trace$ is the usual trace operator on matrices, the third equality uses the cyclic property of $\trace$,
and the last equality uses \eqref{eqn:objective-invariance-equivariance1}.

We move on to the proof of the converse. Assume $k = n$ and the equation \eqref{eqn:objective-invariance-equivariance0}
holds for every $V \in \R^{k\times n}$ and $g \in G$. Pick any $g \in G$. Then, for all $V \in \R^{k \times n}$, 
\begin{align*}
\langle C, V^T V \rangle 
& 
{} = \langle C, (V g^{-1})^T (V g^{-1})\rangle 
{} =  \langle g^T C g, V^TV \rangle.
\end{align*}
The first equality follows from \eqref{eqn:objective-invariance-equivariance0}, and the second from our derivation above. But the space of matrices of the form $V^T V$ is precisely that of $n\times n$ symmetric positive semi-definite matrices, which is known to contain $n(n+1)/2$ independent elements. Since $n(n+1)/2$ is precisely the dimension of the space of $n \times n$ symmetric matrices, the above equality for every $V \in R^{k\times n}$ implies that $C = g^T C g$, that is, $C g = g C$. This gives the $G$-equivariance of $C$, as desired.

\paragraph{Proof of Lemma~\ref{lem:constraint-invariance-equivariance}}
Pick arbitrary $V \in R^{k \times n}$ and $g \in G$. Let $v_1,\ldots,v_n$ be the columns of $V$ in order, and
$v'_1,\ldots,v'_n$ be those of $V g^{-1}$ in order.
Then, $\Vert v_i \Vert = 1$ for every $i \in [n]$ if and only if
$\Vert v_{g^{-1}(i)} \Vert = 1$ for all $i \in [n]$. The latter is equivalent to the statement that $\Vert v'_i \Vert = 1$ for every $i \in [n]$.

\section{Subroutines of $\symfind$}
\label{sec:symfind-appendix}
In this section, we describe the subroutines $\sumsplit$ and $\kronsplit$ of our $\symfind$ algorithm.

\subsection{Detection of direct sum}
We recall the distributivity law of group constructors \citep{Dummit99, Majumdar12}:
\begin{align*}
G \otimes (H \oplus H') &= (G \otimes H) \oplus (G \otimes H'),
&
(G \oplus G') \otimes H &= (G \otimes H) \oplus (G' \otimes H), \\
G \wr (H \oplus H') &= (G \wr H) \oplus (G \wr H'), 
&
(G \oplus G') \wr H &= (G \wr H) \oplus (G' \wr H).
\end{align*}
By this law and the fact that $\idg_m = \bigoplus\limits_{i = 1}^{m} \cyclic_1$ for all $m$, any permutation group $G$ definable in our grammar can be expressed as a direct sum
\begin{align}
\label{eqn:fully-factorised-sum-product-representation}
G = \bigoplus_{i = 1}^{N} {H_{(i, 1)} \diamond \cdots \diamond H_{(i, n_i)}}
\end{align}
where each $H_{(i, j)} \in \{ \cyclic_{m(i, j)}, \, \perm_{m(i, j)} \}$ and $\diamond$ is either $\otimes$ or $\wr$. Let 
\[
H_i = {H_{(i, 1)} \diamond \cdots \diamond H_{(i, n_i)}},
\]
and $p_i$ be the natural number such that $H_i$ is a permutation group on $[p_i]$. Since both $\cyclic_m$ and $\perm_m$ have only one orbit for all $m$, each $H_i$ also has only one orbit.
Hence, by Theorem~\ref{thm:equivariant-basis},
\[
\cB(G) = \bigcup_{i,j \in [N]} B_{ij},
\]
where
\begin{align*}
B_{ii} & = \{ \mathbf{0}_{p_1 + \cdots + p_{i-1}} \oplus A \oplus \mathbf{0}_{p_{i+1} + \cdots + p_N} : A \in \cB(H_i) \} 
\text{ for } i \in [N], \text{ and}
\\
B_{ij} & = \{ \mathbf{1}_{(p_1 + \cdots + p_{i-1} + [p_i]) \times (p_1 + \cdots + p_{j-1} + [p_j])} \} \text{ for } i,j \in [N] \text{ with } i \neq j.
\end{align*}
This means that all off-diagonal blocks of any $G$-equivariant matrices are constant matrices, just like the matrix shown in (b) of Figure~\ref{fig:block-pattern}.

Given a matrix $M \in \R^{m\times m}$, the subroutine $\sumsplit$ aims at finding a permutation $\sigma : [m] \to [m]$ and the split $m = p_1 + \ldots + p_N$ for $p_1,\ldots,p_N > 0$ such that $P_{\sigma}^TMP_{\sigma}$ is approximately $G$-equivariant for some permutation group $G$ on $[m]$ and this group $G$ has the form in \eqref{eqn:fully-factorised-sum-product-representation} where
$H_i = {H_{(i, 1)} \diamond \cdots \diamond H_{(i, n_i)}}$ is a permutation
group on $[p_i]$. The result of the subroutine is $(G,\sigma)$.

\begin{figure}
    \centering
    \begin{subfigure}{0.48\columnwidth}
    \centering
    \includegraphics[width=0.6\linewidth]{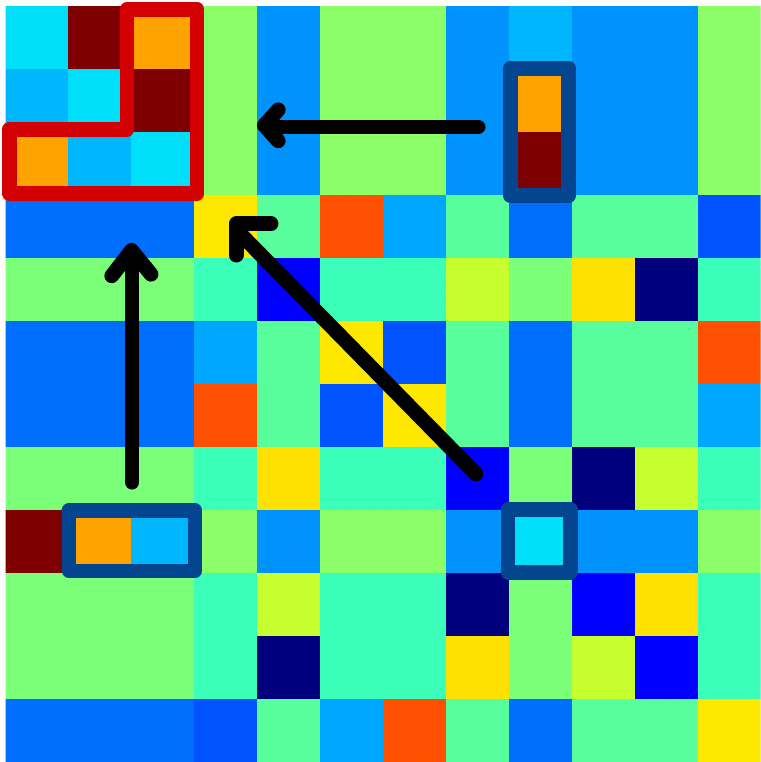}
    \caption{Original matrix $M$}
    \end{subfigure}
    \begin{subfigure}{0.48\columnwidth}
    \centering
    \includegraphics[width=0.6\linewidth]{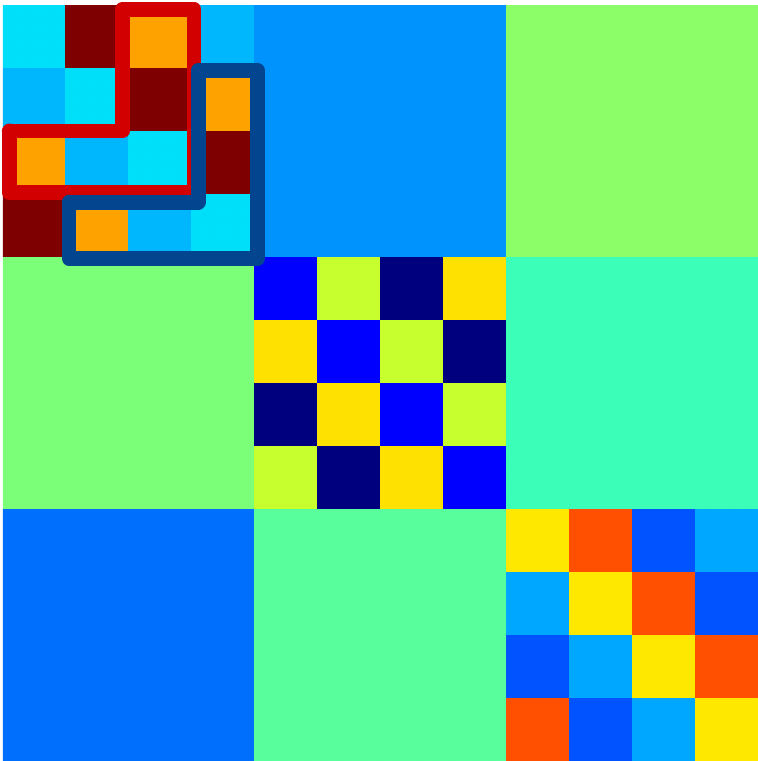}
    \caption{Permuted matrix $M_{\sigma^{-1}} = P_\sigma^TM P_\sigma$}
    \end{subfigure}
    \caption{Block pattern which commonly appears in the equivariant matrices under a direct-sum group. In this example, $M_{\sigma^{-1}} \in \cE(\cyclic_4 \oplus \cyclic_4 \oplus \cyclic_4)$.
    The red and blue "L-shaped" clusters have the same values.}
    \label{fig:block-pattern}
\end{figure}

$\sumsplit$ achieves its aim using two key observations.
The first is an important implication of the $G$-equivariance condition on $M_{\sigma^{-1}} = P_{\sigma}^TM P_\sigma$ that we mentioned above:
when the split $m = p_1 + \ldots + p_N$ is used to group entries of $M_{\sigma^{-1}}$ into blocks,
the $G$-equivalence of $M_{\sigma^{-1}}$ implies that the off-diagonal blocks of $M_{\sigma^{-1}}$ are constant matrices.
This implication suggests one approach:
for every permutation $\sigma$ on $[m]$, construct the matrix $M_{\sigma^{-1}}$,
and find a split $m = p_1 + \ldots + p_N$ for some $N$ such that off-diagonal blocks of $M_{\sigma^{-1}}$ from this split are constant matrices.
Note that this approach is not a feasible option in practice, though, since there are exponentially many permutations $\sigma$.
The second observation suggests a way to overcome this exponential blow-up issue of the approach.
It is that when $M_{\sigma^{-1}}$ is $G$-equivariant, in many cases its diagonal blocks are cyclic in the sense that the $(i,j)$-th entry of a block is the same as the $(i+1,j+1)$-th entry of the block.
This pattern can be used to search for a good permutation $\sigma$ efficiently. 

$\sumsplit$ works as follows. It first finds a permutation
$\sigma: [m] \to [m]$ and a split $m = p_1 + \ldots + p_N$ using the process that we will explain shortly. Figure~\ref{fig:block-pattern} illustrates the input matrix $M$, and its permuted $M_{\sigma^{-1}}$ that has nine blocks induced by the split of $m = p_1 + p_2 + p_3$. Then, $\sumsplit$ 
calls $\symfind$ recursively on each diagonal block of $M_{\sigma^{-1}}$ and gets, for every $i \in [N]$,
\[
(H_i, \sigma_i) = \symfind(M_{\sigma^{-1}}[p_{i-1}+1:p_i, p_{i-1}+1:p_i])
\]
where $p_0 = 0$. Finally, $\sumsplit$ returns
\begin{align*}
\left( \bigoplus_{i=1}^N H_i, \, \sigma \circ \left( \bigoplus_{i=1}^N \sigma_i \right) \right)
\end{align*}
where 
\begin{align*}
\sigma_i \oplus \sigma_j & : [p_i + p_j] \to [p_i + p_j], \\
(\sigma_i \oplus \sigma_j)(a) & =
\begin{cases}
    \sigma_i (a) & \text{if} \ a \in [p_i], \\
    \sigma_j (a - p_i) + p_i & \text{otherwise}.
\end{cases}
\end{align*}


We now explain the first part of $\sumsplit$ that finds a permutation $\sigma$ and a split $m = p_1+\ldots + p_N$. For simplicity, we ignore the issue of noise, and present a simpler version that uses equality instead of approximate equality (i.e., being close enough). We start by
initialising $\sigma = \sigma_I$, the identity permutation, and
$M_{\sigma^{-1}} = M$.  Then, we pick an index $k_1$ from the set $ K = \{ k_1 \in [2:m] : (M_{\sigma^{-1}})_{k_1, k_1} = (M_{\sigma^{-1}})_{1,1} \} $.
Then, we locate $(M_{\sigma^{-1}})_{1,k_1}$ in the $(1, \, 2)$-th entry by swapping the indices $2$ and $k_1$, which can be done by updating
\begin{align*}
M_{\sigma^{-1}} &\gets P_{\sigma_{(2,k_1)}} M_{\sigma^{-1}} P_{\sigma_{(2,k_1)}}^T, \\
\sigma &\gets \sigma \circ \sigma_{(2,k_1)}^{-1}.
\end{align*}
Here  $\sigma_{(a,b)}(a) = b$, $\sigma_{(a,b)}(b) = a$, and $\sigma_{(a,b)}(c) = c$ for $c \neq a, \, b$.
Next, we find a new index $k_2 \in [3:m]$ which is suitable to swap with the index $3$ in the updated $M_{\sigma^{-1}}$.
As Figure~\ref{fig:block-pattern} shows, we need to find $k_2$ such that $(M_{\sigma^{-1}})_{2,k_2} = (M_{\sigma^{-1}})_{1,2}$, $(M_{\sigma^{-1}})_{k_2,k_2} = (M_{\sigma^{-1}})_{2,2}$, and $(M_{\sigma^{-1}})_{k_2,2} = (M_{\sigma^{-1}})_{2,1}$.
Once we find such $k_2$, we swap the indices $3$ and $k_2$ by updating
\begin{align*}
M_{\sigma^{-1}} &\gets P_{\sigma_{(3,k_2)}} M_{\sigma^{-1}} P_{\sigma_{(3,k_2)}}^T, \\
\sigma &\gets \sigma \circ \sigma_{(3,k_2)}^{-1}.
\end{align*}
We repeat this process to find an index $k_l \in [l+1:m]$ and the "L-shaped" entries which preserve the cyclic pattern.
If we cannot find such $k_l$, we stop the process, and check
(i) whether all rows in $(M_{\sigma^{-1}})[1:l, l+1:m]$ are the same
and also (ii) whether all columns in $(M_{\sigma^{-1}})[l+1:n, 1:l]$ are the same. If the answers for both (i) and (ii) are yes, we move on to find the next diagonal block in $M_{\sigma^{-1}}$ in the same manner.

If (i) or (ii) has a negative answer, we go back to the step right before choosing $k_1$ from $K$ and resetting $\sigma$ and $M$ to the values at that step. 
Then, we repeat the above process with a new choice of $k_1$ from $K$.
If no choice of $k_1 \in K$ leads to the situation where both (i) and (ii)
have positive answers, then we conclude that $p_1 = 1$, and move on to find the next diagonal block of $M$ starting from the index $2$.

\subsection{Detection of direct product}

Let $M$ be a $m\times m$ matrix. Assume that we are given $p,q \in \N$ with
$pq = m$. If for some permutation groups $H \le \perm_p$ and $K \le \perm_q$,
 the matrix $M$ lies in $\cE(H \otimes K)$ (i.e., $M$ is $H\otimes K$-equivariant), then $M$ can be represented as a linear combination of Kronecker products:
\begin{align}
\label{eqn:kronecker-linear-combination}
M = \sum\limits_{i = 1}^\gamma (X_i \otimes Y_i)
\end{align}
where $X_i \in \cE(H)$, $Y_i \in \cE(K)$, and $\gamma$ is a natural number. 

The representation suggests the following strategy of finding a group $G$ of symmetries of $M$ that is the direct product of two permutation groups on $[p]$ and $[q]$. First, we express $M$ as a linear combination of Kronecker products $X_i \otimes Y_i$ for $X_i \in \R^{p\times p}$ and $Y_i \in \R^{q \times q}$. Second, we pick a pair $X_i$ and $Y_i$ in the linear combination. Third, we call $\symfind$ recursively on $X_i$ and $Y_i$ to get group-permutation pairs $(H,\sigma_H)$ and $(K,\sigma_K)$. Finally, we return $(H \otimes K, \sigma_H \otimes \sigma_K)$ where 
\begin{gather*}
\sigma_H \otimes \sigma_K : [m] \to [m],
\\
(\sigma_H \otimes \sigma_K)((a-1)q + b) = (\sigma_H(a)-1)q + \sigma_K(b)
\quad
\text{for all $a \in [p]$ and $b \in [q]$}.
\end{gather*}

Our $\kronsplit$ is the implementation of the strategy just described. The only non-trivial steps of the strategy are the first two, namely, to find the representation of $M$ in \eqref{eqn:kronecker-linear-combination}, and to pick good $X_i$ and $Y_i$.
$\kronsplit$ also has to account for the fact that the representation in \eqref{eqn:kronecker-linear-combination} holds only approximately at best in our context.

For the representation finding, $\kronsplit$ uses the technique~\citep{Van93} that solves the following optimisation problem for given $\gamma$:
\begin{align}
\label{eqn:kronecker-approx}
\argmin_{X_i \in \R^{p\times p}, Y_i \in \R^{q \times q}} \left\lVert M - \sum\limits_{i = 1}^\gamma (X_i \otimes Y_i) \right\rVert_F.
\end{align}
The technique performs the singular value decomposition of the rearranged matrix $\hat{M} \in \R^{p^2 \times q^2}$ of $M$ defined by
\begin{align}
\label{eqn:c-hat}
\hat{M}_{(i'-1)p+i, (j'-1)q+j} = M_{(i-1)q+j, (i'-1)q+j'}
\end{align}
for $i, i' \in [p]$ and $j, j' \in [q]$. To understand why it does so, note that the optimisation problem in \eqref{eqn:kronecker-approx} is equivalent to the following problem:
\begin{align}
\label{eqn:kronecker-approx-svd}
\argmin_{X_i \in \R^{p\times p}, Y_i \in \R^{q \times q}} \left\Vert \hat{M} - \sum\limits_{i = 1}^\gamma \ovec(X_i) \ovec(Y_i)^T \right\Vert_F.
\end{align}
This new optimisation problem can be solved using SVD. Concretely,
if we have the SVD of $M$, namely,
\begin{align}
\label{eqn:rank1-decomposition}
\hat{M} = \sum\limits_{i = 1}^{\rank(\hat{M})} s_i (u_i v_i^T)
\end{align}
where $\rank(\hat{M})$ is the rank of $\hat{M}$, $s_i$ is the $i$-th largest singular value, and $u_i$ and $v_i$ are the corresponding left and right singular vectors,
we can solve the minimisation problem~\eqref{eqn:kronecker-approx-svd} by reshaping each term of \eqref{eqn:rank1-decomposition} corresponding to the $\gamma$ largest singular values, i.e.,
\begin{align*}
\hat{M} \approx \sum\limits_{i = 1}^{\gamma} \ovec(X_i) \ovec(Y_i)^T \quad \text{where } \ovec(X_i) = \sqrt{s_i} u_i \text{ and } \ovec(Y_i) = \sqrt{s_i} v_i
\text{ for every $i \in [\gamma]$}.
\end{align*}

The first step of $\kronsplit$ is to apply the above technique~\citep{Van93}. $\kronsplit$ performs the SVD of the matrix $\hat{M}$ and gets 
$s_i,u_i,v_i$ in \eqref{eqn:rank1-decomposition} where $s_1 \geq s_2 \geq \ldots \geq s_l$ for some $l$. Then, it sets $L = s_1 / 5$, and picks $\gamma$ so that all $i$'s with $s_i \geq L$ are included when we approximate $\hat{M}$.

The second step of $\kronsplit$ is to pick $X_i$ and $Y_i$. $\kronsplit$ simply picks $X_1$ and $Y_1$ that correspond to matrices (reshaped from $u_1$ and $v_1$) for the largest singular value $s_1$. We empirically observed that the matrices $X_1$ and $Y_1$ are most informative about the symmetries of $M$, and are least polluted by noise from the training of SATNet. 

The third and fourth steps of $\kronsplit$ are precisely the last two steps of the strategy that we described above.

Our description of $\symfind$ in the main text says that $\kronsplit$ gets called for all the divisors $p$ of $n$ with $q$ being $n/p$, since we do not know the best divisior $p$ a priori, and that each invocation generates a new candidate in the candidate list $\cA$. However, in practice, we ignore some divisors that violates our predefined criteria. In particular, we set an upper bound $U$ on $\gamma$, and use a divisor $p$ only when $\kronsplit$ for $p$ picks $\gamma$ with $\gamma \le U$, i.e., not too many terms are considered in the linear combination of Kronecker products. This criterion can be justified by the following lemma:
\begin{lemma}
Let $G = H \otimes K$ be the direct product of permutation groups $H$ and $K$ on $[p]$ and $[q]$, respectively.
Consider a $G$-equivariant matrix $M \in \cE(G)$ and its rearranged version $\hat{M}$ in \eqref{eqn:c-hat}.
Then, $\rank(\hat{M}) \le \min \left( |\cB(H)|, \, |\cB(K)| \right)$.
\end{lemma}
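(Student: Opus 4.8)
The plan is to combine the representation theorem for $\cE(H\otimes K)$ with the simple observation that the rearrangement $M\mapsto\hat M$ turns a Kronecker product into a rank-one outer product; this makes $\hat M$ a short sum of rank-one matrices, which immediately bounds its rank.

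First I would invoke Theorem~\ref{thm:equivariant-basis}: since $G=H\otimes K$, the set $\cB(H\otimes K)=\{A\otimes B: A\in\cB(H),\,B\in\cB(K)\}$ is a basis of $\cE(G)$. Hence the given $M\in\cE(G)$ admits an expansion
\begin{equation*}
M=\sum_{A\in\cB(H)}\ \sum_{B\in\cB(K)} c_{A,B}\,(A\otimes B)
\end{equation*}
for suitable scalars $c_{A,B}\in\R$. Note the number of terms in the outer sum is $|\cB(H)|$ and in the inner sum $|\cB(K)|$.

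Next I would record two elementary properties of the rearrangement operator $R$ given by $R(M)=\hat M$ as in \eqref{eqn:c-hat}: (i) $R$ is linear in $M$, which is immediate from its entrywise formula; and (ii) $R(X\otimes Y)=\ovec(X)\,\ovec(Y)^T$ for all $X\in\R^{p\times p}$ and $Y\in\R^{q\times q}$. Property (ii) is a short index check: with the conventions of \eqref{eqn:c-hat} and of the Kronecker product, $R(X\otimes Y)_{(i'-1)p+i,\,(j'-1)q+j}=(X\otimes Y)_{(i-1)q+j,\,(i'-1)q+j'}=X_{i,i'}Y_{j,j'}$, while $\ovec(X)_{(i'-1)p+i}=X_{i,i'}$ and $\ovec(Y)_{(j'-1)q+j}=Y_{j,j'}$ because $\ovec$ stacks columns.

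Finally I would combine these. Applying (i) and (ii) to the basis expansion of $M$ gives
\begin{equation*}
\hat M=\sum_{A\in\cB(H)}\ \sum_{B\in\cB(K)} c_{A,B}\,\ovec(A)\,\ovec(B)^T
=\sum_{A\in\cB(H)}\ovec(A)\Bigl(\sum_{B\in\cB(K)} c_{A,B}\,\ovec(B)\Bigr)^{\!T},
\end{equation*}
which writes $\hat M$ as a sum of $|\cB(H)|$ matrices of rank at most $1$, so $\rank(\hat M)\le|\cB(H)|$. Grouping the double sum by $B$ instead yields in the same way $\rank(\hat M)\le|\cB(K)|$, and the bound $\rank(\hat M)\le\min(|\cB(H)|,|\cB(K)|)$ follows. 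I do not expect a genuine obstacle here; the only step needing care is matching the index conventions of \eqref{eqn:c-hat} with the column-stacking convention of $\ovec$ in (ii), and everything else is bookkeeping.
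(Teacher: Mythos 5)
Your proposal is correct and follows essentially the same route as the paper's proof: expand $M$ in the Kronecker-product basis of $\cE(H\otimes K)$, observe that the rearrangement sends $A\otimes B$ to the rank-one matrix $\ovec(A)\ovec(B)^T$, group the double sum in the two possible ways, and bound the rank by subadditivity. Your explicit index check of $R(X\otimes Y)=\ovec(X)\ovec(Y)^T$ is a welcome bit of extra care that the paper leaves implicit.
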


\begin{proof}
Let $M \in \cE(G)$. The matrix $M$ can be expressed as a linear combination of the basis elements in $\cB(G)$:
\begin{align}
\nonumber
M &= \sum\limits_{i = 1}^{|\cB(H)|} \sum\limits_{j = 1}^{|\cB(K)|} \alpha_{ij} (A_i \otimes B_j) \\
\label{eqn:linear-combination-1}
&= \sum\limits_{j = 1}^{|\cB(K)|} \left( \sum\limits_{i = 1}^{|\cB(H)|} \alpha_{ij} A_i \right) \otimes B_j \\
\label{eqn:linear-combination-2}
&= \sum\limits_{i = 1}^{|\cB(H)|} A_i \otimes \left( \sum\limits_{j = 1}^{|\cB(K)|} \alpha_{ij} B_j \right)
\end{align}
where $A_i \in \cB(H)$ and $B_j \in \cB(K)$.
We can also rewrite \eqref{eqn:linear-combination-1} and \eqref{eqn:linear-combination-2} with equations for $\hat{M}$:
\begin{align*}
\begin{aligned}
\hat{M} &= \sum\limits_{j = 1}^{|\cB(K)|} \ovec(A_j') \ovec(B_j)^T 
= \sum\limits_{i = 1}^{|\cB(H)|} \ovec(A_i) \ovec(B_i')^T \\
\\
&
\qquad\qquad\qquad
\text{where } A_j' = \left( \sum\limits_{i = 1}^{|\cB(H)|} \alpha_{ij} A_i \right) \text{ and }
B_i' = \left( \sum\limits_{j = 1}^{|\cB(K)|} \alpha_{ij} B_j \right).
\end{aligned}
\end{align*}
Since the summands $\ovec(A_j') \ovec(B_j)^T$ and $\ovec(A_i) \ovec(B_i')^T$ are rank-$1$ matrices, by the subadditivity of rank, i.e., $\rank(X + Y) \le \rank(X) + \rank(Y)$, we have $\rank(\hat{M}) \le \min (|\cB(H)|, \, |\cB(K)|)$.
\end{proof}

Having fewer basis elements is generally better because it leads to a small number of parameters to learn in SymSATNet. The above lemma says that a divisor $p$ with large $\gamma$ (which roughly corresponds to the large rank of $\hat{M}$ in the lemma) leads to large $|\cB(H)|$ and $|\cB(K)|$. Our criterion is designed to avoid such undesirable cases.

\subsection{Detection of wreath product}
Assume that the permutation groups $G$ and $H$ act transitively on $[p]$ and $[q]$
(i.e., for all $i,j \in [p]$ and $i',j' \in [q]$, there are $g \in G$ and $h \in H$ such that $g(i) = j$ and $h(i') = j'$).
We recall that in this case, every equivariant matrix $M$ under the wreath product of groups $H \wr G$ can be expressed as follows~\citep{Wang20}:
\begin{align}
\label{eqn:wreath-split}
M = A \otimes \mathbf{1}_q + I_p \otimes B
\end{align}
for some $A \in \cE(G)$ and $B \in \cE(H)$, where $\mathbf{1}_m$, $I_m$ are everywhere-one, identity matrices in $\R^{m \times m}$.
We use this general form of $\cB(H \wr G)$-equivariant matrices and make $\kronsplit$ detect the case that symmetries are captured by a wreath product. 

Our change in $\kronsplit$ is based on a simple observation that the form in \eqref{eqn:wreath-split} is exactly the one in \eqref{eqn:kronecker-linear-combination} with $\gamma = 2$. We change $\kronsplit$ such that if we get $\gamma = 2$ while runnning $\kronsplit(M, p)$, we check whether $M$ can be written as the form in \eqref{eqn:wreath-split}. This checking is as follows. Using given $p$ and $q$, we create blocks 
\begin{align*}
M^{(i,j)} = M[(i-1) \times q + 1 : i \times q, (j-1) \times q + 1 : j \times q]
\end{align*}
of $q\times q$ submatrices in $M$ for all $i,j \in [p]$. Then, we test whether all $M^{(i,i)} - M^{(j,j)}$ and $M^{(i,j)}$ for $i \neq j$ are
constant matrices (i.e., matrices of the form $\alpha \mathbf{1}_q$ for some $\alpha \in \R$). If this test passes, $M$ has the desired form. In that case, we compute $A$ and $B$ from $M$, and recursively call $\symfind$ on $A$ and $B$.

We can also make $\sumsplit(M)$ detect wreath product. The required change is to perform a similar test on each diagonal block of the final $M_{\sigma^{-1}}$ computed by $\sumsplit(M)$. The only difference is that when the test fails, we look for a rearrangement of the rows and columns of the diagonal block that makes the test succeed. If the test on a diagonal block succeeds (after an appropriate rearrangement), the group corresponding to this block, which is one summand of a direct sum, becomes a wreath product.

\section{Computational Complexity of $\symfind$}
In this section, we analyse the computational complexity of our $\symfind$ algorithm in terms of the dimension of the input matrix.
Suppose that an $n \times n$ matrix is given to SymFind as an input.
The complexity of SymFind shown in Algorithm~\ref{alg:symfind} is $O(n^{3 + \epsilon})$ for any arbitrarily small $\epsilon > 0$.

We can show the claimed complexity of $\symfind$ using induction.
When SumFind is called in the line 9 of Algorithm~\ref{alg:symfind}, it first spends $O(n^3)$ time for clustering, and then makes recursive calls to SymFind with $n_i \times n_i$ submatrices for $\sum n_i = n$.
Each of these calls takes $O(n_i^{3 + \epsilon})$ time by induction, and the total cost of all the calls is $\sum_i O(n_i^{3 + \epsilon}) = O(n^{3 + \epsilon})$.
Also, when ProdFind is called in the lines 11-12 of Algorithm~\ref{alg:symfind}, for each divisor $p$ of $n$, ProdFind performs SVD in $O(n^3)$ steps, and makes recursive calls to SymFind.
The recursive calls here together cost $O(p^{3 + \epsilon / 2} + (n/p)^{3 + \epsilon / 2}) = O(n^{3 + \epsilon / 2})$ by induction.
Thus, the loop in lines 11-14 costs $O(n^{3 + \epsilon / 2} d(n)) = O(n^{3 + \epsilon})$, where $d(n)$ is the number of divisors of $n$, since $d(n) = O(n^{\epsilon / 2})$ for any arbitrarily small $\epsilon > 0$.
It remains to show that applying the Reynolds operator (in the lines 2 and 6) and counting the number of basis elements (in the line 15) take $O(n^{3 + \epsilon})$ steps.
The former costs $O(n^2)$ since it is actually an average pooling operation when we consider permutation groups.
The latter costs at most $O(n)$ (when it is given $\mathcal{B}(\bigoplus_{i=1}^n \mathbb{Z}_1)$).
\section{Symmetries in Sudoku and Rubik's Cube Problems}
\label{sec:symmetries-sudoku-cube}
In this section, we describe the group symmetries in Sudoku and the completion problem of Rubik's cube.
In both problems, we can find certain group $G \le \perm_n$ acting on $\R^{k \times n}$ such that for any valid assignment $V \in \R^{k \times n}$ of problem, $g \cdot V$ is also valid for any $g \in G$.

In $9 \times 9$ Sudoku, we denote the first three rows of a Sudoku board by the first band, and the next three rows by the second band, and the last three rows by the third band.
In the same manner, we denote each three columns by a stack.
For Sudoku problem, one can observe that any row permutations within each band preserve the validity of the solutions.
Also, the permutations of bands preserve the validity.
We can represent this type of hierarchical actions by wreath product groups.
In this case, 3 bands are permuted in a higher level, and 3 rows in each band are permuted in a lower level, which corresponds to the group action of $\perm_3 \wr \perm_3$.
The same process can be applied to the stacks and the columns, which also corresponds to the group $\perm_3 \wr \perm_3$.
Furthermore, the permutations of number occurences in Sudoku (e.g., switching all the occurences of 3's and 9's) also preserves the validity of the solutions.
In this case, any permutations over $[9]$ are allowed, thus $\perm_9$ represents this permutation action.
Overall, we have three permutation groups, $G_1 = \perm_3 \wr \perm_3$, $G_2 = \perm_3 \wr \perm_3$, and $G_3 = \perm_9$.
These three groups are in different levels; $G_1$ is at the outermost level, and $G_3$ is at the innermost level.
Note that these three types of actions are commutative, i.e., the order does not matter if we apply the actions in different levels.
This relation forms a direct product between each level, and we conclude $G = (\perm_3 \wr \perm_3) \otimes (\perm_3 \wr \perm_3) \otimes \perm_9$.

For Rubik's cube, we first introduce some of the conventional notations.
We are given a $3 \times 3 \times 3$ Rubik's cube composed of 6 faces and 9 facelets in each face.
There are also 26 cubies in the Rubik's cube, and they are classified into 8 corners, 12 edges, and 6 centers.
To change the color state of Rubik's cube, we can move the cubies with 9 types of rotations, which are denoted by $U$, $D$, $F$, $B$, $L$, $R$, $M$, $E$, $S$.
We can represent each color state of the Rubik's cube by a function from $F = [6] \times [9]$ to $C = [6]$, where $F$ encodes the face and facelets, and $C$ encodes the colors.
The initial state of Rubik's cube is $s : F \to C$ satisfying $s(i, j) = i$ for all $(i, \, j) \in F$.
If there exists a sequence of rotations that transforms $s$ to the initial state, then $s$ is solvable.
Our completion problem of Rubik's cube is to find a color assignment such that the assignment is solvable.
For example, if a corner cubie contains 2 same colors, it is not solvable because these adjacent same colors cannot be separated by the 9 types of rotations.

We can observe the following group symmetries in the completion problem of Rubik's cube:
any solvable color state is still solvable after being transformed by the 9 types of rotations.
These 9 rotations generate a permutation group $\cR_{54}$ acting on the $6 \times 9$ facelets of a Rubik's cube, which is called the Rubik's cube group.
Furthermore, like Sudoku, we can also consider the permutations of color occurences.
In this case, if we assume the colors $0$ and $5$ are initially on the opposite sides,
then the solvability can be broken by switching the colors $0$ and $1$ since the colors $0$ and $5$ are then no longer on the opposite sides.
Considering this, computing all possible permutations acting on the color occurences is not trivial.
Instead, we can generate such group $\cR_6$ by 3 elements, each of which corresponds to the $90^\circ$ rotation in one of the three axes in 3-dimension.
Finally, these two groups $\cR_{54}$ and $\cR_6$ form different levels of actions, and actions from different levels commute each other.
Therefore, we conclude $G = \cR_{54} \otimes \cR_6$.

\section{Efficient Implementation of SymSATNet}

\algsetup{indent=0.6em}
\begin{figure}
\begin{minipage}{0.48\columnwidth}
\begin{algorithm}[H]
   \caption{Forward pass of SATNet}
   \label{alg:forward-SATNet}
\begin{algorithmic}[1]
   \STATE {\bfseries Input:} $V_{\cI}$
   \STATE {\bfseries init} random unit vectors $v_o$, $\forall o \in \cO$
   \STATE {\bfseries compute} $\Omega = VS^T$
   \REPEAT
   \FOR{$o \in \cO$}
   \STATE {\bfseries compute} $g_o = \Omega s_o - \Vert s_o \Vert^2 v_o$
   \STATE {\bfseries compute} $v_o = - g_o / \Vert g_o \Vert$
   \STATE {\bfseries update} $\Omega = \Omega + (v_o - v_o^{\text{prev}}) s_o^T$
   \ENDFOR
   \UNTIL{not converged}
   \STATE {\bfseries Output:} $V_{\cO}$
\end{algorithmic}
\end{algorithm}
\end{minipage}
\hfill
\begin{minipage}{0.48\columnwidth}
   \begin{algorithm}[H]
      \caption{Forward pass of SymSATNet}
      \label{alg:forward-SymSATNet}
   \begin{algorithmic}[1]
      \STATE {\bfseries Input:} $V_{\cI}$
      \STATE {\bfseries init} random unit vectors $v_o$, $\forall o \in \cO$
      \REPEAT
      \FOR{$o \in \cO$}
      \STATE {\bfseries compute} $g_o = V c_o - C_{o,o} v_o$
      \STATE {\bfseries compute} $v_o = - g_o / \Vert g_o \Vert$
      \STATE {\bfseries update} $V = V + (v_o - v_o^{\text{prev}}) \mathds{1}_o^T$
      \ENDFOR
      \UNTIL{not converged}
      \STATE {\bfseries Output:} $V_{\cO}$
   \end{algorithmic}
   \end{algorithm}
\vfill
\end{minipage}
\end{figure}
\algsetup{indent=1em}

\label{sec:algorithm-satnet}
Our SymSATNet includes forward-pass and backward-pass algorithms as described in Section~\ref{sec:SymSATNet}.
Here, we describe slight changes of the forward-pass and backward-pass algorithms in SymSATNet, and the improvement of efficiency obtained by these changes.
Algorithms~\ref{alg:forward-SATNet} and~\ref{alg:forward-SymSATNet} are the forward pass algorithms of SATNet and SymSATNet, and
Algorithms~\ref{alg:backward-SATNet} and~\ref{alg:backward-SymSATNet} are the backward pass algorithms of SATNet and SymSATNet.
In those algorithms, we let $c_o$ be the $o$-th column of the matrix $C$, and $v_o^{\text{prev}}$ and $u_o^{\text{prev}}$ be the previous $o$-th columns of $V$ and $U$ before the update.
Also, let $P_o = I_k - v_o v_o^T$, and $\mathds{1}_o$ be the n-dimensional one-hot vector whose only $o$-th element is $1$.
Note that our algorithms are only slightly different from the ones of original SATNet, and the only difference is that the inner products
$s_i^T s_j$ are substituted by $C_{i,j}$ which can be directly derived from $C = S^T S$.
This allows us to implement the forward-pass and the backward-pass algorithm of SymSATNet in the same manner as the original SATNet.

The above changes bring a small difference in the computational complexity.
In SATNet, new matrices $\Omega$ and $\Psi$ are required for the rank-$1$ update in each loop.
Recall that $V, U \in \R^{k \times n}$ and $S \in \R^{m \times n}$ imply $\Omega, \Psi \in \R^{k \times m}$. 
SATNet costs $O(nmk)$ twice every iteration, in the lines 6, 8 of Algorithm~\ref{alg:forward-SATNet} and in the lines 5, 7 of Algorithm~\ref{alg:backward-SATNet}. SymSATNet costs $O(n^2k)$ once every iteration, in the line 5 of Algorithm~\ref{alg:forward-SymSATNet} and in the line 5 of Algorithm~\ref{alg:backward-SymSATNet}.
Another slight difference is in the first line of the body of each for loop, where SATNet computes $\Vert s_o \Vert^2$, but SymSATNet uses the constant $C_{o, o}$.
If we denote the required number of iterations by $t$, then SATNet costs $O(nmk \cdot 2t)$, and SymSATNet costs $O(n^2k \cdot t)$ inside the loop.
These make one of the major differences between the runtime of SATNet and SymSATNet, since these are scaled by $t$, the number of iterations until convergence.
Outside the loop, SATNet additionally costs $O(nmk)$ in the line 3 of Algorithm~\ref{alg:forward-SATNet},
and SymSATNet costs $O(n^2d)$ in the forward computation of $C$ in \eqref{eqn:SymSATNet-equivariance}, and in the backward computation with the chain rule in \eqref{eqn:SymSATNet-chainrule}.
These costs are not significant because each of these occurs only once every gradient update in SATNet and SymSATNet.

We also inspected the values of $t$ in SATNet and SymSATNet in a training run for Sudoku and Rubik's cube problem in the configuration we used in our experiments.
For Sudoku, SATNet repeated the loop $t = 21.39$ times on average, in the range of $15 \le t \le 32$,
while SymSATNet finished the loop in $t = 19.10$ on average, in the scope of $3 \le t \le 26$.
For Rubik's cube, SATNet completes the loop in $t = 9.25$ on average, whose scope was $3 \le t \le 25$,
while SymSATNet iterates $t = 7.30$ times on average, with the range $2 \le t \le 26$.
These results show that the component $t$ is reduced in SymSATNet for those two problems, and bring further improvement of efficiency in practice.
\section{Hyperparameters}
In this section, we specify the hyperparameters to run SymFind algorithm and the validation steps in SymSATNet-Auto.
Each validation step requires a threshold of improvement of validation accuracy to measure the usefulness of each smaller part in a given group,
i.e., only the smaller parts showing improvement greater than the threshold were considered to be useful, and were combined to construct the whole group.
We determined this threshold by the number of corruption in the dataset; $0.05$ for the dataset with 0 and 1 corruption, $0.15$ for 2 corruption, and $0.2$ for 3 corruption.
Also, our SymFind algorithm receives $\lambda$ as an input, which determines the degree of sensitivity of SymFind.
In practice, this tolerance was implemented by two hyperparameters, say $\lambda_1$ and $\lambda_2$.
$\lambda_1$ was used to compute a threshold to decide whether a pair of entries have the same value within our tolerance so that they have to be clustered.
We computed this threshold by the multiplication of $\lambda_1$ and the norm of input matrix, and $\lambda_1$ was fixed by $0.1$ for both problems.
$\lambda_2$ played a role of a threshold to conclude that the input matrix truly has the symmetries under the discovered group by SymFind.
$\lambda_2$ was used in the last step of SymFind, where the largest group is selected among the candidates.
With the Reynolds operator, we computed the distance between the input matrix and the equivariant space under each candidate group, and filtered out the groups whose distance is greater than $\lambda_2$.
We determined $\lambda_2$ by the number of corruption in the dataset; $0.4$ for the dataset with 0 corruption, $0.5$ for 1 corruption, $0.55$ for 2 corruption, and $0.6$ for 3 corruption.

\algsetup{indent=0.4em}
\begin{figure}
    \begin{minipage}{0.50\columnwidth}
    \begin{algorithm}[H]
       \caption{Backward pass of SATNet}
       \label{alg:backward-SATNet}
    \begin{algorithmic}[1]
       \STATE {\bfseries Input:} $\{ \partial \ell / \partial v_o : o \in \cO \}$
       \STATE {\bfseries init} $U_{\cO} = 0$ and $\Psi = U_{\cO}S_{\cO}^T = 0$
       \REPEAT
       \FOR{$o \in \cO$}
       \STATE {\bfseries compute} $dg_o = \Psi s_o - \Vert s_o \Vert^2 u_o - \partial \ell / \partial v_o$
       \STATE {\bfseries compute} $u_o = - P_o dg_o / \Vert g_o \Vert$
       \STATE {\bfseries update} $\Psi = \Psi + (u_o - u_o^{\text{prev}}) s_o^T$
       \ENDFOR
       \UNTIL{not converged}
       \STATE {\bfseries Output:} $U_{\cO}$
    \end{algorithmic}
    \end{algorithm}
    \end{minipage}
    \hfill
    \begin{minipage}{0.48\columnwidth}
       \begin{algorithm}[H]
          \caption{Backward pass of SymSATNet}
          \label{alg:backward-SymSATNet}
       \begin{algorithmic}[1]
          \STATE {\bfseries Input:} $\{ \partial \ell / \partial v_o : o \in \cO \}$
          \STATE {\bfseries init} $U_{\cO} = 0$
          \REPEAT
          \FOR{$o \in \cO$}
          \STATE {\bfseries compute} $dg_o = U c_o - C_{o,o} u_o - \partial \ell / \partial v_o$
          \STATE {\bfseries compute} $u_o = - P_o dg_o / \Vert g_o \Vert$
          \STATE {\bfseries update} $U = U + (u_o - u_o^{\text{prev}}) \mathds{1}_o^T$
          \ENDFOR
          \UNTIL{not converged}
          \STATE {\bfseries Output:} $U_{\cO}$
       \end{algorithmic}
       \end{algorithm}
       \end{minipage}
    \end{figure}
\algsetup{indent=1em}
\section{GPU Usage}

The implementation of SymSATNet is based on the original SATNet code, and thus the calculations in the forward-pass and the backward-pass algorithms of SymSATNet can be accelerated by GPU.
We used GeForce RTX 2080 Ti for every running of SATNet and SymSATNet.

\section{Ablation Study of Validation Steps}

\begin{wrapfigure}{r}{0.48\columnwidth}
    \vskip -1em
    \centering
    \begin{minipage}{\linewidth}
    \centering
    \includegraphics[width=\columnwidth]{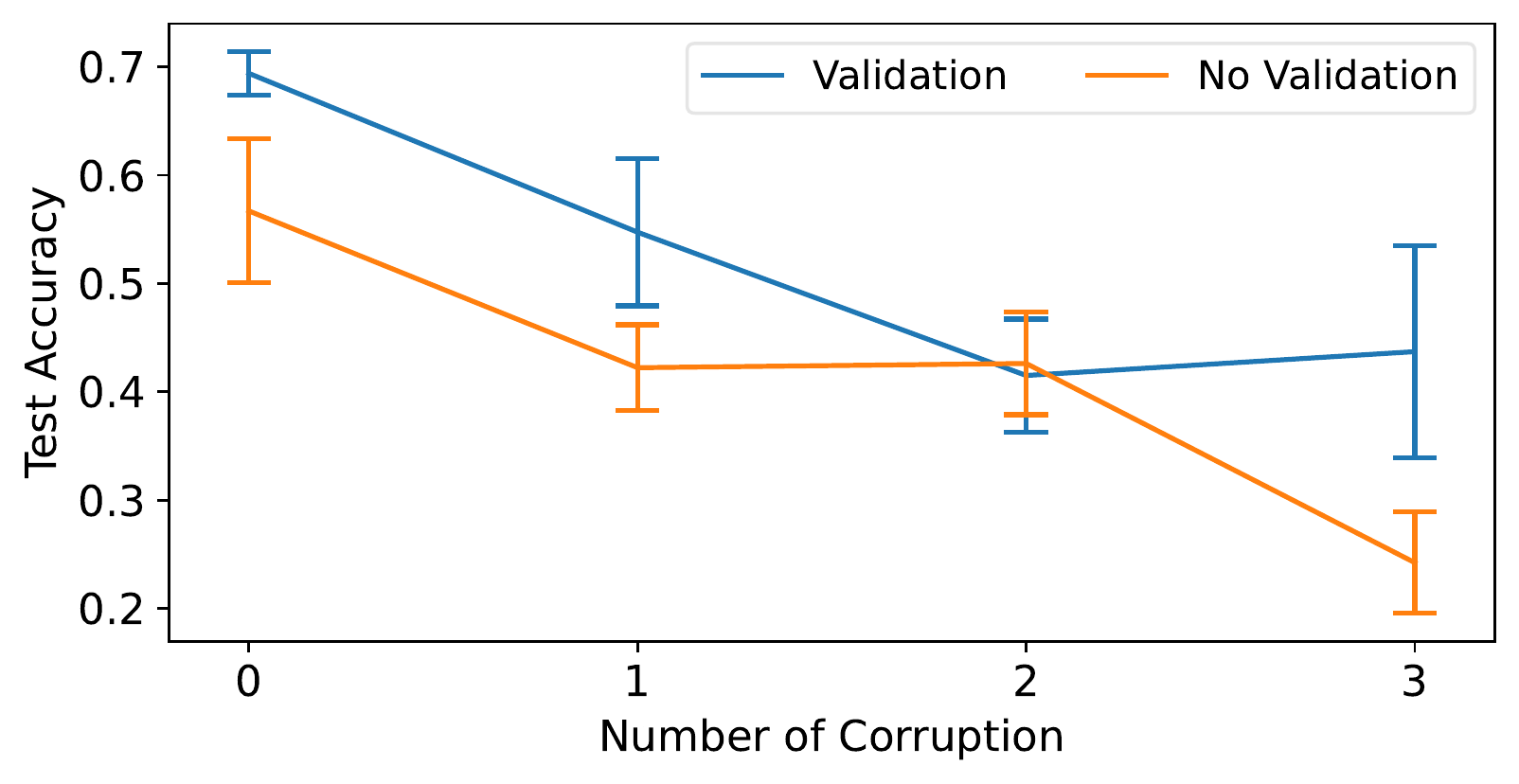}
    \end{minipage}
    \caption{Best test accuracies of SymSATNet-Auto with or without the validation step using noisy Rubik's cube datasets.}
    \label{fig:ablation}
    \vskip -1em
\end{wrapfigure}

In this section, we present the results of ablation study for the additional validation step in SymSATNet-Auto.
All the configurations are the same, except that the discovered group symmetries by $\symfind$ are directly exploited by SymSATNet without the subsequent validation step.
We used noisy Sudoku and Rubik's cube datasets, and repeated our experiment
with each dataset ten times. Here we report the average test accuracies with $95\%$ confidence interval.

In Sudoku, our $\symfind$ algorithm always succeeded in finding the full group symmetries, and these correct symmetries were preserved by the validation step
since they always significantly improved the validation accuracies after projecting the parameter $C$ of SATNet.
For this reason, omitting the validation step did not change the performance of SymSATNet-Auto. Thus, we do not plot the results for the Sudoku case.
For Rubik's cube, the results are shown in Figure~\ref{fig:ablation}.
Without the validation step, SymSATNet-Auto often overfit into the wrong group symmetries induced by the noise in the parameter matrix $C$ of SATNet, resulting in lower improvement of performance.
These results show that the additional validation step is useful not only when the training and the test examples share the same distribution,
but also when the distribution of training examples is the perturbation of that of test examples by noise.
\section{Emergence of Symmetries in SATNet}

\begin{wrapfigure}{r}{0.48\columnwidth}
    \vskip -0.5em
    \centering
    \begin{subfigure}{0.48\columnwidth}
    \includegraphics[width=\linewidth]{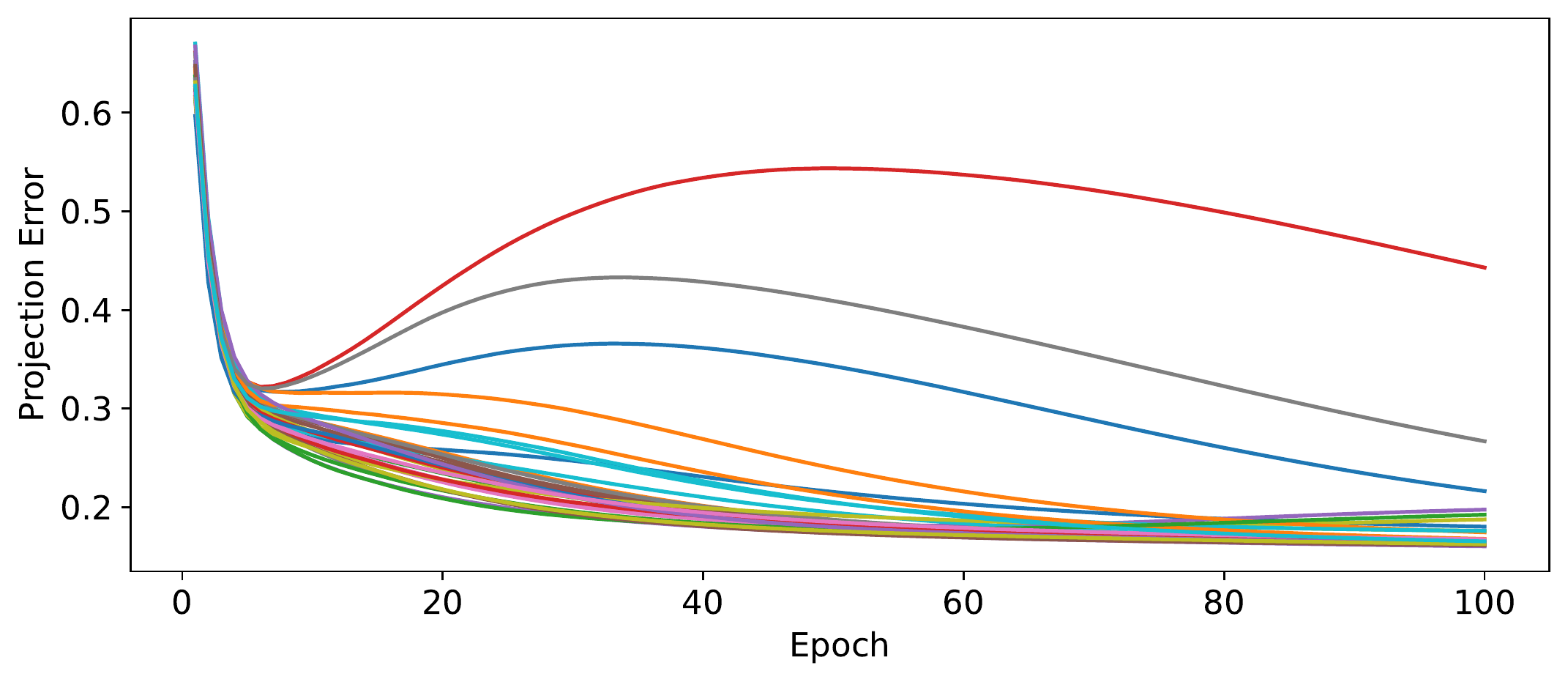}
    \subcaption{Sudoku}
    \end{subfigure}
    \vskip 0.5em
    \begin{subfigure}{0.48\columnwidth}
    \includegraphics[width=\linewidth]{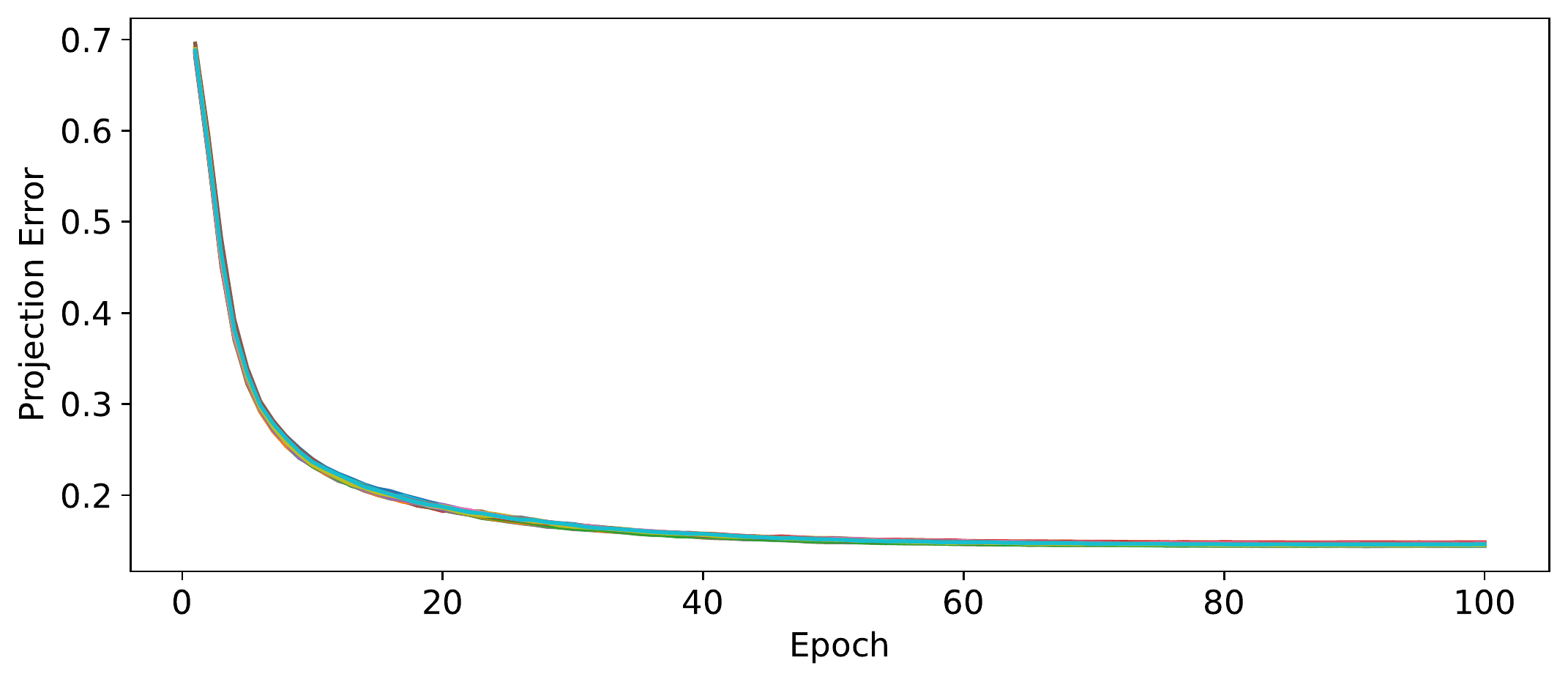}
    \subcaption{Rubik's cube}
    \end{subfigure}
    \caption{Projection errors of the SATNet's parameter $C$ during 100 epochs.
    We repeated 30 training trials for each problem to report the projection errors.}
    \label{fig:proj_err}
    \vskip -1em
\end{wrapfigure}

In this section, we report our experiment for detecting the appearance and disappearance of symmetries in the SATNet's parameter matrix $C$ throughout the training epochs.
We measured how close $C$ is to $\mathcal{E}(G)$, the space of equivariant matrices under $G$, by computing $\Vert \mathrm{prj}(G, C) - C \Vert_F$ (denoted by \textit{projection error} below) where $G$ is the ground-truth permutation group for a given learning problem (e.g., Sudoku or Rubik's cube).
We evaluated the projection errors throughout 100 training epochs.
Our experiment for each problem was repeated 30 times.

Figure~\ref{fig:proj_err} shows the results.
For Sudoku, in multiple cases out of 30 trials, the projection error hit the lowest point around the 5-10th epochs, and after these epochs, the projection error started to increase until certain epochs, so that the training ended with a high projection error.
We did not observe any clear difference in the training loss (or the test loss) between these cases with high projection errors and the other cases (which showed low projection errors).
This result suggests possible overfitting from the perspective of symmetry discovery in the original SATNet.
For Rubik’s cube, in all of our trials, the projection error always decreased throughout the epochs, and no sign of overfitting was detected.
Answering why this is the case is an interesting topic for future research.
Also, in the very beginning of the training for both problems, the projection errors were not sufficiently small.
By choosing proper stopping epochs in SymSATNet-Auto, we can avoid high projection errors, as we did (i.e., we picked the 10th epoch for Sudoku, and the 20th epoch for Rubik’s cube).
Finally, we point out that there may be factors other than the projection error that influence the performance of our symmetry-discovery algorithm. If found, 
those factors would become useful tools for analysing the theoretical guarantees of $\symfind$ algorithm, and we leave it to future work.
\section{SymSATNet with Auxiliary Variables}

\begin{table}
    \centering
    \begin{small}
    \begin{sc}
    \caption{Best train and test accuracies during 100 epochs and average total train times ($10^2$ sec).
    Each experiment was repeated 10 times and its average and 95\% confidence interval are reported.
    Additional times for automatic symmetry detection ($\symfind$ and validation) are reported after $+$.}
    \label{table:accuracy_aux}
    \vskip 1em
    \begin{tabular}{lrrrrrr}
    \toprule
    Model & \multicolumn{3}{c}{Sudoku} & \multicolumn{3}{c}{Cube} \\
          & Train Acc. & Test Acc. & Time & Train Acc. & Test Acc. & Time \\
    \midrule \midrule
    SATNet\scriptsize-Plain   & 93.5\%                 & 88.1\%                 & 48.0                  & 99.4\%                 & 55.7\%                 & 1.8 \\
                              & \scriptsize $\pm$1.2\% & \scriptsize $\pm$1.3\% & \scriptsize $\pm$0.17 & \scriptsize $\pm$0.1\% & \scriptsize $\pm$0.5\% & \scriptsize $\pm$0.01 \\
    SATNet\scriptsize-300aux  & 99.8\%                 & 97.9\%                 & 90.3                  & 99.8\%                 & 56.5\%                 & 14.0 \\
                              & \scriptsize $\pm$0.0\% & \scriptsize $\pm$0.2\% & \scriptsize $\pm$0.68 & \scriptsize $\pm$0.0\% & \scriptsize $\pm$0.6\% & \scriptsize $\pm$0.12 \\
    \midrule
    SymSATNet\scriptsize-Plain  & 98.8\%                 & 99.2\%                 & 25.6                  & 67.1\%                 & 66.9\%                 & 1.1 \\
                                & \scriptsize $\pm$0.1\% & \scriptsize $\pm$0.2\% & \scriptsize $\pm$0.14 & \scriptsize $\pm$0.2\% & \scriptsize $\pm$0.9\% & \scriptsize $\pm$0.00 \\
    SymSATNet\scriptsize-300aux & 99.2\%                 & 99.3\%                 & 53.6                  & 69.6\%                 & 67.6\%                  & 9.5 \\
                                & \scriptsize $\pm$0.1\% & \scriptsize $\pm$0.1\% & \scriptsize $\pm$0.56 & \scriptsize $\pm$0.4\% & \scriptsize $\pm$0.5\% & \scriptsize $\pm$0.29 \\
    SymSATNet\scriptsize-Auto   & 99.3\%                 & 99.5\%                 & 22.7                  & 70.2\%                 & 68.1\%                 & 3.4 \\
                                & \scriptsize $\pm$0.1\% & \scriptsize $\pm$0.2\% & \scriptsize \makecell{$+$0.14 \\ $\pm$0.35} & \scriptsize $\pm$3.1\% & \scriptsize $\pm$2.0\% & \scriptsize \makecell{$+$0.66 \\ $\pm$0.19} \\
    \bottomrule
    \end{tabular}
    \end{sc}
    \end{small}
\end{table}

\begin{wrapfigure}{r}{0.48\columnwidth}
    \vskip -1em
    \begin{subfigure}{0.48\columnwidth}
    \includegraphics[width=0.95\linewidth]{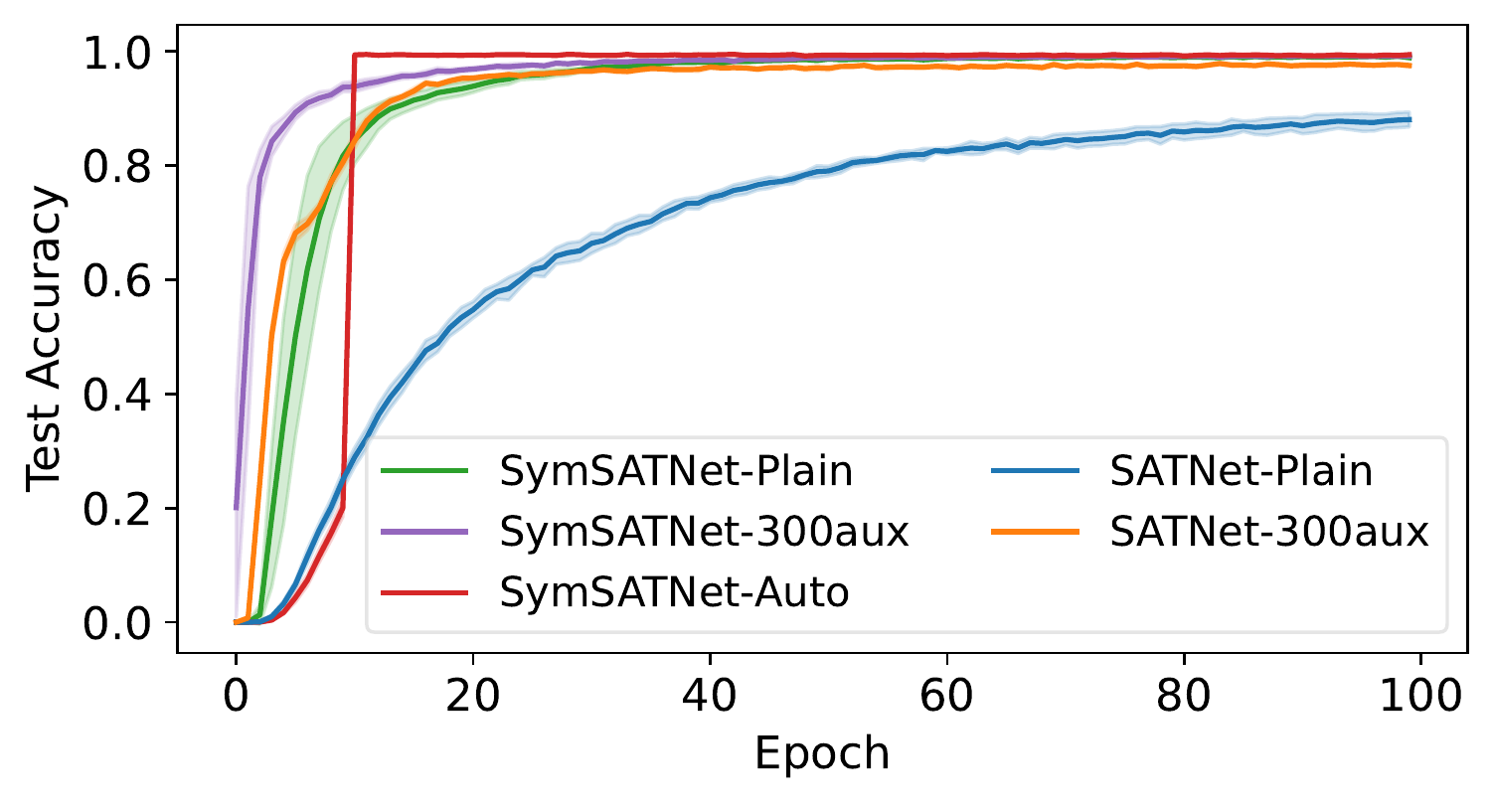}
    \vskip -0.5em
    \subcaption{Sudoku}
    \end{subfigure}
    \begin{subfigure}{0.48\columnwidth}
    \includegraphics[width=0.95\linewidth]{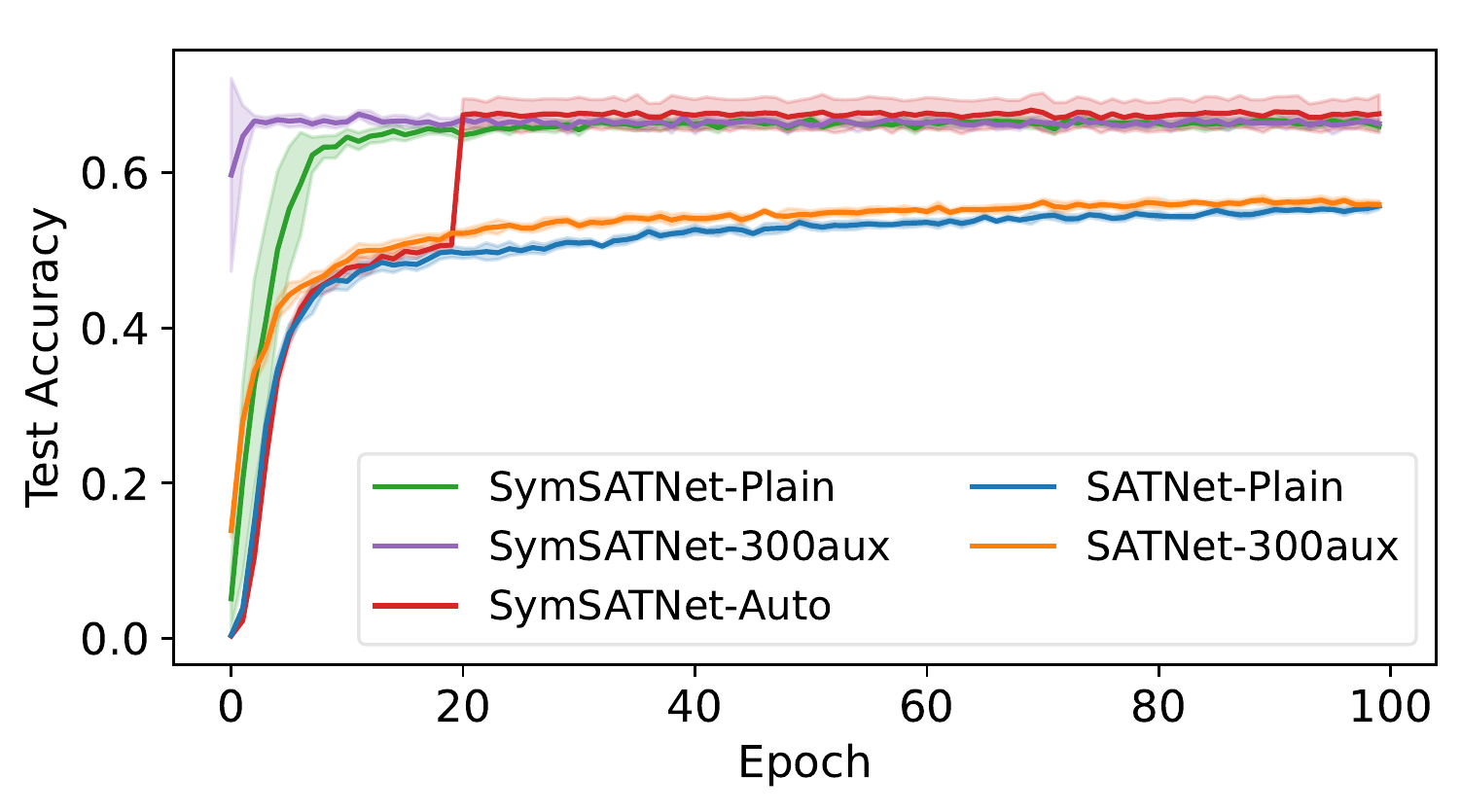}
    \vskip -0.5em
    \subcaption{Rubik's cube}
    \end{subfigure}
    \caption{Test accuracies throughout the first 100 training epochs. Each training run was repeated 10 times to compute the average and the 95\% confidence interval.}
    \label{fig:accuracy_aux}
    \vskip -1.5em
\end{wrapfigure}

By introducing auxiliary variables, the original SATNet can achieve higher expressive power and better performance while trading-off the runtime efficiency.
The similar process can be done in SymSATNet.
If a permutation group $G$ captures group symmetries of a problem to be solved without auxiliary variables,
then $G \oplus \mathcal{I}_m$ represents the extension of the same symmetries with the $m$ auxiliary variables.
The singleton group $\mathcal{I}_m$ here acts trivially (i.e., permutes nothing) on those auxiliary variables.

We now report the findings of our experiments with SymSATNet with 300 auxiliary variables (SymSATNet-300aux) that retains the above extension of group symmetries.
We compared the performance of SymSATNet-300aux with the other four models that we used before, on the 0-corrupted Sudoku and Rubik's cube datasets.
We denote SymSATNet without auxiliary variables by SymSATNet-Plain to distinguish it from SymSATNet-300aux.
We repeated the training for each dataset 10 times. Here we report the average and best test accuracies, the average training times, and $95\%$ confidence interval.
We trained SymSATNet-300aux with the learning rate $\eta = 4 \times 10^{-2}$, which is the same as SymSATNet-Plain.
All the experimental setups were the same as before.

Figure~\ref{fig:accuracy_aux} and Table~\ref{table:accuracy_aux} show the results.
In both tasks, SymSATNet-300aux outperformed all the other models except SymSATNet-Auto.
Also, for Rubik's cube, SymSATNet-300aux showed the fastest convergence in epoch among all the models.
Note that both the test accuracies and the training times of SymSATNet-300aux were remarkably improved over those of SATNet baselines.
These results demonstrate that SATNet with auxiliary variables still enjoyed the benefits by exploiting a minor extension of group symmetries,
although the benefits were not as significant as in the case of SymSATNet-Plain.
\section{Loss Curves}

\begin{figure}[bh]
    \begin{subfigure}{0.48\columnwidth}
        \includegraphics[width=0.95\linewidth]{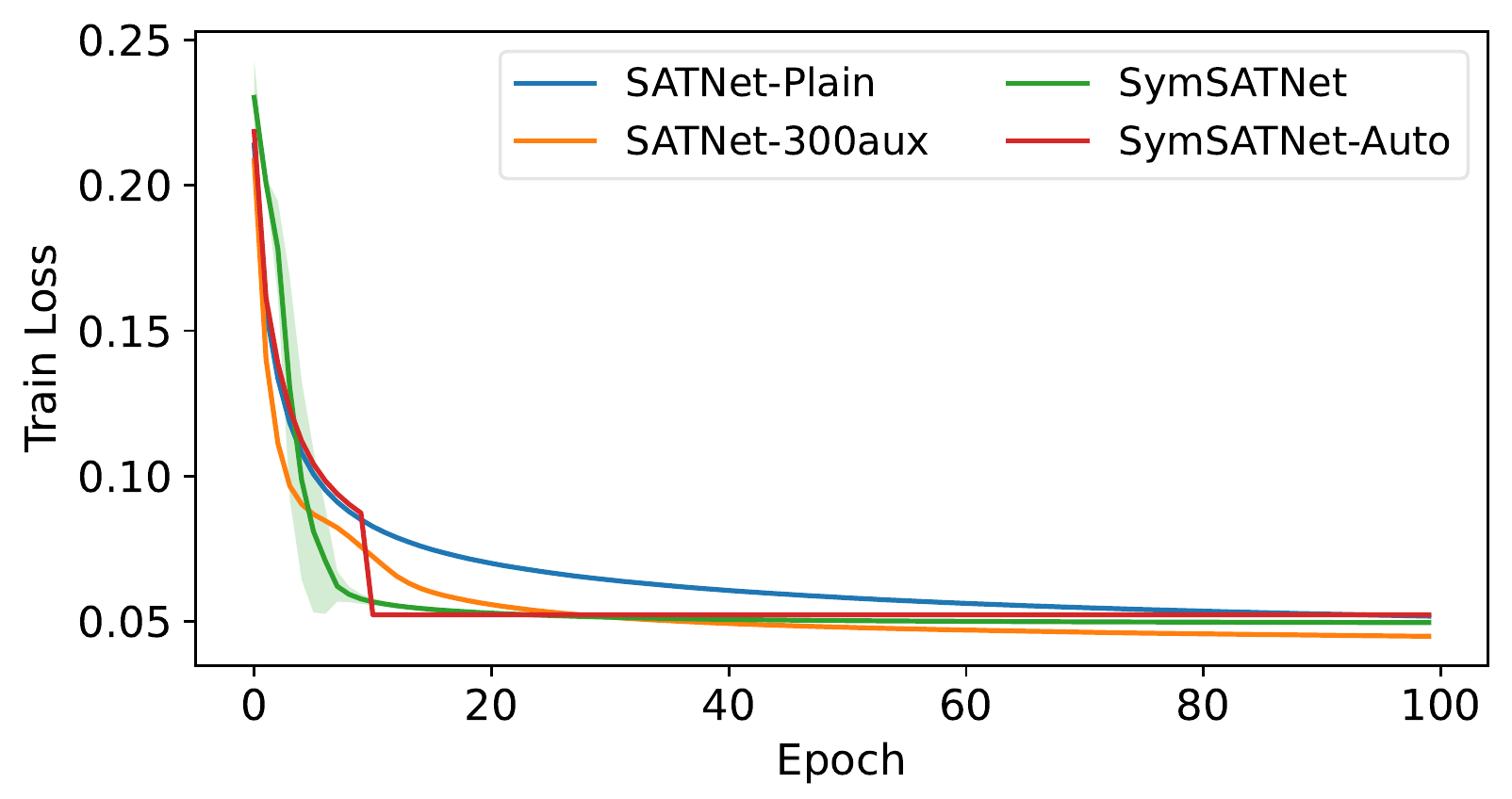}
        \subcaption{Training losses in Sudoku with 0 corruption}
        \label{fig:train-loss-sudoku-1}
    \end{subfigure}
    \begin{subfigure}{0.48\columnwidth}
        \includegraphics[width=0.95\linewidth]{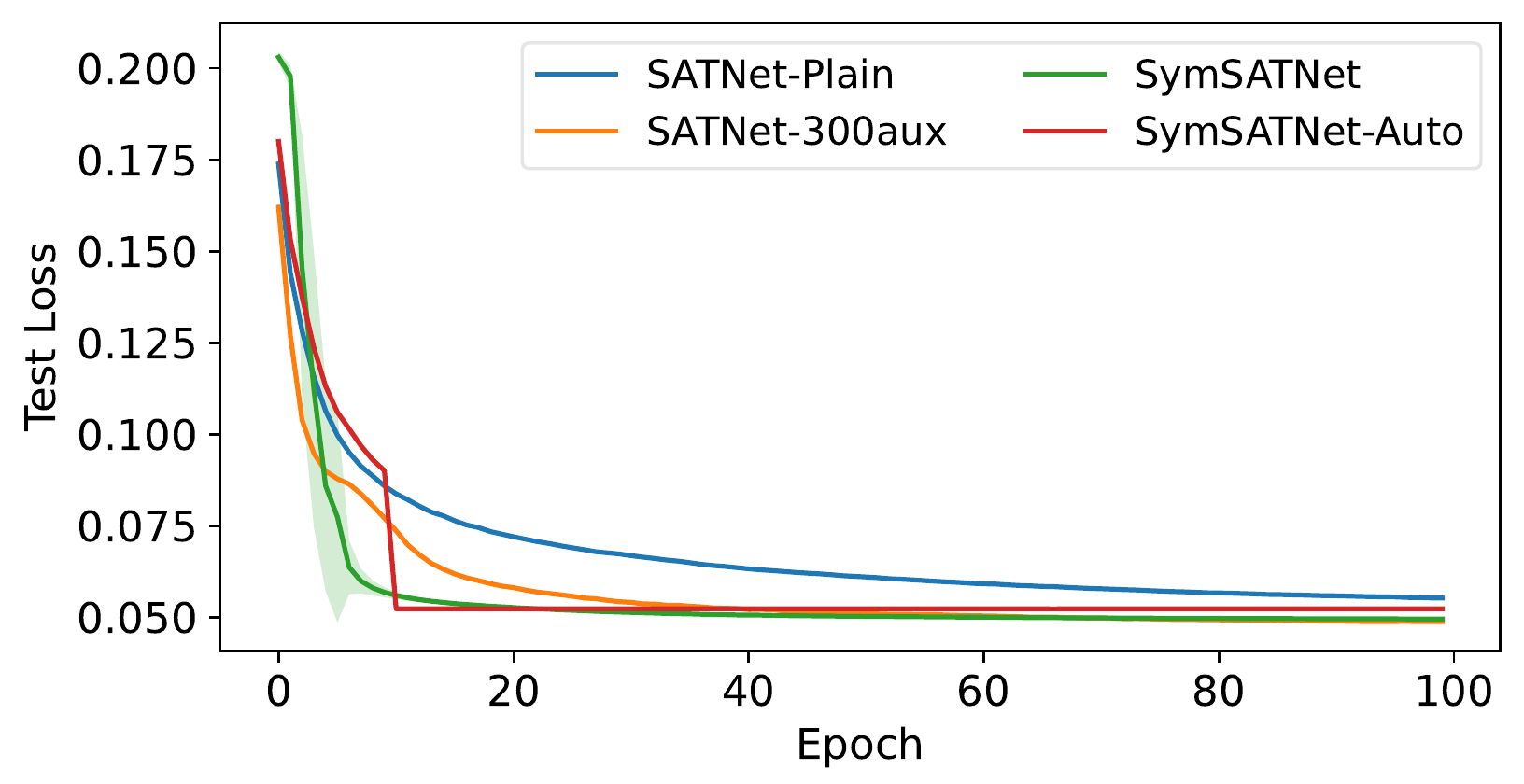}
        \subcaption{Test losses in Sudoku with 0 corruption}
        \label{fig:test-loss-sudoku-1}
    \end{subfigure}
    \begin{subfigure}{0.48\columnwidth}
        \includegraphics[width=0.95\linewidth]{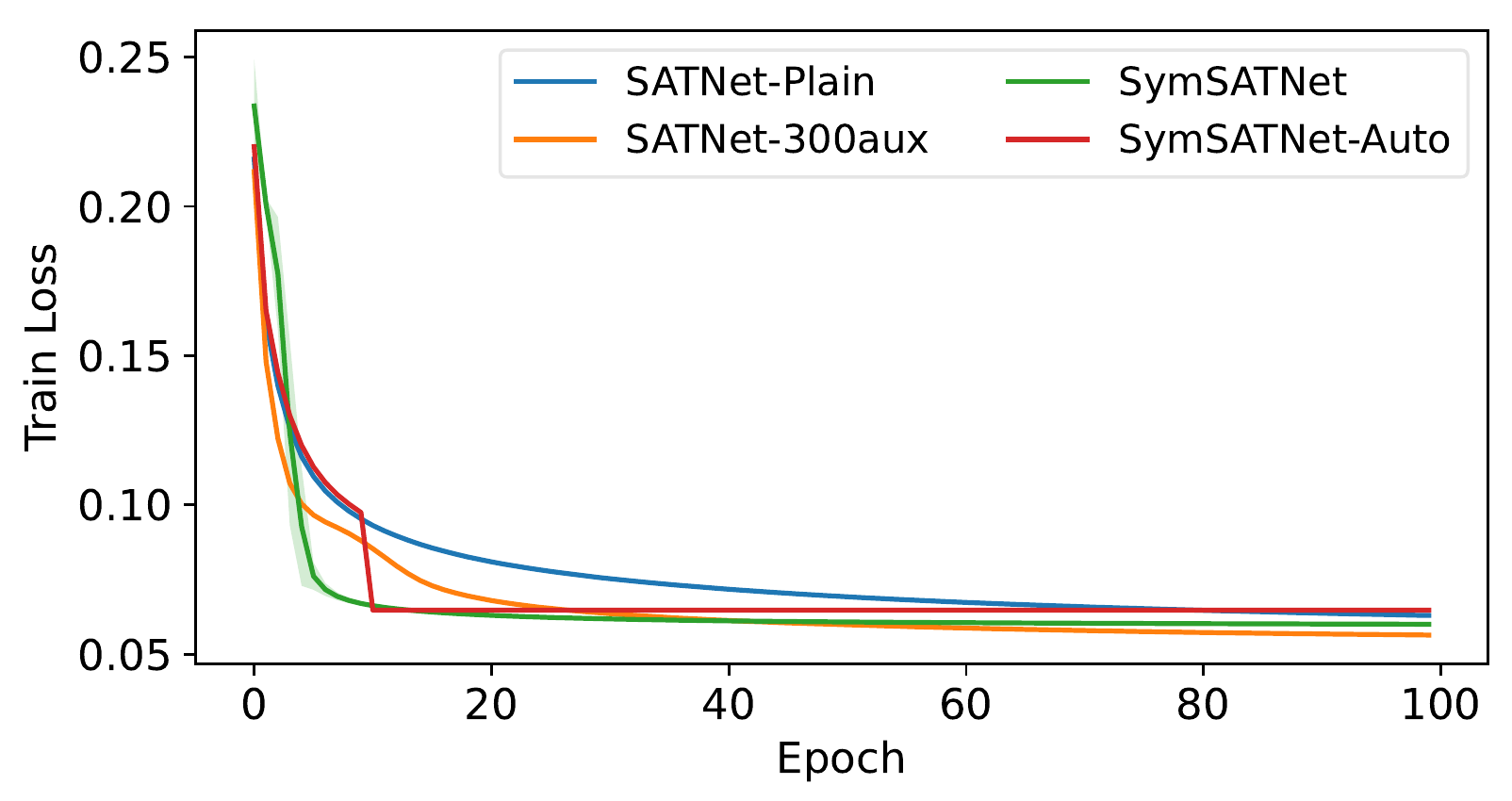}
        \subcaption{Training losses in Sudoku with 1 corruption}
        \label{fig:train-loss-sudoku-2}
    \end{subfigure}
    \begin{subfigure}{0.48\columnwidth}
        \includegraphics[width=0.95\linewidth]{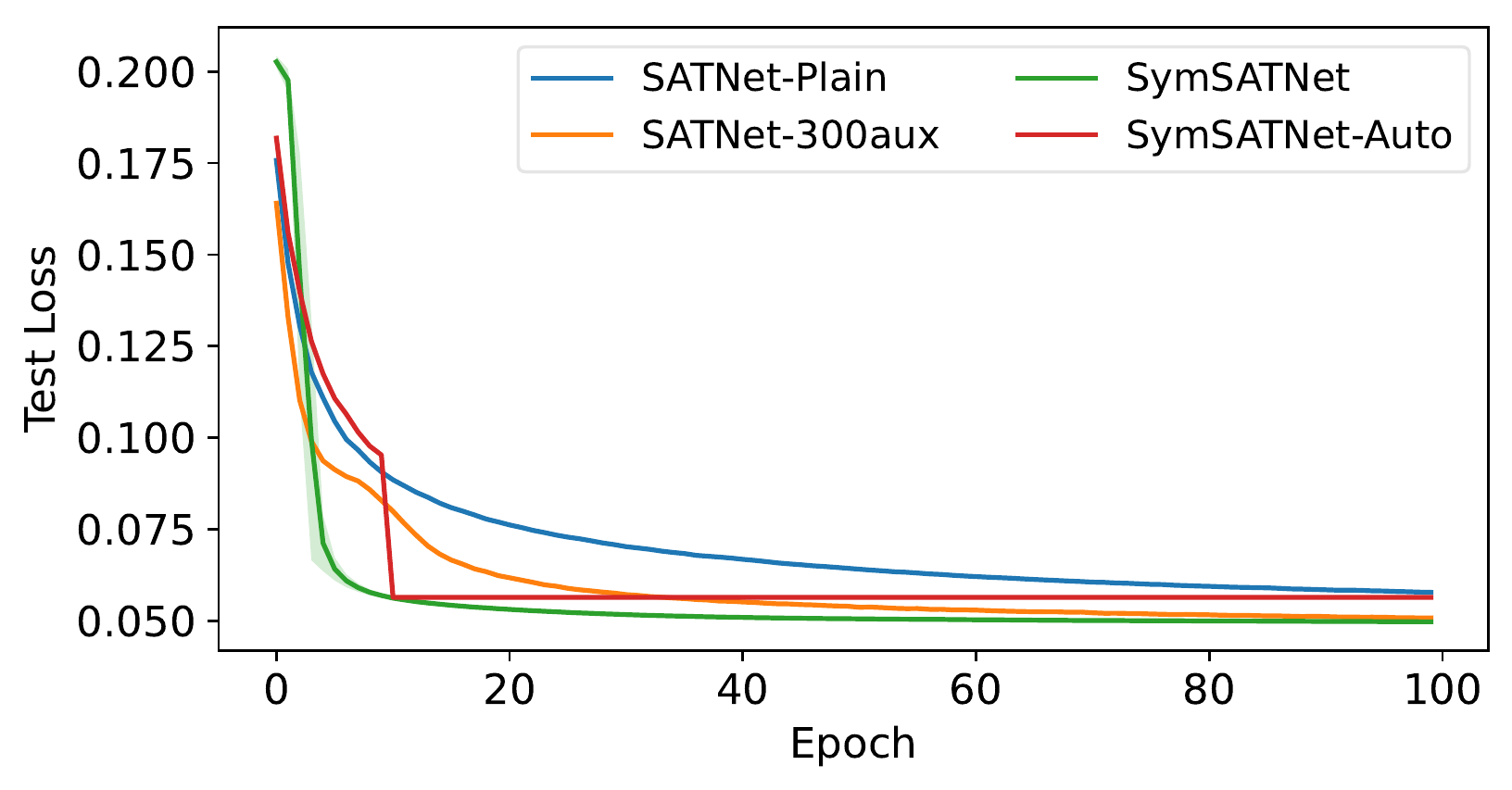}
        \subcaption{Test losses in Sudoku with 1 corruption}
        \label{fig:test-loss-sudoku-2}
    \end{subfigure}
    \begin{subfigure}{0.48\columnwidth}
        \includegraphics[width=0.95\linewidth]{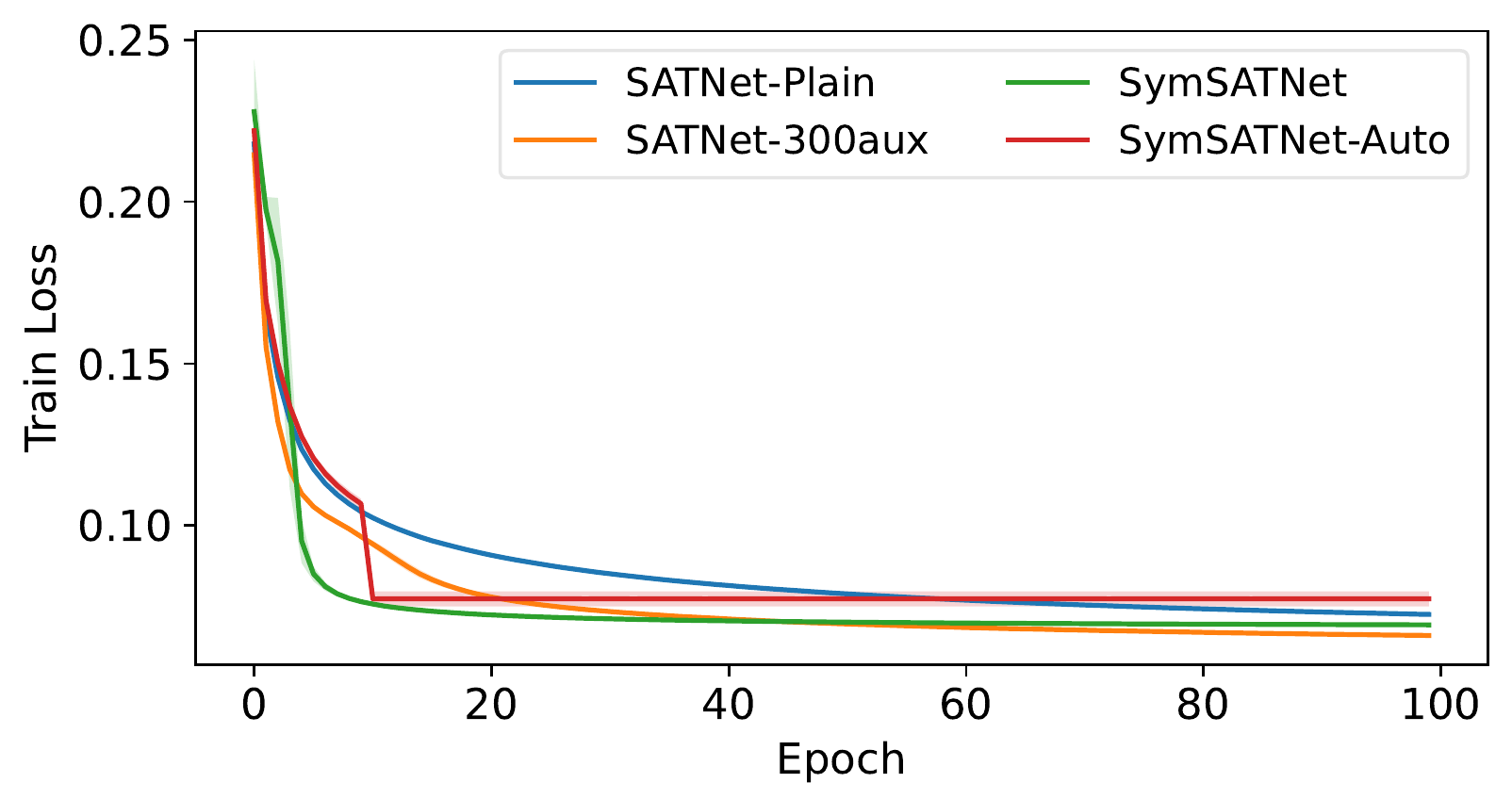}
        \subcaption{Training losses in Sudoku with 2 corruptions}
        \label{fig:train-loss-sudoku-3}
    \end{subfigure}
    \begin{subfigure}{0.48\columnwidth}
        \includegraphics[width=0.95\linewidth]{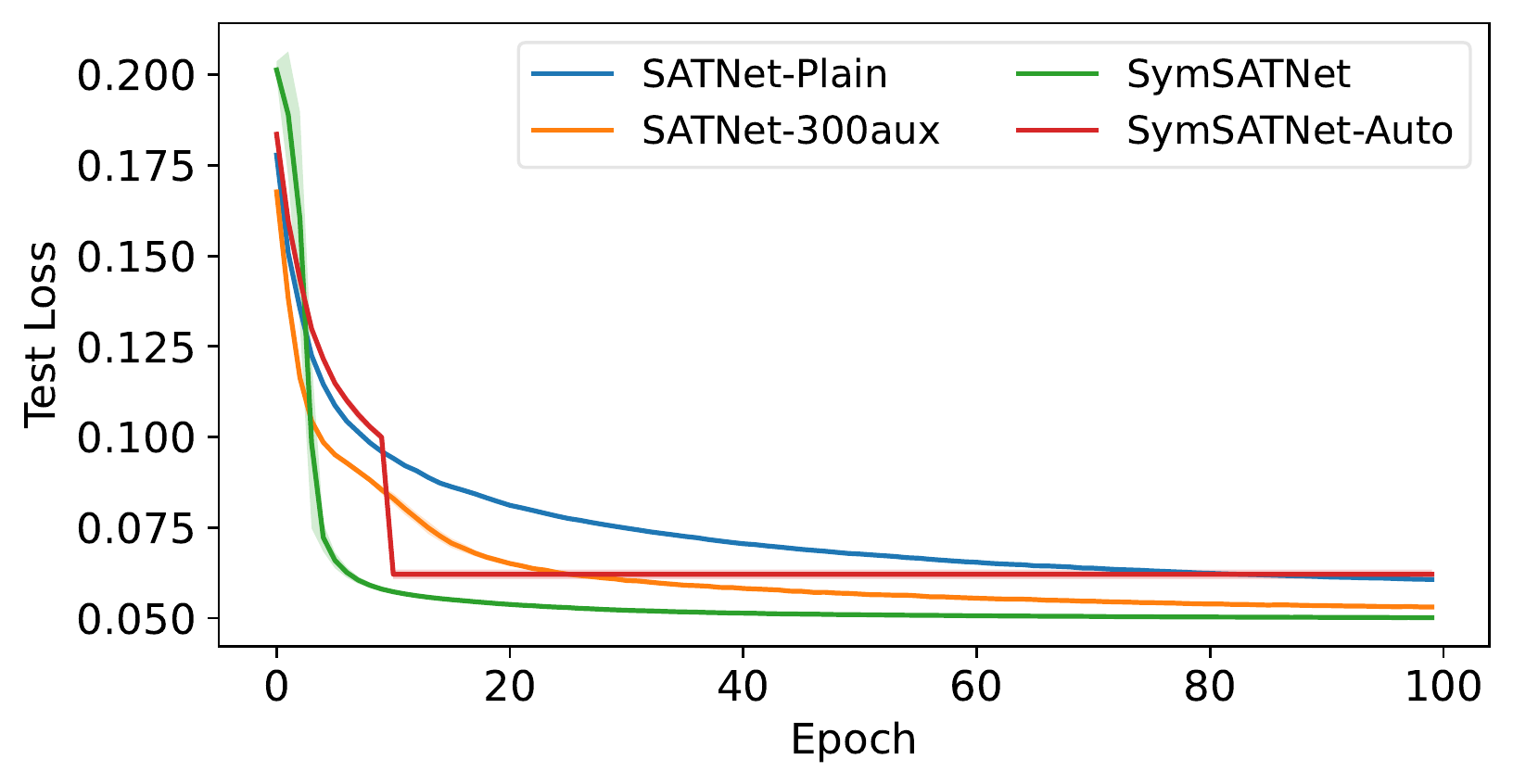}
        \subcaption{Test losses in Sudoku with 2 corruptions}
        \label{fig:test-loss-sudoku-3}
    \end{subfigure}
    \begin{subfigure}{0.48\columnwidth}
        \includegraphics[width=0.95\linewidth]{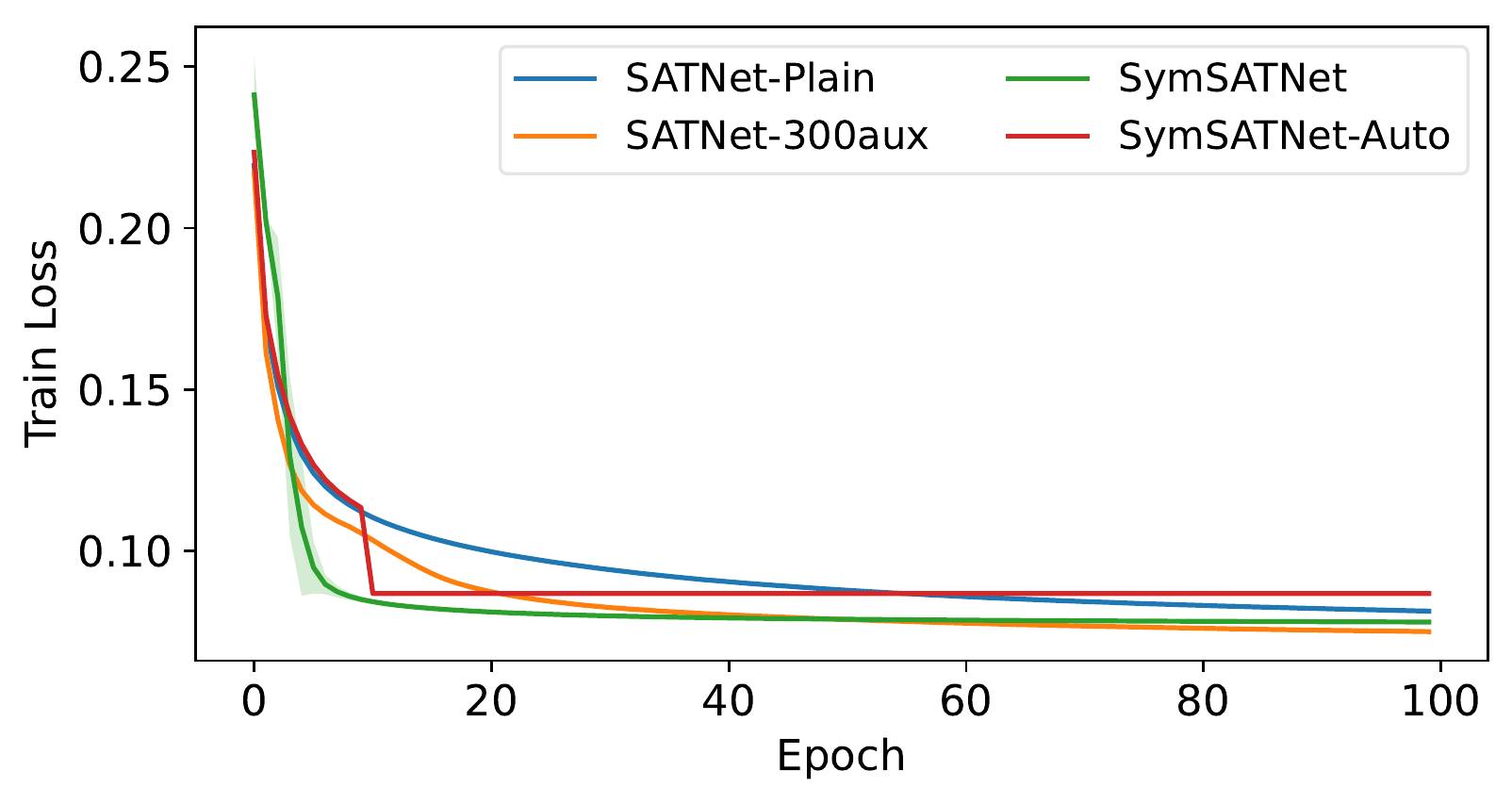}
        \subcaption{Training losses in Sudoku with 3 corruptions}
        \label{fig:train-loss-sudoku-4}
    \end{subfigure}
    \hskip 1.1em
    \begin{subfigure}{0.48\columnwidth}
        \includegraphics[width=0.95\linewidth]{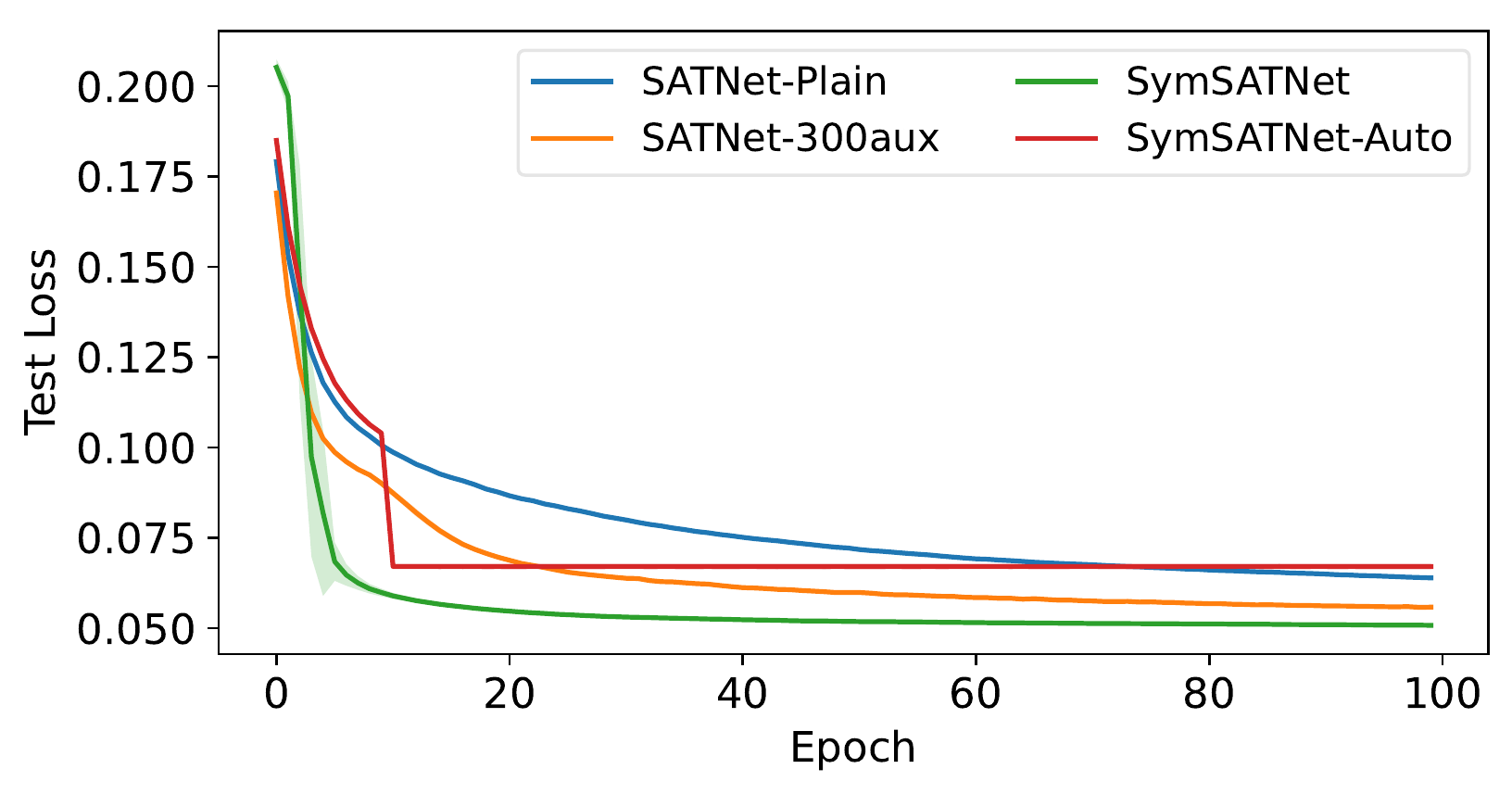}
        \subcaption{Test losses in Sudoku with 3 corruptions}
        \label{fig:test-loss-sudoku-4}
    \end{subfigure}
    \caption{Training and test loss curves for Sudoku. Each loss is averaged over 10 trials and each $95\%$ confidence interval is included.}
    \label{fig:losses-sudoku}
\end{figure}

\begin{figure}
    \begin{subfigure}{0.48\columnwidth}
        \includegraphics[width=0.95\linewidth]{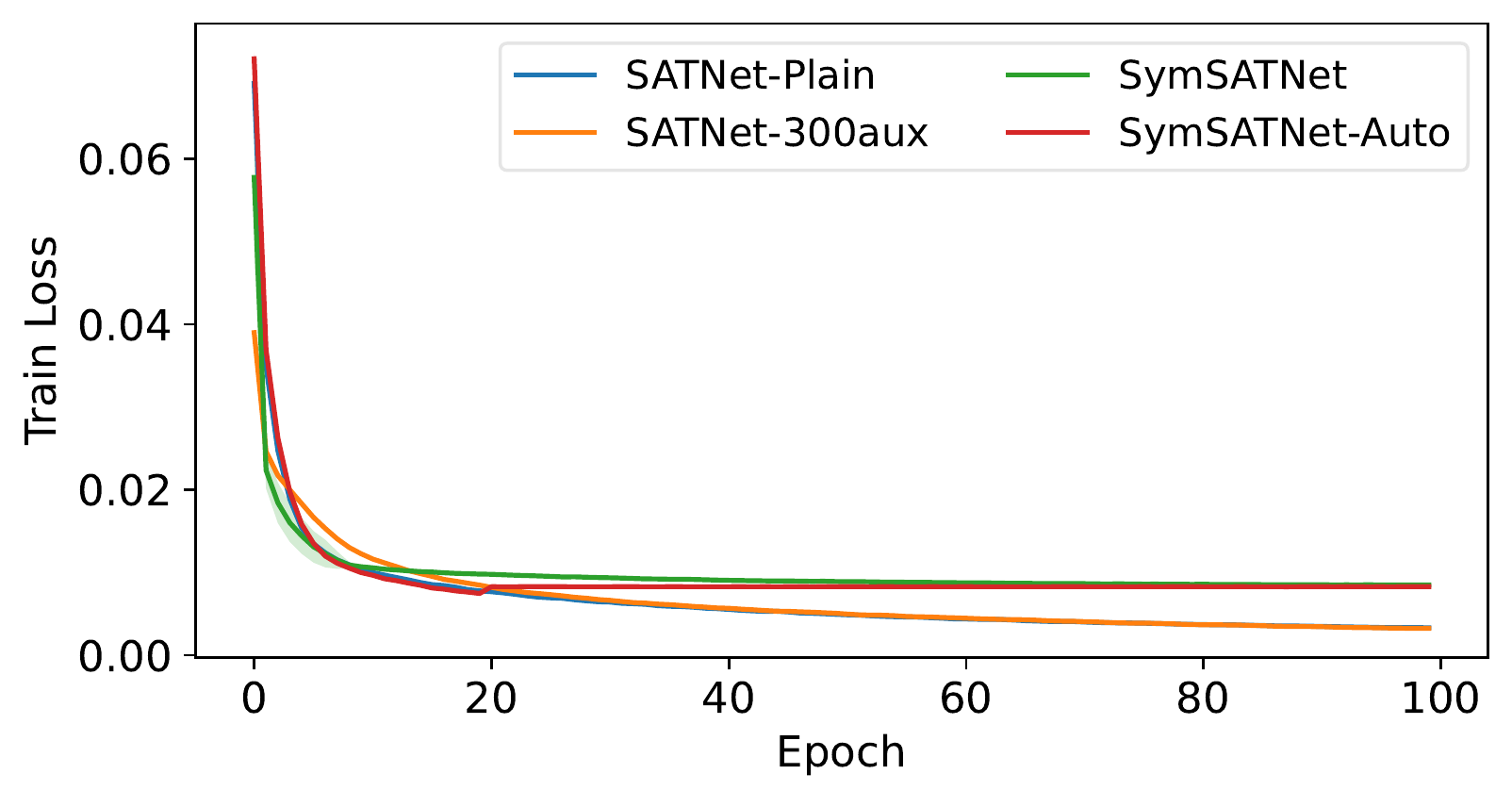}
        \subcaption{Training losses in Rubik's cube with 0 corruption}
        \label{fig:train-loss-cube-1}
    \end{subfigure}
    \begin{subfigure}{0.48\columnwidth}
        \includegraphics[width=0.95\linewidth]{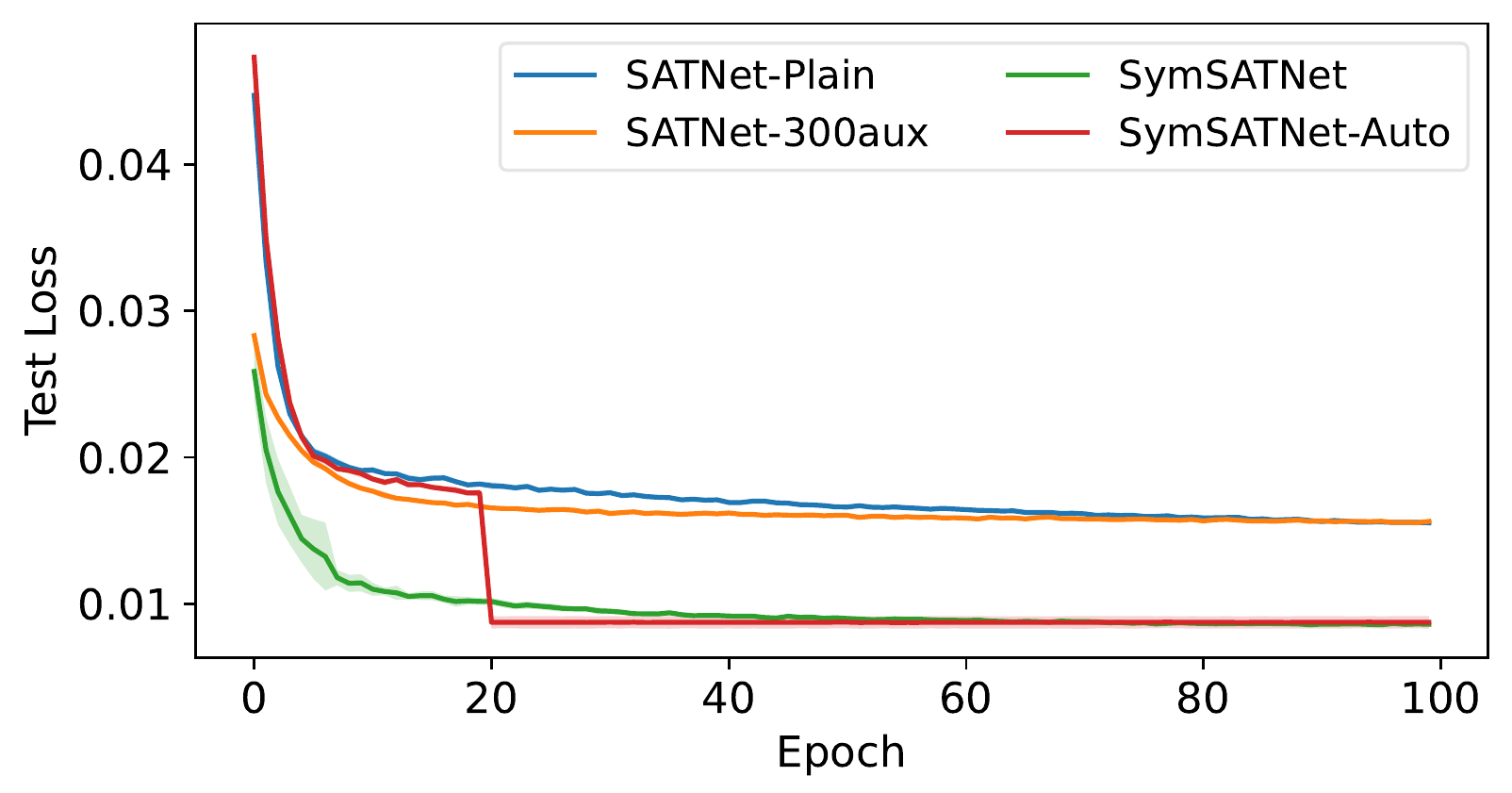}
        \subcaption{Test losses in Rubik's cube with 0 corruption}
        \label{fig:test-loss-cube-1}
    \end{subfigure}
    \begin{subfigure}{0.48\columnwidth}
        \includegraphics[width=0.95\linewidth]{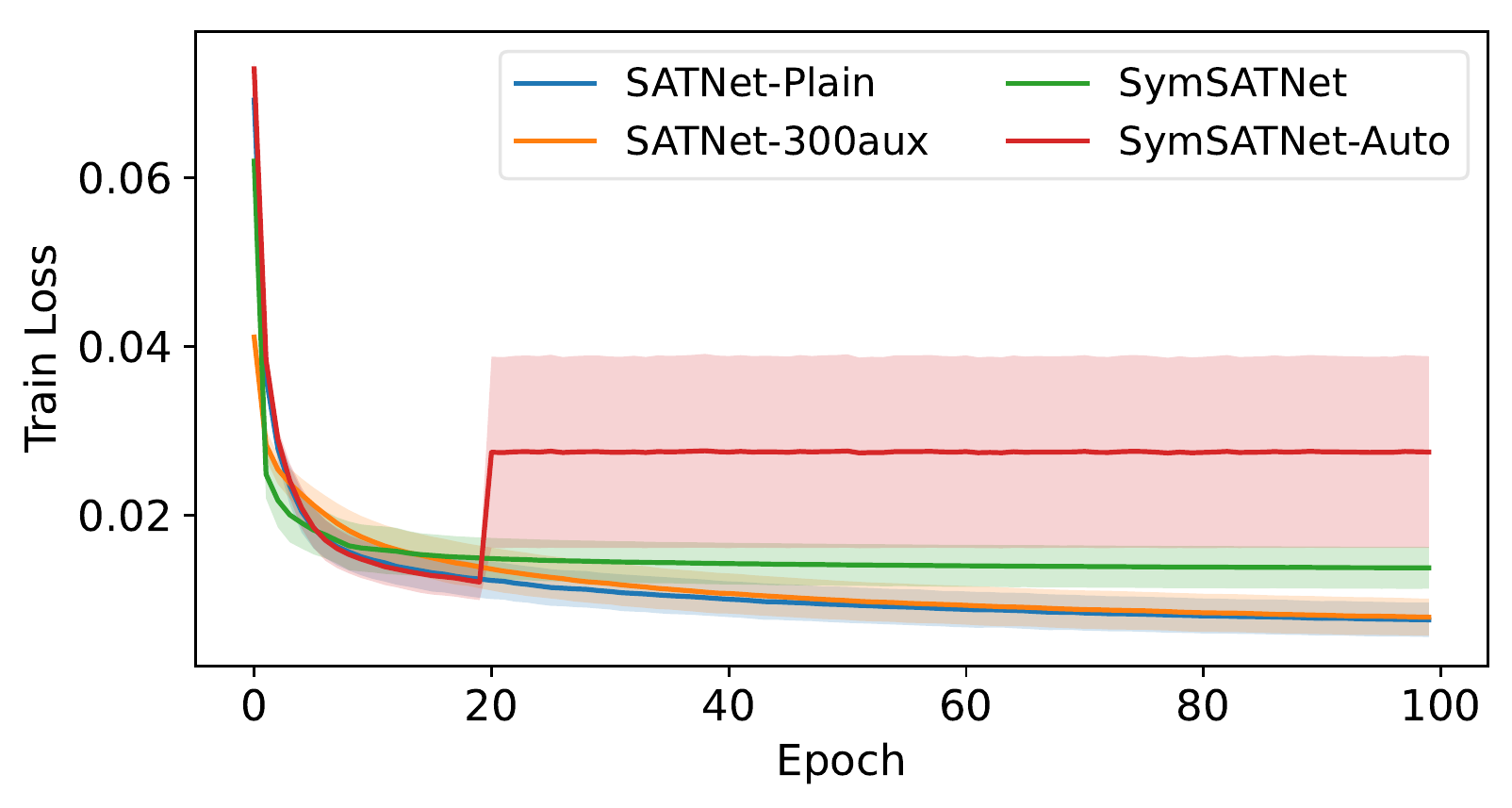}
        \subcaption{Training losses in Rubik's cube with 1 corruption}
        \label{fig:train-loss-cube-2}
    \end{subfigure}
    \begin{subfigure}{0.48\columnwidth}
        \includegraphics[width=0.95\linewidth]{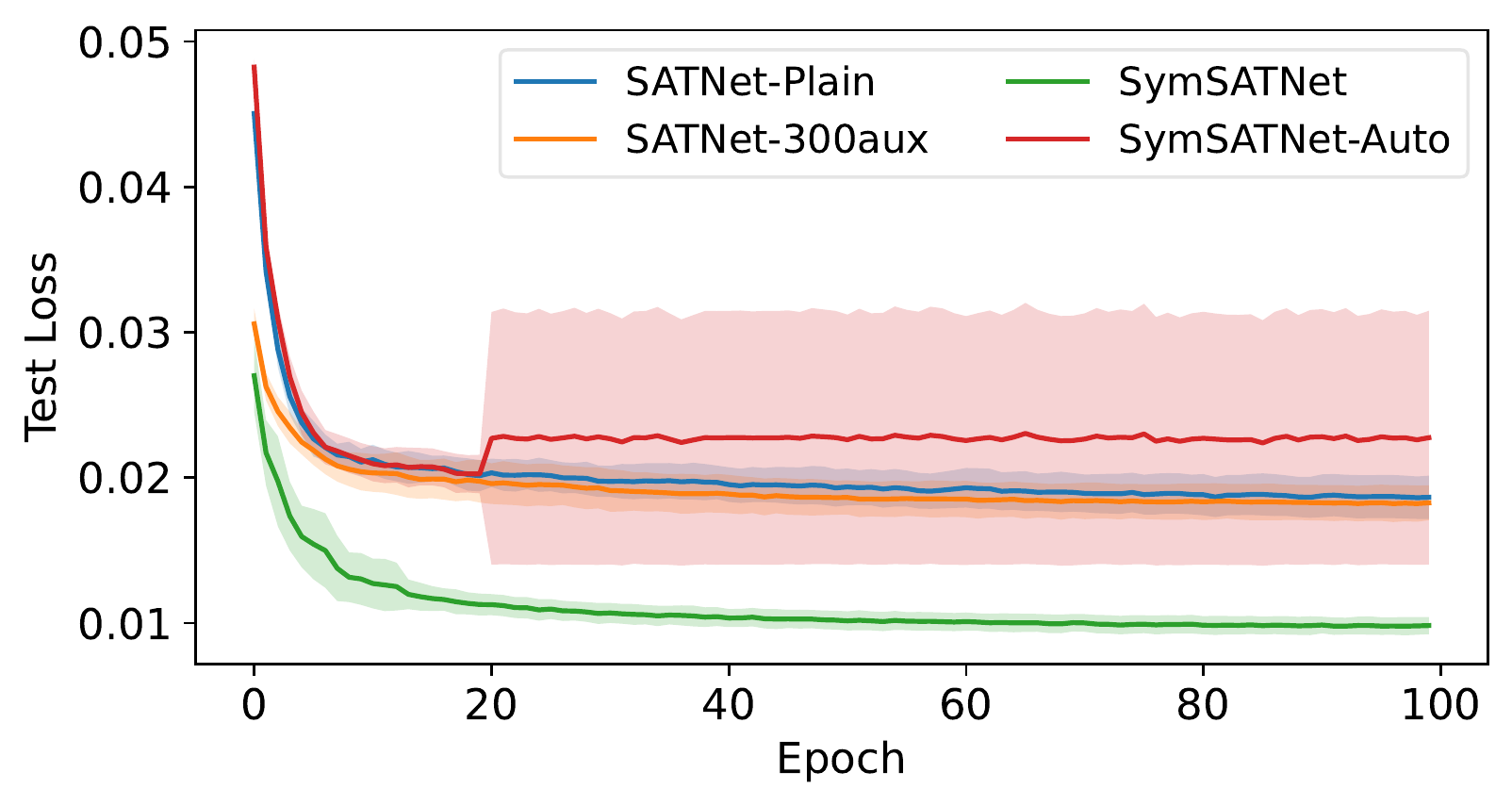}
        \subcaption{Test losses in Rubik's cube with 1 corruption}
        \label{fig:test-loss-cube-2}
    \end{subfigure}
    \begin{subfigure}{0.48\columnwidth}
        \includegraphics[width=0.95\linewidth]{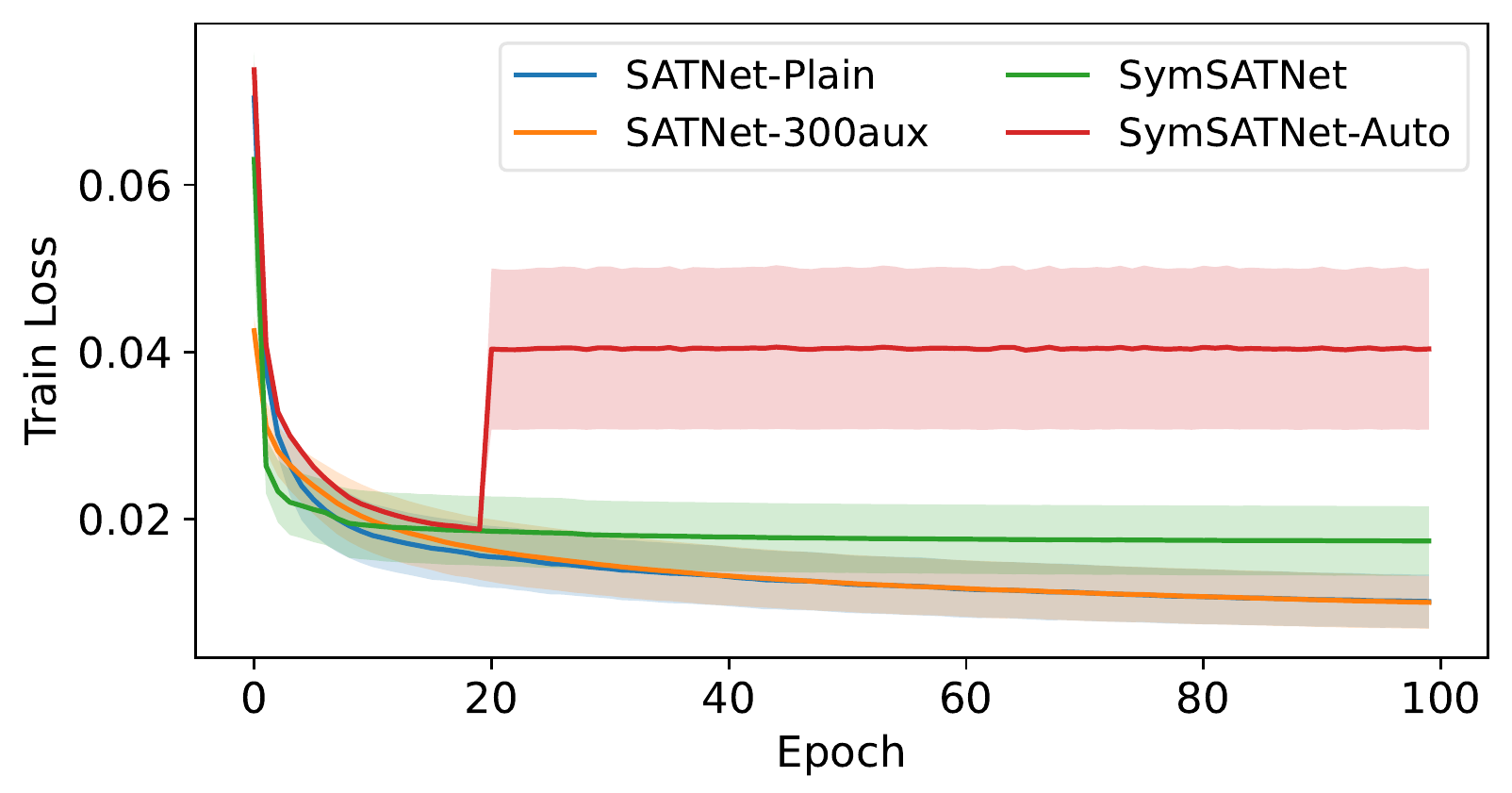}
        \subcaption{Training losses in Rubik's cube with 2 corruptions}
        \label{fig:train-loss-cube-3}
    \end{subfigure}
    \begin{subfigure}{0.48\columnwidth}
        \includegraphics[width=0.95\linewidth]{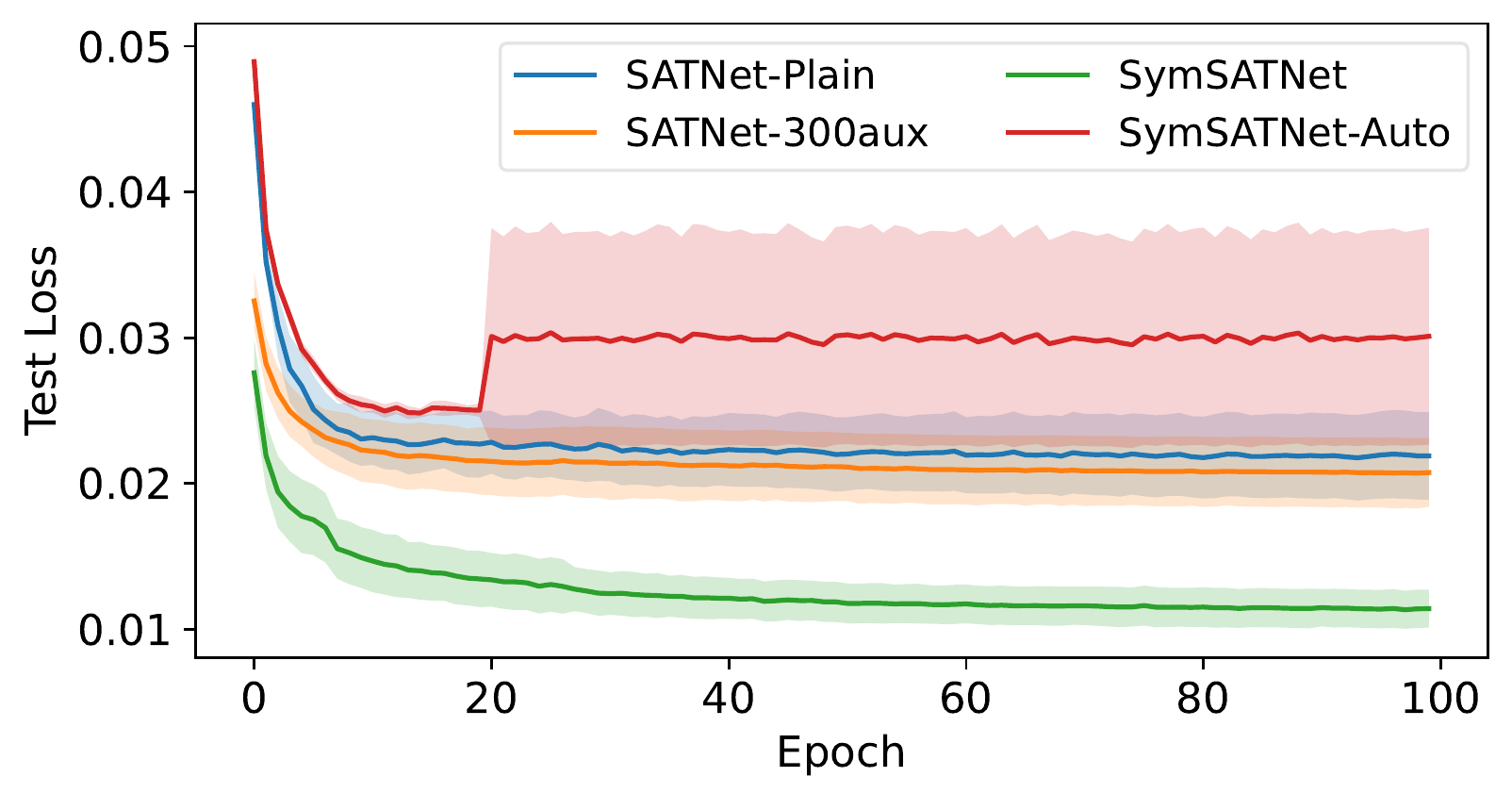}
        \subcaption{Test losses in Rubik's cube with 2 corruptions}
        \label{fig:test-loss-cube-3}
    \end{subfigure}
    \begin{subfigure}{0.48\columnwidth}
        \includegraphics[width=0.95\linewidth]{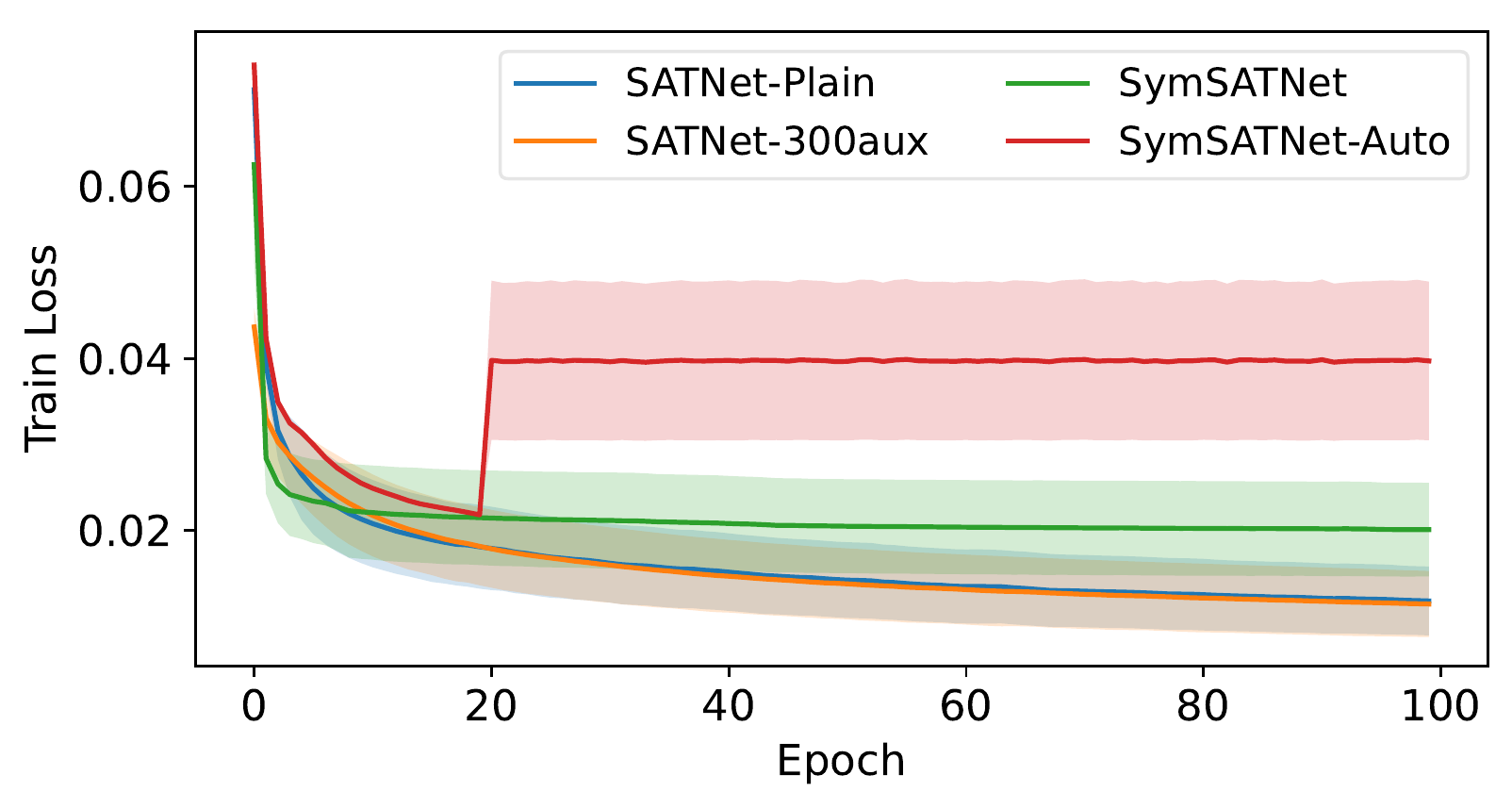}
        \subcaption{Training losses in Rubik's cube with 3 corruptions}
        \label{fig:train-loss-cube-4}
    \end{subfigure}
    \hskip 1.1em
    \begin{subfigure}{0.48\columnwidth}
        \includegraphics[width=0.95\linewidth]{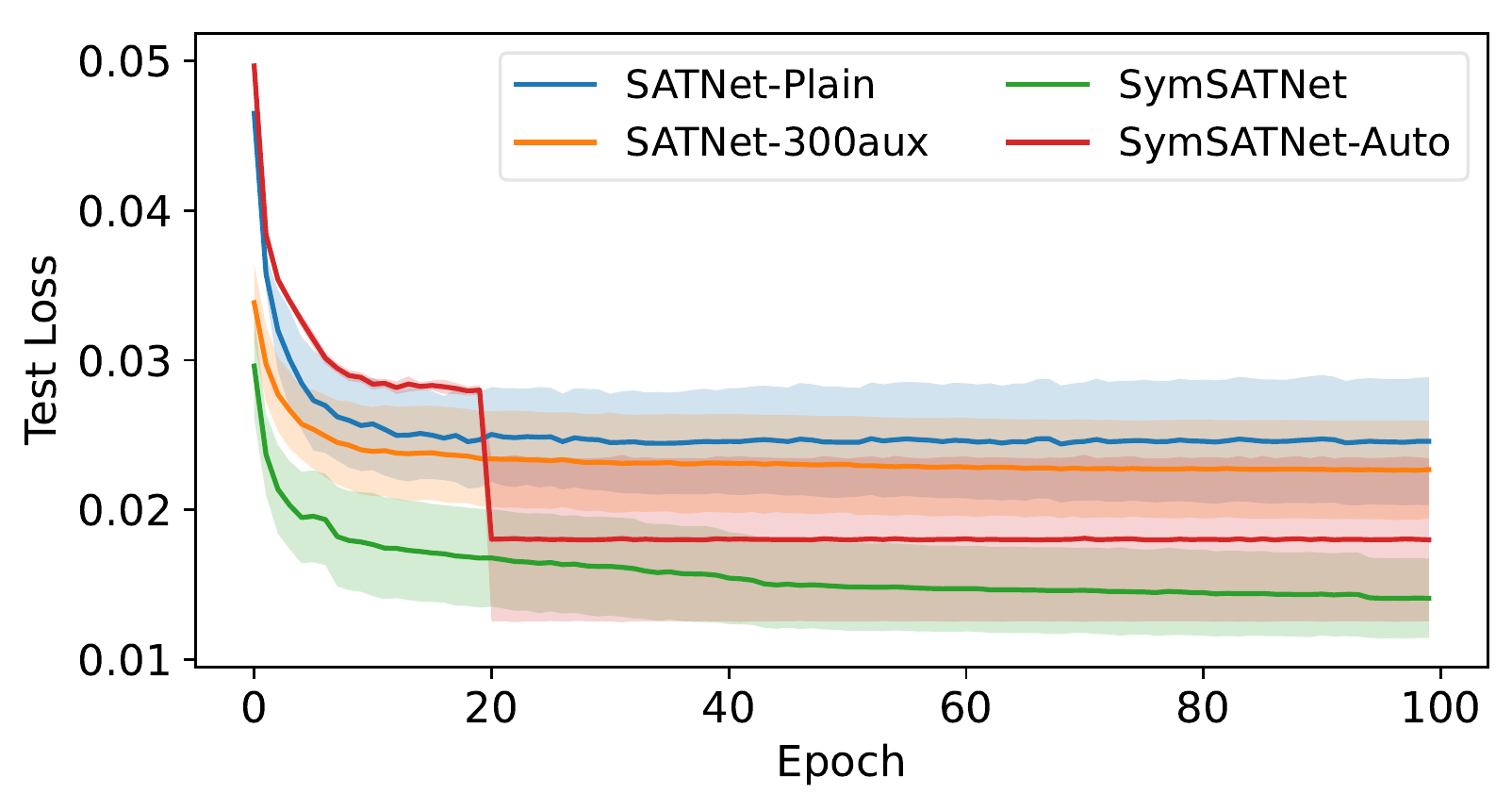}
        \subcaption{Test losses in Rubik's cube with 3 corruptions}
        \label{fig:test-loss-cube-4}
    \end{subfigure}
    \caption{Training and test loss curves for the Rubik's cube problem. Each loss is averaged over 10 trials and each $95\%$ confidence interval is included.}
    \label{fig:losses-cube}
\end{figure}

\end{document}